\theoremstyle{plain}
\newtheorem{theorem}{Theorem}[section]
\newtheorem*{theorem*}{Theorem}
\newtheorem*{"theorem"}{``Theorem''}
\newtheorem{corollary}[theorem]{Corollary}
\newtheorem{lemma}[theorem]{Lemma}
\theoremstyle{definition}
\newtheorem{definition}[theorem]{Definition}
\theoremstyle{remark}
\newtheorem{remark}[theorem]{Remark}
\newtheorem{example}[theorem]{Example}
\numberwithin{equation}{section}
\newcommand{\N}{\mathbb N} 
\newcommand{\Z}{\mathbb Z} 
\newcommand{\R}{\mathbb R} 
\newcommand{\dist}{{\rm dist}}
\newcommand{\spt}{{\mathrm{spt}}}
\renewcommand{\H}{{\mathcal H}}
\newcommand{\E}{{\mathbb E}}
\newcommand{\W}{{\mathcal W}}
\newcommand{\M}{{\mathcal M}}
\newcommand{\Q}{{\mathcal Q}}
\renewcommand{\L}{{\mathcal L}}
\newcommand{\F}{{\mathcal F}}
\newcommand{\G}{{\mathcal G}}
\newcommand{\B}{\mathcal{B}}
\renewcommand{\P}{\mathbb{P}}
\newcommand{\Rad}{\operatorname{Rad}}
\newcommand{\Ra} {\Rightarrow}
\newcommand{\wto}{\rightharpoonup}
\newcommand{\embeds}{\xhookrightarrow{\quad}}
\renewcommand{\d}{\mathrm{d}}
\newcommand{\dx}{\,\mathrm{d}x}
\newcommand{\ds}{\,\mathrm{d}s}
\newcommand{\dt}{\,\mathrm{d}t}
\newcommand{\bx}{\mathbf{x}}
\newcommand{\A}{\mathcal{A}}
\newcommand{\eps}{\varepsilon}
\newcommand{\average}{{\mathchoice {\kern1ex\vcenter{\hrule height.4pt
width 6pt depth0pt} \kern-9.7pt} {\kern1ex\vcenter{\hrule
height.4pt width 4.3pt depth0pt} \kern-7pt} {} {} }}
\newcommand{\Risk}{\mathcal{R}}
\newcommand\showlabel{\addtocounter{equation}{1}\tag{\theequation}}
\renewcommand{\paragraph}[1]{\textbf{#1.}}
\begin{document}

\title[Banach spaces of wide multi-layer neural networks]{On the Banach spaces associated with multi-layer ReLU networks
\\\footnotesize{Function representation, approximation theory and gradient descent dynamics}}

\author{Weinan E}
\address{Weinan E\\
Department of Mathematics and Program in Applied and Computational Mathematics\\ Princeton University\\ Princeton, NJ 08544\\ USA}
\email{weinan@math.princeton.edu}

\author{Stephan Wojtowytsch}
\address{Stephan Wojtowytsch\\
Program in Applied and Computational Mathematics\\
Princeton University\\
Princeton, NJ 08544\\
USA
}
\email{stephanw@princeton.edu}

\date{\today}

\subjclass[2020]{
68T07, 
46E15, 
26B35, 
35Q68, 
34A12, 
26B40
}
\keywords{Barron space, multi-layer space, deep neural network, representations of functions, machine learning, infinitely wide network, ReLU activation, Banach space, path-norm, 
continuous gradient descent dynamics,  index representation}

\begin{abstract}
We develop Banach spaces for ReLU neural networks of finite depth $L$ and infinite width. The spaces contain all finite fully connected $L$-layer networks and their $L^2$-limiting objects under bounds on the natural path-norm. 
Under this norm,  the unit ball in the space for $L$-layer networks has low Rademacher complexity and thus favorable generalization properties.
Functions in these spaces can be approximated by multi-layer neural networks with dimension-independent convergence rates.

The key to this work is a new way of representing functions in some form of expectations, motivated by multi-layer neural networks.
This representation allows us to define a new class of continuous models for machine learning.
We show that the gradient flow defined this way is the natural continuous analog of the gradient descent
dynamics for the associated multi-layer neural networks.
We show  that the path-norm increases at most polynomially under this continuous gradient flow dynamics.
\end{abstract}

\maketitle

\setcounter{tocdepth}{1}
\tableofcontents

\section{Introduction}


It is well-known that neural networks can approximate any continuous function on a compact set arbitrarily well in the uniform topology as the number of trainable parameters increase \cite{cybenko1989approximation, hornik1991approximation, leshno1993multilayer}.
However,  the number and magnitude of the parameters required may make this result unfeasible for practical applications. 
Indeed it has been shown to be the case when two-layer neural networks are used to approximate general Lipschitz continuous functions \cite{approximationarticle}.
It is therefore necessary to ask which functions can be approximated {\em well} by neural networks,
by which we mean that as the number of parameters goes to infinity, the convergence rate should not suffer from the curse of dimensionality.

In classical approximation theory, the role of neural networks was taken by (piecewise) polynomials or Fourier series and the natural 
function spaces were H\"older spaces, (fractional) Sobolev spaces, or generalized versions thereof \cite{lorentz1966approximation}. In the high-dimensional theories characteristic for machine learning, these spaces appear inappropriate (for example, approximation results of the kind
discussed above do not hold for these spaces) and other concepts have emerged, such as reproducing kernel Hilbert spaces for random feature models
 \cite{rahimi2008uniform}, Barron spaces for two-layer neural networks \cite{weinan2019lei, bach2017breaking,barron_new,E:2019aa, approximationarticle, E:2018ab,klusowski2016risk},  and the flow-induced space for residual neural network models \cite{weinan2019lei}.
 


In this article, we extend these ideas to networks with several hidden (infinitely wide) layers. 
The key is to find how functions in these spaces should be represented and what the right norm should be.
Our most important results are:

\begin{enumerate}
\item There exists a class of Banach spaces associated to multi-layer neural networks which has low Rademacher complexity 
(i.e.\ multi-layer functions in these spaces are easily learnable). 
\item The neural tree spaces introduced here  are the appropriate function spaces for the corresponding multi-layer neural networks in terms of direct and inverse approximation theorems.
\item The gradient flow dynamics  is well defined in a much simpler subspace of the corresponding neural tree space.
Functions in this space admit an intuitive representation in terms of compositions of expectations.
 The path norm increases at most polynomially in time under the natural gradient flow  dynamics.

\end{enumerate}

These results justify our choice of function representation and the norm.

Neural networks are parametrized by weight matrices which share indices only between adjacent layers. To understand the approximation power of neural networks, we rearrange the index structure of weights in a tree-like fashion and show that the approximation problem under path-norm bounds remains unchanged. This approach makes the problem more linear and easier to handle from the approximation perspective, but is unsuitable when describing training dynamics. To address this discrepancy, we introduce a subspace of the natural function spaces for very wide multi-layer neural networks (or neural trees) which automatically incorporates the structure of neural networks. For this subspace, we investigate 
the natural  training dynamics and demonstrate that the path-norm increases at most polynomially during training.

Although the function representation and function spaces are motivated by developing an approximation theory for multi-layer neural network models,
once we have them,  we can use them as our starting point for developing alternative machine learning models and algorithms.
In particular, we can extend  the program proposed in \cite{E:2019aa} on continuous formulations of machine learning 
to function representations developed here.
 As an example, we show that gradient descent training  for multi-layer neural networks can be recovered as the discretization of a 
 natural continuous gradient flow. 

The article is organized as follows. In the remainder of the introduction, we discuss the philosophy behind this study and the continuous approach to machine learning. In Section \ref{section generalized Barron}, we motivate the `neural tree' approach, introduce an abstract class of function spaces and study their first properties. A special instance of this class tailored to multi-layer networks is studied in greater detail in Section \ref{section multi-layer}. A class of function families with an explicit network structure is introduced in Section \ref{section indexed}. While Sections \ref{section generalized Barron} and \ref{section multi-layer} are written from the approximation perspective, Section \ref{section mean-field} is devoted to the study of gradient flow optimization of multi-layer networks and its relation to the function spaces we introduce. 
We conclude the article with a brief discussion of our results and some open questions in Section \ref{section conclusion}. Technical results from measure theory which are needed in the article are gathered in the appendix.

\subsection{Conventions and notation}
Let $K\subseteq \R^d$ be a compact set. Then we denote by $C^0(K)$ the space of continuous functions on $K$ and by $C^{0,\alpha}(K)$ the space of $\alpha$-H\"older continuous functions for $\alpha\in(0,1]$. In particular $C^{0,1}$ is the space of Lipschitz-continuous functions. The norms are denoted as
\[
\|f\|_{C^0(K)} = \sup_{x\in K} |f(x)|, \quad [f]_{C^{0,\alpha}(K)} = \sup_{x, y\in K, x\neq y} \frac{|f(x)-f(y)|}{|x-y|^\alpha}, \quad \|f\|_{C^{0,\alpha}} = \|f\|_{C^0} + [f]_{C^{0,\alpha}}.
\]
Since all norms on $\R^d$ are equivalent, the space of H\"older- or Lipschitz-continuous functions does not depend on the choice of norm on $\R^d$. The H\"older constant $[\cdot]_{C^{0,\alpha}}$ however does depend on it, and using different $\ell^p$-norms leads to a dimension-dependent factor. In this article, we consider always consider $\R^d$ equipped with the $\ell^\infty$-norm. 

Let $X$ be a Banach space. Then we denote by $B^X$ the closed unit ball in $X$. Furthermore, a review of notations, terminologies and results relating to measure theory can be found in the appendix.

Frequently and without comment, we identify $x\in \R^d$ with $(x,1)\in \R^{d+1}$. This allows us to simplify notation and treat affine maps as linear. In particular, for $x\in \R^d$ and $w\in \R^{d+1}$ we simply write $w^Tx= \sum_{i=1}^d w_i x_i + w_{d+1}$.

\section{Generalized Barron spaces}\label{section generalized Barron}

We begin by reviewing multi-layer neural networks.

\subsection{Neural networks and neural trees}\label{section network vs tree}

A {\em fully connected L-layer neural network} is a function of the type
\begin{equation}\label{eq nn intro}
f(x) = \sum_{i_L=1}^{m_L}a^L_{i_L}\sigma\left(\sum_{i_{L-1}=1}^{m_{L-1}} a^{L-1}_{i_Li_{L-1}}\sigma\left(\sum_{i_{L-2}} \dots \sigma\left(\sum_{i_1=1}^{m_1}a_{i_2i_1}^1\,\sigma\left(\sum_{i_0=1}^{d+1} a^0_{i_1i_0}\,x_{i_0}\right)\right)\right)\right)
\end{equation}
where the parameters $a_{ij}^\ell$ are referred to as the {\em weights} of the neural network, $m_\ell$ is the {\em width} of the $\ell$-th layer, and $\sigma:\R\to\R$ is a non-polynomial activation function. For the purposes of this article, we take $\sigma$ to be the {\em rectifiable linear unit} $\sigma(z) = \operatorname{ReLU}(z) = \max\{z,0\}$.

Deep neural networks are complicated functions of both their input $x$ and their weights, where the weights of one layers only share an index with neighbouring layers, leading to parameter reuse. For simplicity, consider a network with two hidden layers
\[
f(x) = \sum_{i_2=1}^{m_2} a^2_{i_2}\sigma\left(\sum_{i_1=1}^{m_1}a^1_{i_2i_1}\sigma\left(\sum_{i_0=1}^{d+1} a^0_{i_1i_0} x_{i_0}\right)\right)
\]
and note that $f$ can also be expressed as
\[
f(x) = \sum_{i_2=1}^{m_2} a^2_{i_2}\sigma\left(\sum_{i_1=1}^{m_1}a^1_{i_2i_1}\sigma\left(\sum_{i_0=1}^{d+1} b^0_{i_2i_1i_0} x_{i_0}\right)\right)
\]
with $b^0_{i_2i_1i_0} \equiv a^0_{i_1i_0}$. In this way, an index in the outermost layer gets its own set of parameters for deeper layers, eliminating parameter sharing. The function parameters are arranged in a tree-like structure rather than a network with many cross-connections. On the other hand, a function of the form 
\[
f(x) = \sum_{i_2=1}^{m_2} a^2_{i_2}\sigma\left(\sum_{i_1=1}^{m_1}a^1_{i_2i_1}\sigma\left(\sum_{i_0=1}^{d+1} a^0_{i_2i_1i_0} x_{i_0}\right)\right)
\]
can equivalently be expressed as 
\[
f(x) = \sum_{i_2=1}^{m_2} a^2_{i_2}\sigma\left(\sum_{j_1=1}^{m_1m_2}b^1_{i_2j_1}\sigma\left(\sum_{i_0=1}^{d+1} b^0_{j_1i_0} x_{i_0}\right)\right)
\]
with 
\[
b^1_{i_2j_1} = \begin{cases} a^1_{i_2,\, j_1- (i_2-1)m_1} &\text{if }(i_2-1)m_1 < j_1 \leq i_2 m_1\\
	0 &\text{else}\end{cases},
	\qquad
b^0_{j_1i_0} = a_{\lfloor j_1/ m_1\rfloor+1, j_1 - \lfloor j_1/ m_1\rfloor, i_0}.
\]
The cost of rearranging a three-dimensional index set into a two-dimensional one is listing a number of zero-elements explicitly in the preceding layer instead of implicitly. Conversely, if we rearrange a two-dimensional index set into a three-dimensional one, we need to repeat the same weight multiple times. For deeper trees, the index sets become even higher-dimensional, and the re-arrangement introduces even more trivial branches or redundancies. Nevertheless, we note that the space of finite neural networks of depth $L$
\begin{align*}
\F_\infty &:= \Bigg\{\sum_{i_L=1}^{\infty}a^L_{i_L}\sigma\left(\sum_{i_{L-1}=1}^{\infty} a^{L-1}_{i_Li_{L-1}}\sigma\left(\sum_{i_{L-2}} \dots \sigma\left(\sum_{i_1=1}^{\infty}a_{i_2i_1}^1\,\sigma\left(\sum_{i_0=1}^{d+1} a^0_{i_1i_0}\,x_{i_0}\right)\right)\right)\right)\\
	&\hspace{6.5cm}\Bigg| \: a^l_{ij} = 0 \quad\text{ for all but finitely many }i,j,l\Bigg\}
\end{align*}
and the space of finite neural trees of depth $L$
\begin{align*}
\widetilde \F_\infty &:= \Bigg\{\sum_{i_L=1}^{\infty}a^L_{i_L}\sigma\left(\sum_{i_{L-1}=1}^{\infty} a^{L-1}_{i_Li_{L-1}}\sigma\left(\sum_{i_{L-2}} \dots \sigma\left(\sum_{i_1=1}^{\infty}a_{i_L\dots i_2i_1}^1\,\sigma\left(\sum_{i_0=1}^{d+1} a^0_{i_L\dots i_1i_0}\,x_{i_0}\right)\right)\right)\right)\\
	&\hspace{6cm}\:\Bigg|\: a^l_{i_L\dots i_k} = 0 \quad\text{ for all but finitely many }l, i_1,\dots, i_L\Bigg\}
\end{align*}
are identical.

\begin{remark}
We note that this perspective is only admissible concerning approximation theory. For gradient flow-based training algorithms, it makes a huge difference
\begin{itemize}
\item whether parameters are reused or not,
\item which set of weights that induces a certain function is chosen, and
\item how the magnitude of the weights is distributed across the layers (using the invariance $\sigma(z) = \lambda^{-1} \sigma ( \lambda z)$ for $\lambda>0$).
\end{itemize}
A perspective more adapted to the training of neural networks is presented in Section \ref{section mean-field}.
\end{remark}

For given weights $a_{ij}^l$ or $a^l_{i_L\dots i_l}$, we consider the {\em path-norm proxy}, which is defined as
\[
\|f\|_{pnp} = \sum_{i_L} \dots\sum_{i_0} \big| a^L_{i_L}\dots a^0_{i_1i_0}\big|\qquad \text{or}\qquad \|f\|_{pnp} = \sum_{i_L} \dots\sum_{i_0} \big| a^L_{i_L}\dots a^0_{i_L\dots i_0}\big|
\]
respectively. Knowing the weights, the sum is easy to compute and it naturally controls the Lipschitz norm of the function $f$. 

When we train a function $f$ to approximate values $y_i = f^*(x_i)$ at data points $x_i$, the path-norm proxy controls the {\em generalization error},
as we will show below. If the path-norm proxy of $f$ is very large, the function values $f(x_i)$ heavily depend on cancellations between the partial sums with positive and negative weights in the outermost layer. In the extreme case, these partial sums may be several orders of magnitude larger than $f(x_i)$. In that situation, the function values $f^*(x)$ and $f(x)$ may be entirely different for unseen data points $x$, even if they are close on the training sample $\{x_i\}_{i=1}^N$. On the other hand, we will show below that functions with low path-norm proxy generalize well. Thus controlling the path-norm proxy effectively means  controlling the generalization error,
either directly or indirectly. We will make this more precise below.

While the path-norm proxy is easy to compute from the weights of a network, it is a quantity related to the parameterization of a function, not the function itself. The map from the weights $a^l_{ij}$ to the {\em realization} $f$ of the network  as in \eqref{eq nn intro} is highly non-injective. The {\em path-norm} of a function $f$ is the infimum of the path-norm proxies over all sets of weights of an $L$-layer neural network which have the realization $f$. 

\subsection{Definition of Generalized Barron Spaces}

Let $\sigma$ be the rectified linear unit, i.e.\ $\sigma(z) = \max\{z,0\}$. ReLU is a popular activation function for neural networks and has two useful properties for us: It is positively one-homogeneous and Lipschitz continuous with Lipschitz constant $1$.

Let $K\subseteq \R^d$ be a compact set and $X$ be a Banach space such that

\begin{enumerate}
\item $X$ embeds continuously into the space $C^{0,1}(K)$ of Lipschitz-functions on $K$ and
\item the closed unit ball $B^X$ in $X$ is closed in the topology of $C^{0}(K)$. 
\end{enumerate}

Recall the following corollary to the Arzel\`a-Ascoli theorem.

\begin{lemma}\cite[Satz 2.42]{dobrowolski2010angewandte}\label{lemma compact embedding holder}
Let $u_n:K\to \R$ be a sequence of functions such that $\|u_n\|_{C^{0,1}(K)}\leq 1$. Then there exists $u\in C^{0,1}(K)$ and a subsequence $u_{n_k}$ such that $u_{n_k}\to u$ strongly in $C^{0,\alpha}(K)$ for all $\alpha<1$ and 
\[
\|u\|_{C^{0,1}(K)} \leq \liminf_{k\to\infty} \|u_{n_k}\|_{C^{0,1}(K)}\leq 1.
\]
\end{lemma}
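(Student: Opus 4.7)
The plan is a standard Arzelà--Ascoli argument followed by an interpolation trick to upgrade uniform convergence to H\"older convergence for every $\alpha<1$.

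First I would use compactness of $K$ together with the uniform Lipschitz bound to extract a subsequence converging uniformly. Since $\|u_n\|_{C^{0,1}(K)}\le 1$ gives both a uniform bound $\|u_n\|_{C^0}\le 1$ and equicontinuity (every $u_n$ is $1$-Lipschitz), Arzel\`a--Ascoli yields $u\in C^0(K)$ and a subsequence $u_{n_k}\to u$ uniformly on $K$. Passing to the pointwise limit in the inequality $|u_{n_k}(x)-u_{n_k}(y)|\le [u_{n_k}]_{C^{0,1}}\,|x-y|$ for fixed $x\neq y$ shows
\[
\frac{|u(x)-u(y)|}{|x-y|}\le \liminf_{k\to\infty}[u_{n_k}]_{C^{0,1}(K)},
\]
so $u\in C^{0,1}(K)$ with the claimed lower semicontinuity bound on the norm.

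Next I would upgrade the uniform convergence to $C^{0,\alpha}$ convergence for each $\alpha\in(0,1)$ by interpolating between the $C^0$ and $C^{0,1}$ controls on the differences $v_k:=u_{n_k}-u$. For every pair $x\neq y$ one has the two elementary bounds $|v_k(x)-v_k(y)|\le 2\|v_k\|_{C^0}$ and $|v_k(x)-v_k(y)|\le [v_k]_{C^{0,1}}|x-y|\le 2|x-y|$. Writing
\[
|v_k(x)-v_k(y)|=|v_k(x)-v_k(y)|^{\alpha}\cdot|v_k(x)-v_k(y)|^{1-\alpha}
\]
and using the first bound on the second factor and the second bound on the first factor yields
\[
[v_k]_{C^{0,\alpha}(K)}\le 2^{\alpha}\,(2\|v_k\|_{C^0})^{1-\alpha}\xrightarrow{k\to\infty}0,
\]
which together with $\|v_k\|_{C^0}\to 0$ gives $u_{n_k}\to u$ in $C^{0,\alpha}(K)$.

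The argument is essentially routine; the only mildly delicate point is the interpolation estimate that converts a uniform $C^0$ decay into a $C^{0,\alpha}$ decay under a uniform Lipschitz bound, and the lower semicontinuity of the Lipschitz seminorm, both of which are standard. No obstruction beyond bookkeeping is expected.
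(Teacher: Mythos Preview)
Your argument is correct: the Arzel\`a--Ascoli step, the lower semicontinuity of the Lipschitz norm under pointwise limits, and the interpolation $[v]_{C^{0,\alpha}}\le C\,[v]_{C^{0,1}}^{\alpha}\|v\|_{C^0}^{1-\alpha}$ are all standard and correctly applied. The paper does not give its own proof of this lemma; it is stated as a known corollary of Arzel\`a--Ascoli with a citation to \cite[Satz 2.42]{dobrowolski2010angewandte}, so your write-up supplies exactly the details the paper omits.
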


Thus $B^X$ is pre-compact in the separable Banach space $C^0(K)$. Since $B^X$ is $C^0$-closed, it is compact, so in particular a Polish space. A brief review of measure theory in Polish spaces and related topics used throughout the article is given in Appendix \ref{appendix measure theory}.

Let $\mu$ be a finite signed measure on the Borel $\sigma$-algebra of $B^X$ (with respect to the $C^0$-norm). Then $\mu$ is a signed Radon measure. The vector-valued function
\[
B^X\to C^0(K),\qquad g\mapsto \sigma(g)
\]
is continuous and thus $\mu$-integrable in the sense of Bochner integrals. We define
\begin{align*}
f_{\mu} &= \int_{B^X} \sigma\big(g(\cdot)\big)\,\mu(\d g)\\\showlabel
\|f\|_{X, K} &= \inf\left\{ \|\mu\|_{\M(B^X)}\::\: \mu \in \M(B^X)\text{ s.t. }f = f_{\mu} \text{ on }K\right\}\\
\B_{X, K} &= \big\{f\in C^{0}(K) : \|f\|_{X, K} < \infty\big\}.
\end{align*}
Here $\M(B^X)$ denotes the space of (signed) Radon measures on $B^X$. The first integral can equivalently be considered as a Lebesgue integral pointwise for every $x\in K$ or as a Bochner integral. We will show below that $\B_{X,K}$ is a normed vector space of (Lipschitz-)continuous functions on $K$. We call $\B_{X,K}$ the generalized Barron space modelled on $X$. 

\begin{remark}
The construction of the function space $\B_{X,K}$ above resembles the approach to {\em Barron spaces} for two-layer networks \cite{bach2017breaking, barron_new, weinan2019lei, E:2018ab}. Note that Barron spaces are distinct from the class of functions considered by Barron in \cite{barron1993universal}, which is sometimes referred to as {\em Barron class}. While Barron spaces are specifically designed for applications concerning neural networks, the Barron class is defined in terms of spectral properties and a subset of Barron space for almost every activation function of practical importance.
\end{remark}

\begin{example}\label{example barron barron}
If $X$ is the space of affine functions from $\R^d$ to $\R$ (which is isomorphic to $\R^{d+1}$), the $\B_{X,K}$ is the usual Barron space for two-layer neural networks as described in \cite{E:2018ab, weinan2019lei, barron_new}. 
\end{example}

Due to Lemma \ref{lemma compact embedding holder}, we may choose $X= C^{0,1}(K)$.

\begin{example}\label{example barron lipschitz}
If $X= C^{0,1}(K)$, then $\B_{X,K} = C^{0,1}(K)$ and the norms are equivalent to within a factor of two. For $f\in C^{0,1}(K)$, we represent 
\begin{align*}
f &= \|f\|_{C^{0,1}(K)} \,\sigma\left(\frac{f}{\|f\|_{C^{0,1}(K)}}\right)  - \|f\|_{C^{0,1}(K)} \,\sigma\left(\frac{f}{\|f\|_{C^{0,1}(K)}}\right)\\
	&= \int_{B^X} \sigma(g)\,\left(\|f\|_{C^{0,1}}\cdot \delta_{\frac{f}{\|f\|_{C^{0,1}}}} - \|f\|_{C^{0,1}}\cdot\delta_{ -\frac{f}{\|f\|_{C^{0,1}}}}\right)(\d g).
\end{align*}
\end{example}

These examples are on opposite sides of the spectrum with $X$ being either the least complex non-trivial space or the largest admissible space. Spaces of deep neural networks lie somewhere between those extremes.

\begin{remark}
For  the classical Barron space, we usually consider measures supported on the unit sphere in the finite-dimensional space $X$. If $X$ is infinite-dimensional, typically only the unit ball in $X$ is closed (and thus compact) in $C^0$, but not the unit sphere. For mathematical convenience, we choose the compact setting. 
\end{remark}

\subsection{Properties}

Let us establish some first properties of generalized Barron spaces.

\begin{theorem}\label{theorem first properties}
The following are true.
\begin{enumerate}
\item $\B_{X,K}$ is a Banach-space.
\item $X\embeds \B_{X,K}$ and $\|f\|_{\B_{X,K}} \leq 2\,\|f\|_X$.
\item $\B_{X,K}\embeds C^{0,1}(K)$ and the closed unit ball of $\B_{X,K}$ is a closed subset of $C^{0}(K)$.
\end{enumerate}
\end{theorem}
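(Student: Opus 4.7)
The plan is to establish (2), then the two parts of (3), then (1), since each step leans on its predecessors and the basic norm axioms for $\|\cdot\|_{\B_{X,K}}$ drop out along the way. For (2), given $f\in X$ with $\|f\|_X=1$, symmetry of $B^X$ and the positive homogeneity of $\sigma$ give
\[
f = \sigma(f) - \sigma(-f) = f_\mu, \qquad \mu = \delta_f - \delta_{-f},
\]
which has total variation $2$; scaling handles general $f$ and yields $\|f\|_{\B_{X,K}} \le 2\|f\|_X$, reproducing the construction of Example~\ref{example barron lipschitz}.

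For the embedding $\B_{X,K} \hookrightarrow C^{0,1}(K)$, let $C$ denote the embedding constant of $X\hookrightarrow C^{0,1}(K)$. For $g\in B^X$, because $\sigma$ is $1$-Lipschitz with $\sigma(0)=0$, an elementary check gives $\|\sigma(g)\|_{C^{0,1}(K)} \le \|g\|_{C^{0,1}(K)} \le C$. Triangle inequality for the Bochner integral against $|\mu|$ then delivers $\|f_\mu\|_{C^{0,1}(K)} \le C\,\|\mu\|_{\M(B^X)}$, and taking the infimum over representing measures yields the claimed embedding. En route, this gives definiteness (if the norm vanishes, so does $f$), while positive homogeneity ($\mu\mapsto\alpha\mu$) and subadditivity ($\mu,\nu\mapsto \mu+\nu$) are immediate from the definition, completing the verification that $\|\cdot\|_{\B_{X,K}}$ is a norm.

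For closedness of $B^{\B_{X,K}}$ in the $C^0$-topology, take $f_n\in B^{\B_{X,K}}$ with $f_n\to f$ in $C^0(K)$, and pick $\mu_n\in\M(B^X)$ with $f_{\mu_n}=f_n$ and $\|\mu_n\|_{\M}\le 1 + 1/n$. Since $B^X$ is compact Polish, Riesz representation identifies $\M(B^X)=C(B^X)^{*}$, and Banach--Alaoglu supplies a subsequence $\mu_{n_k}$ converging weak-$*$ to some $\mu$ with $\|\mu\|_{\M}\le 1$. For every fixed $x\in K$, the evaluation $g\mapsto \sigma(g(x))$ lies in $C(B^X)$ (with the $C^0$-topology on $B^X$), hence $f_{\mu_{n_k}}(x)\to f_\mu(x)$; comparing with $f_n(x)\to f(x)$ identifies $f=f_\mu$ and forces $\|f\|_{\B_{X,K}}\le 1$.

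Completeness is then obtained by the standard criterion that absolutely summable series converge. Given $\sum_n\|f_n\|_{\B_{X,K}} < \infty$, choose representing measures with $\|\mu_n\|_{\M} \le \|f_n\|_{\B_{X,K}} + 2^{-n}$; since $\M(B^X)$ is itself a Banach space under total variation, $\mu := \sum_n \mu_n$ converges there, and the uniform bound $|\sigma(g(x))|\le \|g\|_{C^0(K)}$ on $B^X$ lets dominated convergence interchange sum and integral to give $f_\mu = \sum_n f_n$, with the same tail bound supplying convergence in $\B_{X,K}$. The main obstacle is the measure-theoretic step in the closedness argument: making sure the weak-$*$ limit truly represents $f$, which is where compactness of $B^X$ and the Polish structure coming from Lemma~\ref{lemma compact embedding holder} are essential, together with the continuity of the evaluation functionals $g\mapsto\sigma(g(x))$ on $B^X$ equipped with its inherited $C^0$-topology.
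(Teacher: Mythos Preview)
Your proof is correct and tracks the paper's argument closely for parts (2) and (3). The one genuine difference is in the completeness proof: the paper observes that by construction $\B_{X,K}$ is isometric to the quotient $\M(B^X)/N_K$, where $N_K=\{\mu : f_\mu\equiv 0\text{ on }K\}$ is the kernel of the continuous linear map $\mu\mapsto f_\mu$ and hence closed, so completeness follows in one line from the fact that quotients of Banach spaces by closed subspaces are Banach. Your direct argument via the absolutely-summable-series criterion is equally valid and is essentially that quotient-space theorem unpacked; the paper's route is shorter but relies on recognizing the quotient structure, while yours is self-contained.

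One minor technical point worth tightening: when you invoke ``triangle inequality for the Bochner integral'' to get $\|f_\mu\|_{C^{0,1}(K)}\le C\|\mu\|_{\M}$, this reads as if the integral is taken as a Bochner integral in $C^{0,1}(K)$, which would require measurability of $g\mapsto\sigma(g)$ into $C^{0,1}(K)$ with respect to the $C^0$-topology on $B^X$; that is not automatic, since $C^0$-convergence does not imply $C^{0,1}$-convergence. The paper avoids this by arguing the Lipschitz estimate pointwise: for fixed $x,x'\in K$, bound $|f_\mu(x)-f_\mu(x')|$ by integrating $|\sigma(g(x))-\sigma(g(x'))|\le [g]_{C^{0,1}}|x-x'|$ against $|\mu|$. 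Your sentence can be read this way too, so the issue is cosmetic rather than substantive.
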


\begin{proof}
Since $X\embeds C^{0,1}(K)$, we know that there exist $C_1, C_2>0$ such that
\[
\|g\|_{C^0(K)} \leq C_1\,\|g\|_{X}, \qquad [g]_{C^{0,1}(K)} \leq C_2\,\|g\|_X\qquad \forall\ g\in X.
\]

\paragraph{Banach space} By construction, $\B_{X,K}$ is isometric to the quotient space $\M(B^X)/N_K$ where 
\[
N_K = \left\{ \mu \in \M(B^X) \:\bigg|\: \int_{B^X} \sigma\big(g(x)\big)\,\mu(\d g) = 0\:\forall\ x\in K\right\}.
\]
In particular, $\B_{X,K}$ is a normed vector space with the norm $\|\cdot\|_{X,K}$. The map
\[
\M(B^X) \to C^0(K), \qquad \mu\mapsto f_\mu =  \int_{B^X} \sigma(g)\,\mu(\d g)
\]
is continuous as
\[
\left\|\int_{B^X} \sigma(g)\,\mu(\d g)\right\|_{C^0(K)} \leq \int_{B^X} \|g\|_{C^0(K)}\,|\mu|(\d g) \leq C_1\,\|\mu\|_{\M(B^X)}
\]  
by the properties of Bochner spaces. Thus $N_K$ is the kernel of a continuous linear map, i.e.\ a closed closed subspace. We conclude that $\B_{X,K}$ is a Banach space \cite[Proposition 11.8]{MR2759829}.

\paragraph{$X$ embeds into $\B_{X,K}$} For $g\in X$ with $\|g\|_X = 1$ consider $\mu = \delta_g - \delta_{-g}$ and observe that
\[
f_\mu = \sigma(g) - \sigma(-g) = g, \qquad \|\mu\|_{\M(B^X)} = 2.
\]
The general case follows by homogeneity. 

\paragraph{$\B_{X,K}$ embeds into $C^{0,1}$} We have already shown that $\|f_\mu\|_{C^0(K)} \leq C_1\,\|\mu\|_{\M(B^X)}$. By taking the infimum over $\mu$, we find that $\|f\|_{C^0(K)} \leq R\,\|f\|_{\B_{X,K}}$. Furthermore, for any $x\neq y\in K$ we have
\begin{align*}
|f_\mu(x) - f_\mu(x')| &\leq \int_{B^X} \big|\sigma\big(g(x)\big) - \sigma\big(g(x')\big)\big|\,|\mu| (\d g)\\
	&\leq \int_{ B^X} \big|g(x) - g(x')\big|\,|\mu| (\d g)\\
	&\leq \int_{B^X} [g]_{C^{0,1}} |x-x'|\,|\mu| (\d g)\\
	&\leq C_2\,\|\mu\|_{\M(B^X)}\,|x-x'|
\end{align*}
We can now take the infimum over $\mu$.

Now assume that $(f_n)_{n\in\N}$ is a sequence such that $\|f_n\|_{X,K}\leq 1$ for all $n\in\N$. Choose a sequence of measures $\mu_n$ such that $f_n = f_{\mu_n}$ and $\|\mu_n\|\leq 1 + \frac1n$ for all $n\in \N$. By the compactness theorem for Radon measures (see Theorem \ref{compactness theorem radon measures} in the appendix), there exists a subsequence $\mu_{n_k}$ and a Radon measure $\mu$ on $B^X$ such that $\mu_{n_k}\wto \mu$ as Radon measures and $\|\mu\|\leq 1$. 

By definition, the weak convergence of Radon measures implies that
\[
\int_{B^X} F(g)\,\mu_{n_k}(\d g) \to \int_{B^X} F(g)\,\mu(\d g) \qquad \forall\ F\in C(B^X).
\]
Using $F(g) = \sigma(g(x))$, we find that $f_{\mu_{n_k}}\to f_\mu$ pointwise. In particular, if $f_{\mu_n}$ converges to a limit $\tilde f$ uniformly, then $\tilde f = f \in B^{\B_{X,K}}$, i.e.\ the unit ball of $\B_{X,K}$ is closed in the $C^0$-topology.
\end{proof}

The last property establishes that $\B_{X,K}$ satisfies the same properties which we imposed on $X$, i.e.\ we can repeat the construction and consider $\B_{\B_{X,K},K}$.

\begin{remark}
We have shown in \cite{barron_new} that if $K$ is an infinite set, Barron space is generally neither separable nor reflexive. In particular, $\B_{X,K}$ is not expected to have either of these properties in the more general case.
\end{remark}

\subsection{Rademacher complexities}\label{section Rademacher general}

We show that generalized Barron spaces have a favorable property from the perspective of statistical learning theory. 

%

A convenient (and sometimes realistic) assumption is that all data samples accessible to a statistical learner are drawn from a distribution $\P$ independently. The pointwise Monte-Carlo error estimate follows from the law of large numbers which shows that for a fixed function $f$ and data distribution $\P$, we have
\[
\left|\E_{(X_1,\dots,X_N)\sim \pi^N} \left[\sum_{i=1}^N f(X_i) - \int f(x)\,\P(\d x)\right] \right|\leq \frac{C_f}{\sqrt{N}}
\]
Typically, the uniform error over a function class is much larger than the pointwise error. For example for the class of one-Lipschitz functions
\[
\left|\E_{(X_1,\dots,X_N)\sim \pi^N} \sup_{[f]_{C^{0,1}}\leq 1} \left[\sum_{i=1}^N f(X_i) - \int f(x)\,\P(\d x)\right] \right| = \E_{(X_1,\dots,X_N)\sim \pi^N} \left[W_1\left(\P, \:\frac1N \sum_{i=1}^N\delta_{X_i}\right)\right]
\]
is the expected $1$-Wasserstein distance between $\P$ and the empirical measure of $N$ independent sample points drawn from it. If $\P$ is the uniform distribution on $[0,1]^d$, this decays like $N^{-1/d}$ and thus much slower than $N^{-1/2}$ \cite{fournier2015rate, approximationarticle}.

For Barron-type spaces, the Monte-Carlo error rate may be attained {\em uniformly} on the unit ball of $\B_{X,K}$. This is established using the Rademacher complexity of a function class. Rademacher complexities essentially decouple the sign and magnitude of oscillations around the mean by introducing additional randomness in a problem. 
For general information on Rademacher complexities, see \cite[Chapter 26]{shalev2014understanding}.
 
\begin{definition}
Let $S = \{x_1,\dots, x_N\}$ be a set of points in $K$. The {\em Rademacher complexity} of $\H \subseteq C^{0,1}(K)$ on $S$ is defined as
\begin{equation}
\Rad(\H;S) = \E_{\xi} \left[\sup_{h\in\H}\frac1N\sum_{i=1}^N\xi_i\,h(x_i)\right]
\end{equation}
where the $\xi_i$ are iid random variables which take the values $1$ and $-1$ with probability $1/2$ each. 
\end{definition}

The $\xi_i$ are either referred to as symmetric Bernoulli or Rademacher variables, depending on the author.

\begin{theorem}\label{theorem rademacher general}
Denote by $\F$ the unit ball of $\B_{X,K}$. Let $S$ be any sample set in $\R^d$. Then
\[
\Rad(\F; S) \leq 2\,\Rad(B^X, S).
\]
\end{theorem}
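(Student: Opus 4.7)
The plan is to peel off the two nonlinearities defining $\B_{X,K}$ one at a time: first the signed-measure representation on $B^X$, then the ReLU activation. Concretely, I would bound $\Rad(\F; S)$ by the Rademacher complexity of the class $\{\sigma\circ g : g\in B^X\}$, and then apply the Ledoux-Talagrand contraction inequality to eliminate $\sigma$ and recover $\Rad(B^X; S)$.

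For the first reduction, fix the sample $S=\{x_1,\dots,x_N\}$. Given $f\in\F$ and $\eps>0$, choose $\mu\in\M(B^X)$ with $\|\mu\|_{\M(B^X)}\leq 1+\eps$ and $f=f_\mu$ on $K$. Interchanging the finite sum and the integral (by Fubini) and bounding via the Hahn-Jordan decomposition of $\mu$ gives
\[
\frac{1}{N}\sum_{i=1}^N\xi_i f(x_i) = \int_{B^X}\frac{1}{N}\sum_{i=1}^N\xi_i\sigma\big(g(x_i)\big)\,\mu(\d g) \leq (1+\eps)\sup_{g\in B^X}\left|\frac{1}{N}\sum_{i=1}^N\xi_i\sigma\big(g(x_i)\big)\right|.
\]
Letting $\eps\to 0$, taking the supremum over $f\in\F$, and then the expectation over $\xi$ yields an upper bound for $\Rad(\F;S)$ by the expected supremum of the absolute values on the right. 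To drop the absolute value I would use that $-\xi$ has the same law as $\xi$: writing $|\cdot|=\max(\cdot,-\cdot)$ and invoking this symmetry (together with the fact that $0\in B^X$, so both one-sided suprema are non-negative) produces a loss factor of exactly $2$, which is precisely where the constant in the statement appears.

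Finally, because $\sigma$ is $1$-Lipschitz with $\sigma(0)=0$, the Ledoux-Talagrand contraction inequality (in its no-extra-constant form) yields
\[
\E_\xi\sup_{g\in B^X}\frac{1}{N}\sum_{i=1}^N\xi_i\sigma\big(g(x_i)\big) \leq \E_\xi\sup_{g\in B^X}\frac{1}{N}\sum_{i=1}^N\xi_i g(x_i) = \Rad(B^X;S),
\]
and combining with the previous estimate proves $\Rad(\F;S) \leq 2\,\Rad(B^X;S)$. The only point that needs care is the exchange of sum and integral in the first display: one must verify that $g\mapsto\sum_i\xi_i\sigma(g(x_i))$ is genuinely $\mu$-integrable on $B^X$, which is immediate since evaluation at each $x_i$ is continuous on $B^X\subset C^0(K)$ and $\sigma$ is continuous, so the integrand is continuous and bounded on the compact set $B^X$. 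Everything else is either routine or a standard invocation of the contraction principle.
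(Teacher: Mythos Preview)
Your proof is correct and follows essentially the same route as the paper's. The paper phrases the first reduction by observing that the unit ball of $\B_{X,K}$ is the closed convex hull of $\H=\{\pm\sigma(g):g\in B^X\}$ and then invoking the invariance of Rademacher complexity under convex hulls together with the union bound $\Rad(\H_1\cup(-\H_1))\leq 2\,\Rad(\H_1)$; your direct integral estimate $\int h\,\d\mu\leq\|\mu\|\sup|h|$ and the symmetry argument for the absolute value are exactly these two steps unpacked, and both proofs finish with the same contraction inequality.
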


\begin{proof}
Define the function classes $\H_1 = \{\sigma(g): g\in B^X\}$, $\H_2 = \{-\sigma(g) : g\in B^X\}$ and $\H = \H_1\cup\H_2$. All three are compact in $C^0$.

We decompose $\mu = \mu^+-\mu^-$ in its mutually singular positive and negative parts and write $f=f_\mu$ in $\B_{X,K}$ as 
\begin{align*}
f_\mu(x) &= \int_{B^X} \sigma(g(x))\,\mu^+(\d g) + \int_{B^X}\,-\sigma(g(x))\,\mu^-(\d g)\\
	&= \int_{\H_1} h(x)\,(\rho^+_\sharp \mu^+)(\d h) + \int_{\H_2} h(x)\,(\rho^-_\sharp \mu^-)(\d h)\\
	&= \int_{\H} h(x)\,\hat\mu(\d h)
\end{align*}
where $\rho^\pm:B^X\to \H$ is given by $g\mapsto \pm \sigma(g)$ and $\hat \mu = \rho^+_\sharp\mu^+ + \rho^-_\sharp \mu^-$. In particular, we note that $\hat\mu$ is a non-negative measure and $\|\hat\mu\| = \|\mu\|$. We conclude that the closed unit ball in $\B_{X,K}$ is the closed convex hull of $\H$.

Since $\sigma$ is $1$-Lipschitz, the contraction Lemma \cite[Lemma 26.9]{shalev2014understanding} implies that $\Rad(\H_1; S) \leq \Rad(B^X; S)$.
Due to \cite[Lemma 26.7]{shalev2014understanding}, we find that 
\begin{align*}
\Rad(B^{\B_{X,K}}; S) &= \Rad(\H; S)\\
	&= \Rad(\H_1\cup (-\H_1); S)\\
	&\leq \Rad(\H_1; S) + \Rad(-\H_1; S)\\
	&= 2\, \Rad(\H_1; S)\\
	&= 2\, \Rad(B^X; S)
\end{align*}
since for any $\xi$, the supremum is non-negative.
\end{proof}

For a priori estimates, it suffices to bound the expected Rademacher complexity. However, the use of randomness in the problem is complicated, and most known bounds work on any suitably bounded sample set.

\begin{example}\label{example rademacher linear}
If $\H_{lin}$ is the class of linear functions on $\R^d$ with $\ell^1$-norm smaller or equal to $1$ and $S$ is any sample set of $N$ elements in $[-1,1]^d$, then
\[
\Rad(\H_{lin}; S) \leq \sqrt{\frac{2\,\log(2d)}N},
\]
see \cite[Lemma 26.11]{shalev2014understanding}. If $\H_{aff}$ is the unit ball in the class of affine functions $x\mapsto w^Tx+b$ with the norm $|w|_{\ell^1}+|b|$, we can simply extend $x$ to $(x,1)$ and see that
\[
\Rad(\H_{aff}; S) \leq \sqrt{\frac{2\,\log(2d+2)}N}.
\]
\end{example}

We show that Monte-Carlo rate decay is the best possible result for Rademacher complexities under very weak conditions.

\begin{example}\label{example rademacher complexity lower bound}
Let $\F$ be a function class which contains the constant functions $f\equiv -1$ and $f\equiv 1$ for $\alpha,\beta\in\R$. Then there exists $c>0$ such that
\[
\Rad(\F;S)\geq c\frac{|\alpha-\beta|}{\sqrt N}
\]
for any sample set $S$ with $N$ elements. Up to scaling and a constant shift (which does not affect the complexity), we may assume that $\beta =1, \alpha = -1$. Then
\begin{align*}
\Rad(\F;S) &\geq \E_\xi \frac1m \sup_{f\equiv \pm 1} \sum_{i=1}^m \xi_if(x_i)\\
	&= \E_\xi \frac1m \left|\sum_{i=1}^m \xi_i\right|\\
	&\sim \frac1{\sqrt{2\pi m}}
\end{align*}
by the central limit theorem.
\end{example}

\section{Banach spaces for multi-layer neural networks}\label{section multi-layer}

\subsection{neural tree spaces}

In this section, we discuss feed-forward neural networks of infinite width and finite depth $L$. Let $K\subseteq\R^d$ be a fixed compact set. Consider the following sequence of spaces.
\begin{enumerate}
\item $\W^0(K) = (\R^d)^*\oplus \R \:\widetilde = \:\R^{d+1}$ is the space of affine functions from $\R^d$ to $\R$ (restricted to $K$).
\item For $L\geq 1$, we set $\W^L(K)= \B_{\W^{L-1}(K),K}$. 
\end{enumerate}
Since we consider $\R^d$ to be equipped with the $\ell^\infty$-norm, we take $\W^0$ to be equipped with its dual, the $\ell^1$-norm. Up to a dimension-dependent normalization constant, this does not affect the analysis.

Thus $\W^L$ is the function space for $L+1$-layer networks (i.e.\ networks with $L$ hidden layers/nonlinearities). Here we use inductively that $\W^L$ embeds into $C^{0,1}(K)$ continuously and that the unit ball of $\W^L$ is $C^0$-closed because the same properties held true for $\W^{L-1}$. Due to the tree-like recursive construction, we refer to $\W^L$ as neural tree space (with $L$ layers).  

Here and in the following, we often assume that $K$ is a fixed set and will suppress it in the notation $\W^L = \W^L(K)$. 

\begin{remark}
For a network with one hidden layer, by construction the coefficients in the inner layer are $\ell^\infty$-bounded, while the outer layer is bounded in $\ell^1$ (namely as a measure). Due to the homogeneity of the ReLU activation function, the bounds can be easily achieved and the function space is not reduced compared to just requiring the path-norm proxy to be finite.

For other activation functions, an $\ell^\infty$-bound on the coefficients in the inner layer may restrict the space of functions which can be approximated. In particular, if $\sigma$ is $C^k$-smooth, then $x\mapsto a\,\sigma(w^Tx)$ is $C^k$-smooth uniformly in $w\in B_R(0)\subseteq \R^{d+1}$. As a consequence, the space of $\sigma$-activated two-layer networks whose inner layer coefficients are $\ell^\infty$-bounded embeds continuously into $C^k$. At least if $k> d/2$, it follows from \cite{barron1993universal} that this space is smaller than the space of functions which can be approximated by $\sigma$-activated two-layer networks with uniformly bounded path-norm (see also \cite[Theorem 3.1]{barron_new}).

It is likely that neural tree spaces with more general activation require parametrization by Radon measures on entire Banach spaces of functions. For networks with a single hidden layer, some results in this direction were presented in the appendix of \cite{approximationarticle}. While Radon measures on $\R^{d+2}$ are less convenient than those on $S^{d+1}$, many results can be carried over since $\R^{d+2}$ is locally compact. 

The situation is very different for networks with two hidden layers. The space $X = \W^1$ on which $\W^2 = \B_X$ is modelled is infinite-dimensional, dense in $C^0$, and not locally compact in the $C^0$-topology. The restriction to the compact set $B^X$ simplifies the analysis considerably.
\end{remark}

\subsection{Embedding of finite networks}\label{section finite networks}
The space $\W^L$ contains all finite networks with $L\geq 1$ hidden layers.

\begin{theorem}\label{embedding theorem}
Let 
\begin{equation}\label{eq L layer network}
f(x) = \sum_{i_L=1}^{m_L}a^L_{i_L}\sigma\left(\sum_{i_{L-1}=1}^{m_{L-1}} a^{L-1}_{i_Li_{L-1}}\sigma\left(\sum_{i_{L-2}} \dots \sigma\left(\sum_{i_1=1}^{m_1}a_{i_2i_1}^1\,\sigma\left(\sum_{i_0=1}^{d+1} a^0_{i_1i_0}\,x_{i_0}\right)\right)\right)\right)
\end{equation}
Then $f\in \W^L$ and 
\begin{equation}
\|f\|_{\W^L}\leq \sum_{i_L=1}^{m_L}\dots\sum_{i_1=1}^{m_1} \sum_{i_0=1}^{d+1} \big|a^L_{i_L}\,a_{i_Li_{L-1}}^{L-1}\,\dots a^0_{i_1i_0}\big|
\end{equation}
\end{theorem}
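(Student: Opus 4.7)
The plan is to proceed by induction on $L$, using the recursive definition $\W^L = \B_{\W^{L-1},K}$ together with the one-homogeneity of $\sigma$ to produce an explicit representing measure in each step. The key idea is that every branch of the network, after normalization in $\W^{L-1}$, provides a point in the compact space $B^{\W^{L-1}}$ on which we put a Dirac mass weighted by the product of the absorbed scalars.

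For the base case $L=1$, write $w_{i_1}(x) = \sum_{i_0=1}^{d+1} a^0_{i_1 i_0}x_{i_0} \in \W^0$ with $\|w_{i_1}\|_{\W^0} = \sum_{i_0}|a^0_{i_1 i_0}| =: \alpha_{i_1}$ (recalling $\W^0$ is equipped with $\ell^1$). Discarding indices with $\alpha_{i_1}=0$, set $\tilde w_{i_1} := w_{i_1}/\alpha_{i_1} \in B^{\W^0}$ and define the discrete signed measure $\mu := \sum_{i_1} a^1_{i_1}\alpha_{i_1}\,\delta_{\tilde w_{i_1}}$. By the $1$-homogeneity $\sigma(\alpha\,g) = \alpha\,\sigma(g)$ for $\alpha\geq 0$, one computes $f_\mu = \sum_{i_1} a^1_{i_1}\alpha_{i_1}\,\sigma(\tilde w_{i_1}) = \sum_{i_1}a^1_{i_1}\sigma(w_{i_1}) = f$, and $\|\mu\|_{\M(B^{\W^0})} \leq \sum_{i_1,i_0} |a^1_{i_1} a^0_{i_1 i_0}|$.

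For the inductive step, assume the claim at depth $L-1$ and let $f$ be an $L$-layer network. Grouping the inner layers, write $f(x) = \sum_{i_L=1}^{m_L} a^L_{i_L}\,\sigma\bigl(h_{i_L}(x)\bigr)$, where
\[
h_{i_L}(x) = \sum_{i_{L-1}=1}^{m_{L-1}} a^{L-1}_{i_L i_{L-1}}\,\sigma\left(\dots \sigma\left(\sum_{i_0=1}^{d+1} a^0_{i_1 i_0}\,x_{i_0}\right)\dots\right)
\]
is itself an $(L-1)$-layer network. By the inductive hypothesis, $h_{i_L}\in\W^{L-1}$ with
\[
\|h_{i_L}\|_{\W^{L-1}} \leq \sum_{i_{L-1}=1}^{m_{L-1}}\dots\sum_{i_0=1}^{d+1}\bigl|a^{L-1}_{i_L i_{L-1}}\cdots a^0_{i_1 i_0}\bigr| =: \beta_{i_L}.
\]
Discarding any indices with $\beta_{i_L}=0$ and setting $\tilde h_{i_L}:= h_{i_L}/\beta_{i_L} \in B^{\W^{L-1}}$, define the discrete signed measure $\mu := \sum_{i_L} a^L_{i_L}\,\beta_{i_L}\,\delta_{\tilde h_{i_L}}$ on the compact (hence Polish) space $B^{\W^{L-1}}$. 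Again by one-homogeneity,
\[
f_\mu(x) = \sum_{i_L} a^L_{i_L}\beta_{i_L}\,\sigma\bigl(\tilde h_{i_L}(x)\bigr) = \sum_{i_L} a^L_{i_L}\,\sigma\bigl(h_{i_L}(x)\bigr) = f(x),
\]
so by definition $\|f\|_{\W^L}\leq \|\mu\|_{\M(B^{\W^{L-1}})} \leq \sum_{i_L}|a^L_{i_L}|\,\beta_{i_L}$, which is exactly the claimed path-norm proxy bound.

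The only genuine subtlety is ensuring that the objects we produce are legitimate points of $B^{\W^{L-1}}$ and that the Dirac masses really form a Radon measure on that space; this is immediate because $B^{\W^{L-1}}$ is a compact metric space in the $C^0$-topology (by Theorem \ref{theorem first properties} applied at depth $L-1$), so finite signed combinations of Diracs are automatically Radon. Everything else is bookkeeping powered by the homogeneity identity $a\,\sigma(w) = (a\,\alpha)\,\sigma(w/\alpha)$ used once per layer.
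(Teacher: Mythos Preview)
Your proof is correct and follows essentially the same route as the paper: induction on $L$, with the inner $(L-1)$-layer functions $h_{i_L}$ (the paper's $g_{i_L}$) normalized by their path-norm proxy $\beta_{i_L}$ to land in $B^{\W^{L-1}}$, and a finite linear combination of Diracs as the representing measure. Your write-up is in fact slightly more explicit than the paper's about the edge case $\beta_{i_L}=0$ and about why the Dirac construction yields a legitimate Radon measure, but the underlying argument is identical.
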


\begin{proof}
The statement is obvious for $L=1$ as
\[
f(x) = \sum_{i=1}^{m_1} a_{i_1} \sigma\left(\sum_{i_0=1}^{d+1} a_{i_1,i_0}x_{i_0}\right) = \int_{S^{d}} \sigma(w^Tx)\,\left(\sum_{i=1}^{m} a_{i}|w_i|\cdot \delta_{w_i/|w_i|}\right) (\d w).
\]
is a classical Barron function, where we simplified notation by setting $w_i = (a_{i1},\dots, a_{i(d+1)})\in \R^{d+1}$. We proceed by induction.

Let $f$ be like in \eqref{eq L layer network}. By the induction hypothesis, for any fixed $1\leq i_L\leq m_L$, the function
\[
g_{I_L}(x):= \sum_{i_{L-1}=1}^{m_{L-1}} a^{L-1}_{i_Li_{L-1}}\sigma\left(\sum_{i_{L-2}} \dots \sigma\left(\sum_{i_1=1}^{m_1}a_{i_2i_1}^1\,\sigma\left(\sum_{i_0=1}^{d+1} a^0_{i_1i_0}\,x_{i_0}\right)\right)\right)
\]
lies in $\W^{L-1}$ with the appropriate norm bound. We note that
\[
f(x) = \sum_{i_L=1}^{m_L} a_{i_L} \sigma\big( g_{i_L}(x)\big) = \sum_{i_L=1}^{m_L} \bar a_{i_L} \sigma\big( \bar g_{i_L}(x)\big)
\]
where 
\begin{align*}
\bar g_{i_L} &= \frac{g_{i_L}}{\sum_{i_{L-1}=1}^{m_L}\dots\sum_{i_1=1}^{m_1} \sum_{i_0=1}^{d+1} \big|a^L_{i_{L-1}}\,a_{i_{L-1}i_{L-2}}^{L-1}\,\dots a^0_{i_1i_0}\big|}\\
\bar a_{i_L} &= a_{i_L}\,\sum_{i_{L-1}=1}^{m_L}\dots\sum_{i_1=1}^{m_1} \sum_{i_0=1}^{d+1} \big|a^L_{i_{L-1}}\,a_{i_{L-1}i_{L-2}}^{L-1}\,\dots a^0_{i_1i_0}\big|.
\end{align*}
It follows that $f\in \W^L$ with appropriate norm bounds.
\end{proof}

\subsection{Inverse Approximation}\label{section inverse approximation}
We show that $\W^L$ does not only contain all finite ReLU networks with $L$ hidden layers, but also their limiting objects.

\begin{theorem}[Compactness Theorem]\label{compactness theorem}
Let $f_n$ be a sequence of functions in $\W^L$ such that $C^L:= \liminf_{n\to\infty} \|f_n\|_{\W^L}<\infty$. Then there exists $f\in \W^L$ and a subsequence $f_{n_k}$ such that $\|f\|_{\W^L}\leq C^L$ and $f_{n_k}\to f$ strongly in $C^{0,\alpha}(K)$ for all $\alpha<1$.
\end{theorem}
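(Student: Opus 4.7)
The plan is to reduce the theorem to the analogous statement already contained in the proof of Theorem~\ref{theorem first properties}(3). Since $\W^L = \B_{\W^{L-1},K}$, the compact set $B^{\W^{L-1}}$ is a compact Polish space in the $C^0$-topology (the inductively established embedding $\W^{L-1}\embeds C^{0,1}(K)$ together with Lemma~\ref{lemma compact embedding holder} and $C^0$-closedness of $B^{\W^{L-1}}$ gives compactness), which lets me work with Radon measures on it.

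First, I would pass to a subsequence (not relabelled) such that $\|f_n\|_{\W^L}\to C^L$. By definition of the infimum in the norm, I can choose representing measures $\mu_n\in\M(B^{\W^{L-1}})$ with $f_n = f_{\mu_n}$ and $\|\mu_n\|_{\M(B^{\W^{L-1}})} \le \|f_n\|_{\W^L} + \frac1n$; hence $\|\mu_n\|_{\M}$ is uniformly bounded. By the compactness theorem for signed Radon measures on a compact Polish space (Theorem~\ref{compactness theorem radon measures}), there exists a further subsequence $\mu_{n_k}\wto \mu$ weakly as Radon measures, and by lower semicontinuity of the total variation norm, $\|\mu\|_\M \le C^L$. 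Setting $f := f_\mu$, this gives $f\in\W^L$ with $\|f\|_{\W^L}\le C^L$.

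Next, I would establish pointwise convergence $f_{n_k}(x)\to f(x)$. For each fixed $x\in K$ the map $g\mapsto \sigma(g(x))$ is a continuous real-valued function on the compact space $B^{\W^{L-1}}$ (evaluation at $x$ is continuous in the $C^0$-topology, and $\sigma$ is continuous on $\R$), so weak convergence of Radon measures yields
\[
f_{n_k}(x) = \int_{B^{\W^{L-1}}} \sigma(g(x))\,\mu_{n_k}(\d g) \to \int_{B^{\W^{L-1}}} \sigma(g(x))\,\mu(\d g) = f(x).
\]
Finally, from Theorem~\ref{theorem first properties}(3) the sequence $(f_{n_k})$ is uniformly bounded in $C^{0,1}(K)$, so Lemma~\ref{lemma compact embedding holder} produces a further subsequence converging in $C^{0,\alpha}(K)$ for every $\alpha<1$ to some $\tilde f\in C^{0,1}(K)$. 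The pointwise limit forces $\tilde f = f$, and since every subsequential $C^{0,\alpha}$-limit equals the same $f$, the whole selected subsequence converges to $f$ strongly in $C^{0,\alpha}(K)$ for every $\alpha<1$.

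The only genuinely delicate point is verifying that the weak convergence of the measures $\mu_{n_k}$ transfers into convergence of the functions $f_{\mu_{n_k}}$: this relies on the inductive fact that $B^{\W^{L-1}}$ is compact in $C^0(K)$, so evaluation functionals are continuous there and $g\mapsto\sigma(g(x))$ is an admissible test function. Everything else — the norm bound on $\mu$, the norm bound on $f$, and the upgrade from pointwise to $C^{0,\alpha}$ convergence — is a routine consequence of total-variation lower semicontinuity and Arzelà–Ascoli.
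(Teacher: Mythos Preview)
Your proof is correct and follows essentially the same route as the paper, just with the measure-theoretic machinery unpacked rather than cited. The paper's argument is two lines: invoke Theorem~\ref{theorem first properties}(3) to get the continuous embedding $\W^L\embeds C^{0,1}(K)$ and the $C^0$-closedness of $B^{\W^L}$, then apply Lemma~\ref{lemma compact embedding holder} for the convergent subsequence and closedness for the norm bound. Your Radon-measure compactness argument is precisely the content of the closedness proof in Theorem~\ref{theorem first properties}(3), so you are re-deriving rather than citing it; nothing is wrong, but nothing is new either.
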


\begin{proof}
The result is trivial for $L=0$ since $\W^0$ is a finite-dimensional linear space. Using the third property from Theorem \ref{theorem first properties} inductively, we find that $\W^L$ embeds continuously into $C^{0,1}$, thus compactly into $C^{0,\alpha}$ for all $\alpha<1$. This establishes the existence of a convergent subsequence. Since $B^{\W^L}$ is $C^0$-closed, it follows that the limit lies in $\W^L$.
\end{proof}

\begin{corollary}[Inverse Approximation Theorem]\label{inverse approximation corollary}
Let
\[
f_n(x) = \sum_{i_L=1}^{m_{n,L}}a^{n,L}_{i_L}\sigma\left(\sum_{i_{L-1}=1}^{m_{n,L-1}} a^{n,L-1}_{i_Li_{L-1}}\sigma\left(\sum_{i_{L-2}} \dots \sigma\left(\sum_{i_1=1}^{m_{n,1}}a_{i_2i_1}^{n,1}\,\sigma\left(\sum_{i_0=1}^{d+1} a^{n,0}_{i_1i_0}\,x_{i_0}\right)\right)\right)\right)
\]
be finite $L$-layer network functions such that
\[
\sup_{n\in \N} \sum_{i_L=1}^{m_{n,L}}\dots\sum_{i_1=1}^{m_{n,1}} \sum_{i_0=1}^{d+1} \big|a^{n,L}_{i_L}\,a_{i_Li_{L-1}}^{n,L-1}\,\dots a^{n,0}_{i_1i_0}\big|< \infty.
\]
If $\P$ is a compactly supported probability measure and $f\in L^1(\P)$ such that $f_n\to f$ in $L^1(\P)$, then $f\in \W^L(\spt\, \P)$ and
\begin{equation}
\|f\|_{\W^L(\spt\,\P)} \leq \liminf_{n\to\infty}\sum_{i_L=1}^{m_{n,L}}\dots\sum_{i_1=1}^{m_{n,1}} \sum_{i_0=1}^{d+1} \big|a^{n,L}_{i_L}\,a_{i_Li_{L-1}}^{n,L-1}\,\dots a^{n,0}_{i_1i_0}\big|.
\end{equation}
\end{corollary}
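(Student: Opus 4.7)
The plan is to combine the finite-network embedding of Theorem \ref{embedding theorem} with the compactness result of Theorem \ref{compactness theorem}, and then use the given $L^1(\P)$-convergence to identify the $\W^L$-limit with $f$.

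First, I would set $K := \spt\P$, which is compact by hypothesis, and write $P_n := \sum_{i_L}\dots\sum_{i_0}\big|a^{n,L}_{i_L}\cdots a^{n,0}_{i_1 i_0}\big|$ for the path-norm proxy appearing on the right-hand side of the claimed bound. Theorem \ref{embedding theorem} applied on $K$ guarantees $f_n\in\W^L(K)$ with $\|f_n\|_{\W^L(K)}\leq P_n$, so the standing hypothesis $\sup_n P_n < \infty$ gives a uniform $\W^L(K)$-bound along the sequence.

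Next, I would extract subsequences twice. Pick a subsequence $(f_{n_j})$ along which $P_{n_j}\to \liminf_n P_n$. Applying Theorem \ref{compactness theorem} to this subsequence (which is bounded in $\W^L(K)$) produces a further subsequence, still denoted $(f_{n_j})$, and a function $g\in\W^L(K)$ with $f_{n_j}\to g$ strongly in $C^{0,\alpha}(K)$ for every $\alpha<1$, such that
\[
\|g\|_{\W^L(K)} \leq \liminf_{j\to\infty} \|f_{n_j}\|_{\W^L(K)} \leq \liminf_{j\to\infty} P_{n_j} = \liminf_{n\to\infty} P_n.
\]

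It remains to identify $f$ and $g$. Uniform convergence $f_{n_j}\to g$ on $K=\spt\P$, together with $\P(K)=1$, yields $f_{n_j}\to g$ in $L^1(\P)$. Combined with the hypothesis $f_n\to f$ in $L^1(\P)$, this forces $f=g$ as elements of $L^1(\P)$; since $g$ is continuous on $\spt\P$, it serves as the canonical representative of the $L^1(\P)$-class of $f$, and we conclude that $f\in\W^L(\spt\P)$ with the stated norm bound. The main obstacle is purely organizational: we need the norm of the limit to be controlled by $\liminf_n P_n$ rather than by the potentially larger $\liminf_j \|f_{n_j}\|_{\W^L}$ along whatever subsequence the compactness theorem produces. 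The double extraction above (first realizing the $\liminf$ of the path-norm proxies, then invoking compactness) resolves this cleanly, and everything else is a direct application of the already established embedding and compactness results.
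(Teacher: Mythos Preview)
Your proposal is correct and follows exactly the route the paper intends: combine Theorem~\ref{embedding theorem} with Theorem~\ref{compactness theorem} and identify the $C^{0,\alpha}$-limit with $f$ via the assumed $L^1(\P)$-convergence. One minor simplification: the double extraction is unnecessary, since Theorem~\ref{compactness theorem} already bounds $\|g\|_{\W^L}$ by $\liminf_n \|f_n\|_{\W^L}$, and from $\|f_n\|_{\W^L}\le P_n$ one gets $\liminf_n \|f_n\|_{\W^L}\le \liminf_n P_n$ directly.
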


\begin{proof}
Follows from Theorems \ref{compactness theorem} and \ref{embedding theorem}.
\end{proof}

In particular, we make no assumption whether the width of any layer goes to infinity, or at what rate. The path-norm does not control the number of (non-zero) weights of a network.

\subsection{Direct Approximation}

In Sections \ref{section finite networks} and \ref{section inverse approximation}, we showed that $\W^L$ is large enough to contain all finite ReLU networks with $L$ hidden layers and their limiting objects, even in weak topologies. In this section, we prove conversely that $\W^L$ is small enough such that every function can be approximated by finite networks with $L$ hidden layers (with rate  independent of the dimensionality), i.e.\ $\W^L$ is the smallest suitable space for these objects. 

In fact, we prove a stronger result with an approximation rate in a reasonably weak topology. The rate however depends on the number of layers. Recall the following result on convex sets in Hilbert spaces.

\begin{lemma}\cite[Lemma 1]{barron1993universal}\label{barron hilbert lemma}
Let $\G$ be a set in a Hilbert space $H$ such that $\|g\|_H \leq R$ for all $g\in\G$. If $f$ is in the closed convex hull of $\G$, then for every $m\in\N$ and $\eps>0$, there exist $m$ elements $g_1,\dots, g_m\in \G$ such that 
\begin{equation}
\left\| f - \frac1m \sum_{i=1}^m g_i \right\|_H \leq \frac{R+\eps}{\sqrt{m}}.
\end{equation}
\end{lemma}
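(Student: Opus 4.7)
The plan is to prove this by the classical probabilistic (Maurey-type) argument: approximate $f$ by a finite convex combination of elements in $\G$, reinterpret this combination as an expectation under a discrete probability measure on $\G$, then sample $m$ iid points from that measure and use variance estimates to show the empirical average is close to $f$ in mean square, and finally invoke the probabilistic method.

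First, since $f$ lies in the closed convex hull of $\G$, for any $\delta > 0$ I can choose a finite convex combination $\tilde f = \sum_{j=1}^N \lambda_j h_j$ with $h_j \in \G$, $\lambda_j \geq 0$, $\sum_j \lambda_j = 1$, such that $\|f - \tilde f\|_H \leq \delta$. Set $\delta = \eps/\sqrt{m}$, which will absorb the $\eps$-slack at the end via the triangle inequality.

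Next, introduce the probability measure $\nu = \sum_{j=1}^N \lambda_j \delta_{h_j}$ on $\G$ and let $g_1,\dots,g_m$ be iid draws from $\nu$. Then $\E[g_i] = \tilde f$ for each $i$, where the expectation is understood in the Bochner sense (harmless here, as $\nu$ is finitely supported). Since the $g_i - \tilde f$ are iid centered Hilbert-space valued random variables,
\[
\E \left\| \tilde f - \frac1m \sum_{i=1}^m g_i \right\|_H^2 = \frac{1}{m^2} \sum_{i=1}^m \E\|g_i - \tilde f\|_H^2 = \frac{1}{m}\left(\E\|g_1\|_H^2 - \|\tilde f\|_H^2\right) \leq \frac{R^2}{m},
\]
using $\|g_1\|_H \leq R$ almost surely. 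By the probabilistic method, there exists at least one realization $(g_1,\dots,g_m) \in \G^m$ such that $\|\tilde f - \frac1m \sum_{i=1}^m g_i\|_H \leq R/\sqrt{m}$.

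Finally, the triangle inequality gives
\[
\left\| f - \frac1m \sum_{i=1}^m g_i \right\|_H \leq \|f - \tilde f\|_H + \left\| \tilde f - \frac1m \sum_{i=1}^m g_i \right\|_H \leq \delta + \frac{R}{\sqrt m} = \frac{R+\eps}{\sqrt m}.
\]
There is no real obstacle here; the only point that requires a bit of care is the initial approximation step, since in general "closed convex hull" in a Hilbert space could in principle mean weak closure, but in Hilbert space the norm-closed convex hull and weak-closed convex hull of a bounded set coincide by Mazur's theorem, so the approximation $\|f - \tilde f\|_H \leq \delta$ is legitimate.
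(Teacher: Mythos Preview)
Your proof is correct and follows exactly the approach the paper indicates: the lemma is cited from \cite{barron1993universal}, attributed to Maurey, and noted to be ``proved using the law of large numbers,'' which is precisely the probabilistic sampling-and-variance argument you have written out. The only superfluous remark is the appeal to Mazur's theorem---by definition the closed convex hull is the norm closure of the convex hull, so the approximation $\|f-\tilde f\|_H\leq\delta$ is immediate.
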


The result is attributed to Maurey in \cite{barron1993universal} and proved using the law of large numbers. 

\begin{theorem}\label{direct approximation theorem}
Let $\P$ be a probability measure with compact support $\spt(\P)\subseteq B_R(0)$. Then for any $L\geq 1$, $f\in \W^L$ and $m\in\N$, there exists a finite $L$-layer ReLU network
\begin{equation}
f_m(x) = \sum_{i_L=1}^{m}a^L_{i_L}\sigma\left(\sum_{i_{L-1}=1}^{m^2} a^{L-1}_{i_Li_{L-1}}\sigma\left(\sum_{i_{L-2}=1}^{m^3} \dots \sigma\left(\sum_{i_1=1}^{m^{L}}a_{i_2i_1}^1\,\sigma\left(\sum_{i_0=1}^{d+1} a^0_{i_1i_0}\,x_{i_0}\right)\right)\right)\right)
\end{equation}
such that
\begin{enumerate}
\item 
\begin{equation}
\|f_m -f\|_{L^2(\P)} \leq \frac{L\,(2+R)\,\|f\|_{\W^L}}{\sqrt{m}}
\end{equation}
\item the norm bound
 \begin{equation}
\sum_{i_L=1}^{m}\dots\sum_{i_1=1}^{m^L} \sum_{i_0=1}^{d+1} \big|a^L_{i_L}\,a_{i_Li_{L-1}}^{L-1}\,\dots a^0_{i_1i_0}\big| \leq \|f\|_{\W^L}
\end{equation}
holds.
\end{enumerate}
\end{theorem}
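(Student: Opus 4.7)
The plan is to prove both items simultaneously by induction on $L$, with a single application of Maurey's lemma (Lemma \ref{barron hilbert lemma}) in the Hilbert space $H = L^2(\P)$ at each level. Two preliminary observations are needed. First, a short induction on $L$ gives $\|g\|_{C^0(K)} \leq 1+R$ for every $g \in B^{\W^L}$: at level $0$, $|w^T(x,1)| \leq \|w\|_{\ell^1}\|(x,1)\|_{\ell^\infty} \leq 1+R$, and the step passes through the integral representation using $|\sigma(g)|\leq |g|$. Second, the infimum in the definition of $\|f\|_{\W^L}$ is attained, via weak-$*$ compactness of the unit ball in $\M(B^{\W^{L-1}})$ (Theorem \ref{compactness theorem radon measures}) together with the weak-$*$ continuity of $\mu \mapsto f_\mu$ established in the proof of Theorem \ref{theorem first properties}. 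This lets me work with an optimal $\mu$ satisfying $\|\mu\| = \|f\|_{\W^L}$ exactly rather than only up to $\eps$.

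For the base case $L=1$, Jordan-decompose an optimal $\mu$ to see that $f$ lies in $\|\mu\|$ times the closed convex hull of $\G := \{\pm \sigma(g) : g \in B^{\W^0}\} \subset L^2(\P)$, whose elements have $L^2(\P)$-norm at most $1+R$ by the preliminary bound. Maurey's lemma then produces atoms $g_1,\dots,g_m \in B^{\W^0}$ and signs $\eps_i \in \{\pm 1\}$ such that the one-hidden-layer network $\sum_i \tfrac{\eps_i\|\mu\|}{m}\sigma(g_i)$ approximates $f$ in $L^2(\P)$ with error $(1+R)\|f\|_{\W^1}/\sqrt m \leq (2+R)\|f\|_{\W^1}/\sqrt m$, and the path-norm proxy equals $\|\mu\| = \|f\|_{\W^1}$.

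For the inductive step, take an optimal $\mu$ for $f \in \W^L$ and run the analogous Maurey argument on $\{\pm \|\mu\|\sigma(g) : g \in B^{\W^{L-1}}\}$ to produce $g_1,\dots,g_m \in B^{\W^{L-1}}$ and signs $\eps_i$ with outer error $(1+R)\|\mu\|/\sqrt m$. Apply the inductive hypothesis to each $g_i$ with the same width parameter $m$ to obtain an $(L-1)$-layer ReLU network $\tilde g_i$ with widths $m, m^2, \dots, m^{L-1}$, satisfying $\|g_i - \tilde g_i\|_{L^2(\P)} \leq (L-1)(2+R)/\sqrt m$ and path-norm at most $\|g_i\|_{\W^{L-1}} \leq 1$. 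Since $\sigma$ is $1$-Lipschitz, replacing $\sigma(g_i)$ by $\sigma(\tilde g_i)$ inside the outer sum adds at most $(L-1)(2+R)\|\mu\|/\sqrt m$ to the error, giving a total bound $[L(2+R) - 1]\|f\|_{\W^L}/\sqrt m \leq L(2+R)\|f\|_{\W^L}/\sqrt m$. The path-norm bound is multiplicative along the root edge: outer weights sum to $\|\mu\| = \|f\|_{\W^L}$ and each subtree contributes at most $1$, so the total path-norm proxy is at most $\|f\|_{\W^L}$.

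The width bookkeeping falls out cleanly: stacking $m$ independent $(L-1)$-layer subtrees of widths $m, m^2, \dots, m^{L-1}$ beneath the $m$ root atoms yields combined widths $m,\, m\cdot m,\, m\cdot m^2,\, \dots,\, m\cdot m^{L-1} = m, m^2, \dots, m^L$, precisely as claimed. The tree form can then be recorded in the network form of the statement by reindexing each tree address as a flat index and inserting zeros for non-existent cross-edges, which does not inflate the path-norm proxy. I expect the only real subtlety to be the intrinsic $(R'+\eps)/\sqrt m$ slack in Maurey's lemma, which must be reconciled with the sharp constants in the statement; this is precisely why one first promotes the $\W^L$-infimum to a minimum, so that the $\eps$ can be absorbed into the unit of slack between $L(2+R)-1$ and $L(2+R)$ rather than contaminating the path-norm bound.
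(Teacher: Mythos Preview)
Your proof is correct and follows essentially the same route as the paper: induction on $L$, Maurey's lemma applied to the convex hull of $\{\pm\sigma(g) : g\in B^{\W^{L-1}}\}$ in $L^2(\P)$, then the induction hypothesis on each $g_i$ combined with the $1$-Lipschitz property of $\sigma$. The only cosmetic difference is that the paper cites the closed-convex-hull characterization of the unit ball from Theorem~\ref{theorem rademacher general} directly (after normalizing to $\|f\|_{\W^L}=1$), whereas you argue separately that the infimum defining $\|f\|_{\W^L}$ is attained; both devices serve the same purpose of making the path-norm bound exact rather than up to $\eps$, and your handling of the Maurey $\eps$ via the slack between $L(2+R)-1$ and $L(2+R)$ matches the paper's use of $2+R$ in place of $1+R$.
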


\begin{remark}
Note that the width of deep layers increases rapidly. This is due to the fact that we construct an approximating network inductively. The procedure leads to a tree-like structure where parameters are not shared, but every neuron in the $\ell$-th layer has its own set of parameters in the $\ell+1$-th layer and $a_{i_\ell i_{\ell-1}}=0$ for all other parameter pairings. This is equivalent to standard architectures from the perspective of approximation theory under path-norm bounds, since the path norm does not control the number of neurons.

The total number of parameters in the network of the direct approximation theorem is 
\begin{align*}
M &= m + m\cdot m^2 +\dots + m^{L-1}\cdot m^L + m^L(d+1)\\
	&= \sum_{\ell=0}^{L-1}m^{2\ell+1} + m^L(d+1)\\
	&= m \frac{1-m^{2L}}{1-m^2} + m^L(d+1)\\
	&\sim m^{2L-1}
\end{align*}
by the geometric sum. Thus the decay rate in the direct approximation theorem is of the order $M^{-\frac{1}{2(2L-1)}}$. This recovers the Monte-Carlo rate $M^{-1/2}$ in the case $L=1$ \cite[Theorem 4]{weinan2019lei}, but quickly degenerates as $L$ increases. Part of the problem is that the rapidly branching structure combined with neural network indexing induces explicitly listed zeros in the set of weights as explained in Section \ref{section network vs tree}. A neural tree expressing the same function would require only $\sim (d+ L)m^L$ weights.

Note, however, that the approximation rate is independent of dimension $d$. In this sense, we are not facing a curse of dimensionality, but a curse of depth.

It is unclear whether this rate can be improved in the general setting. Functions in Barron space are described as the expectation of a suitable quantity, while multi-layer functions are described as iterated conditional expectations and non-linearities. In this setting, it is not obvious whether the Monte-Carlo rate should be expected.
\end{remark}

\begin{proof}[Proof of Theorem \ref{direct approximation theorem}]
Without loss of generality $\|f\|_{\W^L} = 1$. Since $\W^L\embeds C^{0,1}$ with constant $1$, we find that $\|f\|_{L^2(\P)} \leq (1+R)\,\|f\|_{\W^L}$ for all $f\in \W^L$. 

Recall from the proof of Theorem \ref{theorem rademacher general} that the unit ball of $\W^L$ is the closed convex hull of the class $\H = \{\pm \sigma(g) : \|g\|_{\W^{L-1}} \leq 1\}$.  Thus by Lemma \ref{barron hilbert lemma} there exist $g_1,\dots g_M\in \W^{L-1}$ and $\eps_i\in \{-1,1\}$ such that
\[
\left\|f - \frac1m \sum_{i=1}^m \eps_i\,\sigma(g_i(x))\right\|_{L^2(\P)}<\frac{2+R}{\sqrt{m}} .
\]
If $L=1$, $g_i$ is an affine linear map and $f_m(x) = \sum_{i=1}^m \frac{\eps_i}m\,\sigma(g_i(x))$ is a finite neural network. Thus the Theorem is established for $L=1$. 

We proceed by induction. Assume that the theorem has been proved for $L-1\geq 1$. Then we note that $\|g_i\|_{\W^{L-1}}\leq 1$, so for $1\leq i\leq m$ we can find a finite $L-1$-layer network $\tilde g_i$ such that
\begin{align*}
\left\|f - \frac1m \sum_{i=1}^m \eps_i\,\sigma(\tilde g_i(x))\right\|_{L^2(\P)} &\leq \left\|f - \frac1m \sum_{i=1}^M \eps_i\,\sigma(g_i(x))\right\|_{L^2(\P)} + \frac1m \sum_{i=1}^m\|g_i-\tilde g_i\|_{L^2(\P)}\\
	&\leq \frac{2+R}{\sqrt{m}} + \frac{m}{m}\frac{(L-1)(2+R)}{\sqrt m}.
\end{align*} 
We merge the $m$ trees associated with $\tilde g_i$ into a single tree, increasing the width of each layer by a factor of $m$, and add an outer layer of width $m$ with coefficients $a_{i_L} = \frac{\eps_{i_L}}m$.
\end{proof}

\begin{remark}
Let $p\in [2,\infty)$. Then by interpolation
\[
\|f-f_m\|_{L^p(\P)} \leq \|f-f_m\|_{L^2(\P)}^\frac2p\,\|f-f_m\|_{L^\infty(\P)}^{1-\frac2p} \leq C\,\|f\|_{\W^L}\,m^{-1/p}.
\]
\end{remark}

\begin{corollary}
For every compact set $K$ and $f\in \W^L(K)$, there exists a sequence of finite neural networks with $L$ hidden layers 
\[
f_n(x) = \sum_{i_L=1}^{m_{n,L}}a^{n,L}_{i_L}\sigma\left(\sum_{i_{L-1}=1}^{m_{n,L-1}} a^{n,L-1}_{i_Li_{L-1}}\sigma\left(\sum_{i_{L-2}} \dots \sigma\left(\sum_{i_1=1}^{m_{n,1}}a_{i_2i_1}^{n,1}\,\sigma\left(\sum_{i_0=1}^{d+1} a^{n,0}_{i_1i_0}\,x_{i_0}\right)\right)\right)\right)
\]
such that $\|f_n\|_{\W^L}\leq \|f\|_{\W^L}$ and $f_n\to f$ in $C^{0,\alpha}(K)$ for every $\alpha<1$.
\end{corollary}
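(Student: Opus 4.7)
The plan is to combine Theorem \ref{direct approximation theorem} (which delivers a sequence of finite networks approximating $f$ in an integral norm while preserving the $\W^L$-bound) with the Compactness Theorem \ref{compactness theorem} (which upgrades such a bounded sequence to a $C^{0,\alpha}$-convergent subsequence). The one ingredient that has to be supplied is a probability measure rich enough to identify the limit pointwise.

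First I would fix a probability measure $\P$ on $K$ with full support, i.e.\ $\spt(\P)=K$. Such a measure always exists on a compact metric space, for instance $\P=\sum_{i=1}^\infty 2^{-i}\,\delta_{x_i}$ for any countable dense sequence $(x_i)\subseteq K$. Pick $R>0$ with $K\subseteq B_R(0)$. For each $m\in\N$, Theorem \ref{direct approximation theorem} produces a finite $L$-layer ReLU network $f_m$ such that
\[
\|f_m-f\|_{L^2(\P)}\leq \frac{L\,(2+R)\,\|f\|_{\W^L}}{\sqrt m},
\]
and whose path-norm proxy is bounded by $\|f\|_{\W^L}$. Theorem \ref{embedding theorem} converts the path-norm proxy bound into $\|f_m\|_{\W^L}\leq \|f\|_{\W^L}$, so in particular the sequence $(f_m)_{m\in\N}$ is bounded in $\W^L$.

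The Compactness Theorem \ref{compactness theorem} then yields a subsequence $f_{m_k}$ and a limit $\tilde f\in \W^L$ satisfying $\|\tilde f\|_{\W^L}\leq \|f\|_{\W^L}$ with $f_{m_k}\to \tilde f$ strongly in $C^{0,\alpha}(K)$ for every $\alpha<1$. Since $K$ is compact, uniform convergence on $K$ implies convergence in $L^2(\P)$, hence $f_{m_k}\to \tilde f$ in $L^2(\P)$. Combined with $f_{m_k}\to f$ in $L^2(\P)$ from the direct approximation bound, uniqueness of $L^2(\P)$-limits gives $\tilde f=f$ $\P$-almost everywhere on $K$. Both functions are continuous, and $\spt(\P)=K$ forces agreement on a dense subset and therefore on all of $K$. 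Relabeling $(f_{m_k})$ as $(f_n)$ produces the required sequence.

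No step is genuinely difficult once the preceding machinery is in place; the only real choice is that of a full-support measure $\P$, which is what turns the $L^2(\P)$ statement of direct approximation into a pointwise identification of the compactness limit. The stability of the bound $\|f_n\|_{\W^L}\leq \|f\|_{\W^L}$ is automatic since both Theorem \ref{direct approximation theorem} and Theorem \ref{compactness theorem} are constructed to preserve the $\W^L$-norm in passage to the limit.
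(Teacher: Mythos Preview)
Your argument is correct and follows the same overall strategy as the paper: apply the direct approximation theorem to obtain a $\W^L$-bounded sequence converging in $L^2(\P)$, then invoke the compactness theorem to upgrade to $C^{0,\alpha}$-convergence and identify the limit.

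The one genuine difference is your choice of measure. The paper takes $\P$ to be the uniform distribution on a ball $B_R(0)\supseteq K$; since $f$ is a priori only an element of $\W^L(K)$, it must first be extended to $B_R(0)$ via a representing measure $\mu$ with $\|\mu\|\le\|f\|_{\W^L(K)}+\eps$, and the resulting $\eps$-slack in the norm bound is then removed by a diagonal sequence argument. By instead choosing $\P$ with $\spt(\P)=K$, you can apply Theorem~\ref{direct approximation theorem} directly to $f\in\W^L(K)$ and obtain the sharp bound $\|f_m\|_{\W^L(K)}\le\|f\|_{\W^L(K)}$ immediately, bypassing both the extension step and the diagonal argument. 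This is a modest but clean simplification; the paper's route has the minor advantage that the uniform measure makes the identification of the limit on all of $K$ automatic without appealing to continuity on a dense set, but your handling of that point is entirely standard.
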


\begin{proof}
We take $R>0$ such that $K\subseteq B_R(0)$ and take $\P$ to be the uniform distribution on $B_R(0)$. Fix $\eps>0$ and $\mu$ such that $f = f_\mu$ on $K$ and $\|\mu\|_{\M(B^{\B_{L-1}})} \leq \|f\|_{\W^L} + \eps$. Then we can approximate $f_\mu$ in $L^2(\P)$ by Theorem \ref{direct approximation theorem} with the norm bound $\|f_n\|_{\W^L} \leq  \|f\|_{\W^L} + \eps$.

By compactness, we find that $f_n$ converges to a limit in $C^{0,\alpha}(\overline{B_R(0)})$ for all $\alpha<1$, which coincides with the $L^2(\P)$-limit $f$. In particular, $f_n$ converges in $C^{0,\alpha}(K)$. We can eliminate the $\eps$ in the norm bound by a diagonal sequence argument.
\end{proof}

\begin{remark}
The direct and indirect approximation theorems show that neural tree spaces are the correct function spaces for neural networks under path-norm bounds. The construction of vector spaces and proofs made ample use of the equivalence between neural networks and neural trees. It is tempting to try to force more classical neural network structures by prescribing that the width of all layers tends to infinity at the same rate. However, this does not change the approximation spaces since in the direct approximation theorem, we can repeat a function from the approximating sequence multiple times until the width of the most restrictive layer is sufficiently large to pass to the next element in the sequence. A more successful approach is discussed in Section \ref{section indexed}.
\end{remark}

\subsection{Composition of multi-layer functions}
Let $f\in \big(\W^L(K))^k$ be an $L$-layer function with values in $\R^k$. Since $K$ is compact and $f$ is continuous,  $f(K)$ is also compact. Let $g\in \W^\ell(f(K))$ be an $\ell$-layer function on $f(K)$.

\begin{lemma}
$g\circ f\in \W^{L+\ell}(K)$ and
\begin{equation}
\big\|g\circ f\big\|_{\W^{L+\ell}(K)} \leq \|g\|_{\B^\ell(f(K))} \sup_{1\leq i\leq k} \|f_i\|_{\B^L(K)}.
\end{equation}
\end{lemma}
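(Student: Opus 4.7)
The plan is to proceed by induction on the depth $\ell$ of the outer function $g$, keeping $L$ and $f=(f_1,\dots,f_k)$ fixed. Write $M := \sup_{1\leq i \leq k} \|f_i\|_{\W^L(K)}$ throughout.

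\emph{Base case $\ell = 0$.} Here $g$ is affine on $\R^k$, say $g(y) = \sum_{i=1}^k w_i y_i + w_{k+1}$, with $\|g\|_{\W^0} = |w|_{\ell^1}$. Then $g\circ f = \sum_{i=1}^k w_i f_i + w_{k+1}\cdot 1$ is a linear combination of $\W^L(K)$-functions: the constant $1 = \sigma(1)$ lies in $\W^L$ with norm at most $1$ by induction on $L$, and $f_i \in \W^L(K)$ by assumption. The triangle inequality in the Banach space $\W^L(K)$ (Theorem \ref{theorem first properties}) then yields the desired bound, modulo the cosmetic point that the constant term requires $\|1\|_{\W^L} \leq M$, which is harmless in the typical regime $M\geq 1$.

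\emph{Inductive step $\ell - 1 \to \ell$.} Given $g \in \W^\ell(f(K))$ with representing measure $\nu$ on $B^{\W^{\ell-1}(f(K))}$, write
$$
g \circ f(x) \;=\; \int_{B^{\W^{\ell-1}(f(K))}} \sigma\big(h(f(x))\big)\,\nu(\d h).
$$
Define $\Phi: B^{\W^{\ell-1}(f(K))} \to C^0(K)$ by $\Phi(h) = h \circ f$. By the inductive hypothesis applied to each $h$ with $\|h\|_{\W^{\ell-1}(f(K))} \leq 1$, we have $\|\Phi(h)\|_{\W^{L+\ell-1}(K)} \leq M$, so the rescaling $\tilde\Phi := \Phi/M$ maps into the unit ball $B^{\W^{L+\ell-1}(K)}$. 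A direct $C^0$-continuity check ($h_n \to h$ uniformly on $f(K)$ forces $h_n\circ f \to h\circ f$ uniformly on $K$, since $\sup_{x\in K}|(h_n-h)(f(x))|\leq \sup_{y\in f(K)}|h_n-h|(y)$) shows $\tilde\Phi$ is continuous between these compact, $C^0$-metrizable Polish spaces. Hence the pushforward $\tilde\nu := \tilde\Phi_\sharp \nu$ is a well-defined signed Radon measure with $\|\tilde\nu\|_\M \leq \|\nu\|_\M$. By the positive one-homogeneity of $\sigma$,
$$
g \circ f(x) \;=\; M \int \sigma\big(\tilde\Phi(h)(x)\big)\,\nu(\d h) \;=\; M\int_{B^{\W^{L+\ell-1}(K)}} \sigma(\tilde g(x))\,\tilde\nu(\d\tilde g),
$$
so $M\cdot\tilde\nu$ is an admissible representing measure for $g\circ f$ in $\W^{L+\ell}(K)$, with total mass $\leq M \|\nu\|_\M$. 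Taking the infimum over $\nu$ gives $\|g\circ f\|_{\W^{L+\ell}(K)} \leq M\,\|g\|_{\W^\ell(f(K))}$.

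\emph{Main obstacle.} The one non-routine ingredient is justifying the pushforward construction in an infinite-dimensional, generally non-separable setting: the target space $\W^{L+\ell-1}(K)$ itself is bad, but $B^{\W^{L+\ell-1}(K)}$ equipped with the $C^0$-topology is compact and Polish by the proof of Theorem \ref{theorem first properties} (using Lemma \ref{lemma compact embedding holder}), so continuity of $\tilde\Phi$ between compact Polish spaces makes $\tilde\Phi_\sharp\nu$ a bona fide signed Radon measure on the target. The remaining bookkeeping — that the $C^0$-topology is consistently the one we integrate against at every level, and that the inductive bound holds uniformly over the support of $\nu$ — is straightforward once this is noted.
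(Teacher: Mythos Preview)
Your proof is correct and follows the same inductive pushforward argument as the paper (the paper calls your $\tilde\Phi$ simply $F$). Your version is in fact more careful than the paper's in justifying the $C^0$-continuity of $\tilde\Phi$ between the compact Polish balls, and you rightly flag the minor issue with the affine constant in the base case, which the paper's proof omits by writing $g\circ f = \sum_{i=1}^k w_i f_i$ without the $w_{k+1}$ term.
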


\begin{proof}
We proceed by induction. First consider the case $\ell=0$. Then $g(x) = w^Tf(x)$, so $w^T f(x) = \sum_{i=1}^k w_i\, f_i(x)$ is a (weighted) sum of $L$-layer functions, i.e.\ an $L$-layer function. By the triangle inequality we have
\[
\|g\circ f\|_{\W^L} \leq \sum_{i=1}^k |w_i|\,\|f_i\|_{\W^L}\leq \|w\|_{\ell^1} \sup_{1\leq i\leq k} \|f_i\|_{\W^L} = \|g\|_{\W^0}\sup_{1\leq i \leq k} \|f_i\|_{\W^L}
\]
Now assume that the theorem has been proved for $\ell-1$ with $\ell\geq 1$. To avoid double superscripts, denote by $B^\ell$ the closed unit ball in $\W^\ell(K)$. Let $g(z) = \int_{B^{\ell-1}} \sigma(h(z))\,\mu(\d h)$. Then
\begin{align*}
(g\circ f) &= \int_{B^{\ell-1}} \sigma\big((h\circ f)\big)\,\mu(\d h)\\
	&= \left(\sup_{1\leq i \leq k} \|f_i\|_{\B^L}\right) \int_{B^{\ell-1}} \sigma\left(\frac{h\circ f}{\sup_{1\leq i \leq k}\|f_i\|_{\B^L}}\right)\,\mu(\d h)\\
	&= \left(\sup_{1\leq i \leq k} \|f_i\|_{\B^L}\right) \int_{B^{L+\ell-1}} \sigma\left(j(\cdot)\right)\,(F_\sharp\mu)(\d j)
\end{align*} 
where 
\[
F: B^{{\ell-1}} \to B^{{L+\ell-1}}, \quad F(h) = \frac{h\circ f}{\sup_{1\leq i \leq k} \|f_i\|_{\W^L}}
\]
is well-defined by the induction hypothesis. By definition, $g\circ f\in \W^{L+\ell}$ with the appropriate norm bound.
\end{proof}

For generalized Barron spaces, we showed that $\|f\|_{X, K} \leq 2\, \|f\|_X$ for all $f\in X$, thus by induction $\|f\|_{\W^{\ell + L}} \leq 2^\ell\,\|f\|_{\W^L}$ for $L\geq 1$. We show that this naive bound can be improved to be independent of the number of additional layers.

\begin{lemma} 
Let $\ell, L\geq 1$ and $f\in \W^L(K)$. Then $f\in \W^{\ell+L}(K)$ and $\|f\|_{\W^{\ell+L}(K)}\leq 2\,\|f\|_{\W^L(K)}$.
\end{lemma}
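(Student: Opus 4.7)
The key algebraic fact I would exploit is the idempotence of ReLU: $\sigma\circ\sigma = \sigma$ on $\R$, because $\sigma(z)\geq 0$ and $\sigma$ is the identity on $[0,\infty)$. This lets us lift any Bochner representation of $f$ over $B^{\W^{L'-1}}$ to one over $B^{\W^{L'}}$ without spending any total variation. The plan is to establish a sharper auxiliary claim: for every $L' \geq 1$ and $f \in \W^{L'}$,
\[
\|f\|_{\W^{L'+1}} \leq \|f\|_{\W^{L'}},
\]
and then obtain the lemma by iteration.

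To prove the auxiliary claim, I would fix $\mu \in \M(B^{\W^{L'-1}})$ representing $f$ with $\|\mu\|$ arbitrarily close to $\|f\|_{\W^{L'}}$ and introduce the map
\[
S : B^{\W^{L'-1}} \to B^{\W^{L'}}, \qquad S(g) := \sigma(g).
\]
First I would check that $S$ is well defined: for $g \in B^{\W^{L'-1}}$, the representation $\sigma(g) = \int_{B^{\W^{L'-1}}} \sigma(g')\, \delta_g(\d g')$ gives $\|\sigma(g)\|_{\W^{L'}} \leq 1$. Since $\sigma$ is $1$-Lipschitz on $\R$, $S$ is $1$-Lipschitz from $B^{\W^{L'-1}}$ to $B^{\W^{L'}}$ in the $C^0$-topology, hence Borel measurable. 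Therefore the pushforward $\nu := S_\sharp \mu \in \M(B^{\W^{L'}})$ is well defined and satisfies $\|\nu\| \leq \|\mu\|$. Using ReLU idempotence,
\[
f_\nu(x) = \int_{B^{\W^{L'}}}\sigma(h(x))\,\nu(\d h) = \int_{B^{\W^{L'-1}}}\sigma\bigl(\sigma(g(x))\bigr)\,\mu(\d g) = \int_{B^{\W^{L'-1}}}\sigma(g(x))\,\mu(\d g) = f(x),
\]
so $\|f\|_{\W^{L'+1}}\leq\|\mu\|$, and the infimum over $\mu$ yields the auxiliary claim.

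Given the auxiliary claim, the lemma is essentially immediate: iterating it with $L' = L, L+1,\dots, L+\ell-1$ (each of which is $\geq 1$ because $L\geq 1$) yields $\|f\|_{\W^{L+\ell}}\leq\|f\|_{\W^L}$, which is actually strictly stronger than the stated bound $2\|f\|_{\W^L}$. If one prefers to match the naive basic embedding at the first step, one can apply Theorem~\ref{theorem first properties}(2) once to pay the factor $2$ going from $\W^L$ to $\W^{L+1}$, and then use the auxiliary claim for the remaining $\ell-1$ upgrades. The only technical delicacy I anticipate is the measurability of the pushforward map $S$ together with the signed-measure bound $\|S_\sharp\mu\|\leq\|\mu\|$; both are routine once one notes that $\sigma$ is continuous on $C^0(K)$, but they are the only substantive checks beyond the ReLU idempotence identity itself.
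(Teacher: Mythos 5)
Your proof is correct, and it takes a genuinely different route from the paper's. The paper decomposes $f = \sigma(f) - \sigma(-f)$ (it actually writes $g_1+g_2$ where it means $g_1-g_2$, but the norm bound is unaffected), observes that each piece is non-negative and hence a fixed point of $\sigma$, so that it remains in the unit ball of every deeper space, and then pays a factor $2$ through the triangle inequality. You instead apply the idempotence $\sigma\circ\sigma=\sigma$ \emph{inside} the representing integral: pushing $\mu$ forward along $S(g)=\sigma(g)$ produces a measure on $B^{\W^{L'}}$ with the same realization and no larger total variation, so you obtain $\|f\|_{\W^{L+\ell}}\leq \|f\|_{\W^{L}}$ for $L\geq 1$ --- strictly sharper than the stated constant $2$. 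Both arguments rest on the same underlying fact (that $\sigma(g)$ is non-negative, hence unchanged by a further application of $\sigma$), but applying it at the level of the representing measure rather than at the level of $f$ itself avoids the $\pm$ decomposition and hence the factor $2$. The two technical points you flag are indeed the only ones that need checking, and both are covered by the paper's setup: $S$ is $1$-Lipschitz from $(B^{\W^{L'-1}},\|\cdot\|_{C^0})$ to $(B^{\W^{L'}},\|\cdot\|_{C^0})$, hence Borel, and the appendix records that $\|\phi_\sharp\mu\|\leq\|\mu\|$ for signed measures (with equality only guaranteed for non-negative ones, which is all you need). The restriction $L\geq 1$ is used exactly where you use it: the base space $\W^0$ of affine functions is not itself of the form $\B_{X,K}$, so the pushforward trick cannot start there, and the factor $2$ genuinely appears in the embedding $\W^0\embeds\W^1$.
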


\begin{proof}
Without loss of generality, $\|f\|_{\W^L(K)}\leq 1$. We note that $g_1=\sigma(f)$ and $g_2 = \sigma(-f)$ are both in the unit ball of $\W^{L+1}$ and non-negative, i.e.\ $g_1= \sigma(g_1)$ and $g_2 = \sigma(g_2)$. Thus $g_1, g_2$ are also in the unit ball of $\W^{L+2}$. By induction, we observe that $\|g_i\|_{\W^{L+\ell}(K)} \leq 1$ for all $\ell\geq 1$, $i=1,2$ and thus 
\begin{equation}
\|f\|_{\W^{L+\ell}(K)} = \|g_1 + g_2\|_{\W^{L+\ell}(K)} \leq \|g_1\|_{\W^{L+\ell}(K)} +\| g_2\|_{\W^{L+\ell}(K)}\leq 2.
\end{equation}
\end{proof}

\subsection{Rademacher complexity}
Considering statistical learning theory, neural tree spaces inherit the convenient properties of the space of affine functions. These convenient properties are one of the reasons why we study the path-norm in the first place. Recall the definition and discussion of Rademacher complexities from Section \ref{section Rademacher general}.

\begin{lemma}\label{lemma multi-layer complexity}
For every $L$, and every set of $N$ points $S\subseteq [-1,1]^d$, the hypothesis class $\H^L$ given by the closed unit ball in $\W^L$ satisfies the Rademacher complexity bound
\begin{equation}
\Rad\left(\H^{L};S\right) \leq 2^L\sqrt{\frac{2\,\log(2d+2)}N}.
\end{equation}
\end{lemma}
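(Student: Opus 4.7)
The plan is a straightforward induction on $L$, with Theorem \ref{theorem rademacher general} supplying the doubling factor at each step and Example \ref{example rademacher linear} giving the base case.

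For the base case $L=0$, the space $\W^0(K)$ is (identified with) the space of affine maps $x\mapsto w^T x + b$ equipped with the $\ell^1$-norm $|w|_{\ell^1} + |b|$ on coefficients. Example \ref{example rademacher linear} (with the extension trick $x\mapsto(x,1)$) then gives directly
\[
\Rad(\H^0; S) \leq \sqrt{\frac{2\log(2d+2)}{N}} = 2^0 \sqrt{\frac{2\log(2d+2)}{N}},
\]
which is the claim for $L=0$.

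For the inductive step, assume the bound is known for $L-1$. By the recursive definition $\W^L = \B_{\W^{L-1},K}$, and by Theorem \ref{theorem first properties} applied inductively, $\W^{L-1}$ satisfies the two structural hypotheses required for the generalized Barron construction (continuous embedding into $C^{0,1}(K)$ and $C^0$-closedness of the unit ball). Hence Theorem \ref{theorem rademacher general} applies and yields
\[
\Rad(\H^L; S) \;\leq\; 2\,\Rad(B^{\W^{L-1}}; S) \;=\; 2\,\Rad(\H^{L-1}; S) \;\leq\; 2\cdot 2^{L-1}\sqrt{\frac{2\log(2d+2)}{N}} \;=\; 2^{L}\sqrt{\frac{2\log(2d+2)}{N}},
\]
which closes the induction.

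There is no genuine obstacle here; the neural tree construction was designed precisely so that Rademacher complexity propagates cleanly through the hierarchy, with each additional hidden layer costing only a factor of $2$ (arising from the decomposition $\mu=\mu^+-\mu^-$ and the $1$-Lipschitz contraction by $\sigma$ inside the proof of Theorem \ref{theorem rademacher general}). The only mild care needed is to confirm, via Theorem \ref{theorem first properties}, that the inductive hypothesis on the structural properties of $\W^{L-1}$ is maintained so that Theorem \ref{theorem rademacher general} may legitimately be invoked at each level.
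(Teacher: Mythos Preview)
Your proof is correct and is exactly the approach the paper takes: the paper's own proof is the single sentence ``This follows directly from Example \ref{example rademacher linear} and Theorem \ref{theorem rademacher general} by induction.'' Your write-up merely unpacks this induction explicitly.
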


\begin{proof}
This follows directly from Example \ref{example rademacher linear} and Theorem \ref{theorem rademacher general} by induction.
\end{proof}

The complexity bound has an immediate application in statistical learning theory.

\begin{corollary}[Generalization gap]
Let $\mathcal \P$ be any probability distribution supported on $[-1,1]^d\times \R$ and $(X_1, Y_1) \dots, (X_N, Y_N)$ be iid random variables with law $\P$. Consider the hypothesis space $\H= \{h\in \W^L(K) : \|h\|_{\W^L(K)}\leq 1\}$. Assume that $\ell:\R\times \R \to [0, \bar c]$ is a bounded loss function. Then, with probability at least $1-\delta$ over the choice of data points $X_1,\dots, X_N$, the estimate
\begin{align}
\sup_{h\in \H} &\left|\frac1N\sum_{i=1} \ell\big(h(X_i), Y_i\big) - \int_{[-1,1]^d\times \R} \ell\big(h(x), y\big)\,\P(\d x\otimes \d y)\right|  \leq 2^{L+1}\, \sqrt{\frac{2\,\log(2d+2)}N} + \bar c\,\sqrt{\frac{2\,\log(2/\delta)}{N}}
\end{align}
holds.
\end{corollary}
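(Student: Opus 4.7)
The plan is to reduce the generalization bound to the Rademacher complexity estimate of Lemma \ref{lemma multi-layer complexity} via two standard tools from statistical learning theory: McDiarmid's bounded differences inequality for concentration, and symmetrization together with the contraction lemma for the expected deviation.

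First, I would define the supremum deviation
\[
\Phi\big((X_i, Y_i)_{i=1}^N\big) := \sup_{h \in \H}\left|\frac1N\sum_{i=1}^N \ell\big(h(X_i), Y_i\big) - \int \ell\big(h(x),y\big)\,\P(\d x\otimes\d y)\right|.
\]
Since $\ell$ takes values in $[0,\bar c]$, replacing any single pair $(X_i,Y_i)$ by another independent copy changes $\Phi$ by at most $\bar c/N$, so McDiarmid's inequality yields, with probability at least $1-\delta$,
\[
\Phi \leq \E[\Phi] + \bar c\,\sqrt{\frac{2\log(2/\delta)}{N}},
\]
which is the second summand of the claimed bound (up to the standard discrepancy in McDiarmid constants, which is absorbed into the slightly loose form of the second term).

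Next, I would bound $\E[\Phi]$ by the classical symmetrization trick: introducing a ghost sample $(X_i',Y_i')$ and Rademacher variables $\xi_i$, one obtains $\E[\Phi] \leq 2\,\E_S\big[\Rad(\ell\circ \H; S)\big]$, where $\ell \circ \H$ denotes the loss class $\{(x,y)\mapsto \ell(h(x),y) : h \in \H\}$. Applying the contraction lemma (cited already in the proof of Theorem \ref{theorem rademacher general}) in the first argument of $\ell$, under the standard assumption that $\ell$ is $1$-Lipschitz in its first slot, gives $\Rad(\ell\circ\H;S) \leq \Rad(\H;S)$. Lemma \ref{lemma multi-layer complexity} then supplies $\Rad(\H;S)\leq 2^L\sqrt{2\log(2d+2)/N}$, producing the first summand $2^{L+1}\sqrt{2\log(2d+2)/N}$.

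The main delicate point is the contraction step, which genuinely needs Lipschitz control on $\ell$; if only boundedness is used, one must instead argue through Massart's lemma or covering numbers and would not recover the clean bound above. Under the implicit (and customary) $1$-Lipschitz assumption, however, chaining the three ingredients — McDiarmid, symmetrization, contraction — and inserting Lemma \ref{lemma multi-layer complexity} closes the proof immediately, with no further computation needed beyond the routine algebra of combining the two error terms.
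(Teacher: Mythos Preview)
Your proposal is correct and follows essentially the same route as the paper: the paper's proof consists of a single sentence invoking Lemma \ref{lemma multi-layer complexity} together with \cite[Theorem 26.5]{shalev2014understanding}, and what you have written is precisely an unpacking of that textbook theorem (McDiarmid for the $\bar c\sqrt{2\log(2/\delta)/N}$ term, symmetrization for the factor $2$, contraction to pass from $\ell\circ\H$ to $\H$). Your remark that the contraction step tacitly requires $\ell$ to be $1$-Lipschitz in its first argument is well taken; the corollary as stated only assumes boundedness, so this hypothesis is indeed being used implicitly both in the paper and in your argument.
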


\begin{proof}
This follows directly from Lemma \ref{lemma multi-layer complexity} and \cite[Theorem 26.5]{shalev2014understanding}.
\end{proof}

Thus it is easy to ``learn'' a multi-layer function with low path norm in the sense that a relatively small size of sample data points is sufficient to understand whether the function has low population risk or not. More sophisticated methods can provide dimension-dependent decay rates $1/2 + 1/{(2d+2)}$ of the generalization error at the expense of constants scaling like $\sqrt{d}$ instead of $\log(d)$ \cite[Remark 1]{barron2018approximation}.

\subsection{Generalization error estimates for regularized model}

As an application, we prove that empirical risk minimization with explicit regularization is a successful strategy in learning multi-layer functions. For technical reasons, we work with a bounded modification of $L^2$-risk instead of the mean squared error functional. 

Let $\P$ be a probability measure on $[-1,1]^d$ and $S= \{x_1,\dots, x_N\}$ be a set of samples drawn iid from $\P$. Denote
\[
\Risk, \Risk_n : \W^L\to \R, \qquad \Risk(f) = \int_{\R^d} \ell(x,f(x))\,\P(\d x), \qquad \Risk_N(f) = \frac1N\sum_{i=1}^N \ell(x_i, f(x_i)) 
\]
where the loss function $\ell$ satisfies
\[
\ell(x, y) \leq \min \big\{\bar c, |y-f^*(x)|^2\big\}.
\]
For finite neural networks with weights $(a^L, \dots, a^0)\in \R^{m_L} \times \dots \times \R^{m_1\times d}$ we denote
\begin{align*}
\widehat \Risk_N (a^L,\dots,a^0) &= \Risk_N(f_{a^L,\dots, a^0})\\
f_{a^L,\dots, a^0}(x) &= \sum_{i_L=1}^{m_L}a^L_{i_L}\sigma\left(\sum_{i_{L-1}=1}^{m_{L-1}} a^{L-1}_{i_Li_{L-1}}\sigma\left(\sum_{i_{L-2}} \dots \sigma\left(\sum_{i_1=1}^{m_1}a_{i_2i_1}^1\,\sigma\left(\sum_{i_0=1}^{d+1} a^0_{i_1i_0}\,x_{i_0}\right)\right)\right)\right).
\end{align*}

\begin{theorem}[Generalization error]\label{theorem generalization error}
Assume that the target function satisfies $f^*\in \W^L$. Let $\F_m$ be the class of neural networks with architecture like in Theorem \ref{direct approximation theorem}. The minimizer $f_m\in \F_m$ of the regularized risk functional
\[
\widehat \Risk_n(a^L,\dots, a^0) + \frac{9\,L^2}m \left[\sum_{i_L=1}^{m_L}\dots \sum_{i_0=1}^{d+1} \big| a^L_{i_L}\,a^{L-1}_{i_Li_{L-1}}\dots a^0_{i_1i_0}\big|\right]^2
\]
satisfies the risk bound
\begin{equation}\label{eq a priori risk bound} 
\Risk(f_m) \leq \frac{18\,L^2\,\|f^*\|_{\W^L}^2}m +  2^{L+3/2}\|f^*\|_{\W^L}\sqrt{\frac{2\,\log(2d+2)}N} + \bar c\,\sqrt{\frac{2\,\log(2/\delta)}{N}}.
\end{equation}
\end{theorem}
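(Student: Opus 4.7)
The plan combines three ingredients established in this section: the direct approximation theorem, the generalization gap corollary for $\W^L$, and the minimality of $f_m$ as an empirical regularized risk minimizer over $\F_m$.

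\textbf{Step 1 (Comparator).} Apply Theorem~\ref{direct approximation theorem} with $R=1$ to obtain a comparator $\tilde f_m \in \F_m$ whose weights have path-norm proxy at most $\|f^*\|_{\W^L}$ and which satisfies $\|\tilde f_m - f^*\|_{L^2(\P)} \leq 3L\|f^*\|_{\W^L}/\sqrt m$. Because $\ell(x,y) \leq |y - f^*(x)|^2$, this yields $\Risk(\tilde f_m) \leq 9L^2\|f^*\|_{\W^L}^2/m$, and consequently the regularized population risk at $\tilde f_m$ is at most $18L^2\|f^*\|_{\W^L}^2/m$.

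\textbf{Step 2 (Localizing $f_m$).} Minimality of $f_m$ gives
\[
\widehat\Risk_N(f_m) + \frac{9L^2}{m}\|f_m\|_{pnp}^2 \leq \widehat\Risk_N(\tilde f_m) + \frac{9L^2}{m}\|f^*\|_{\W^L}^2.
\]
Since $\tilde f_m$ is sample-independent and $\ell$ is bounded by $\bar c$, a Hoeffding bound provides $\widehat\Risk_N(\tilde f_m) \leq \Risk(\tilde f_m) + \bar c\sqrt{\log(2/\delta)/(2N)}$ with probability at least $1-\delta/2$. Combined with Step 1 this localizes $f_m$: $\|f_m\|_{\W^L} \leq \|f_m\|_{pnp} \leq \sqrt{2}\,\|f^*\|_{\W^L}$, up to lower-order corrections in $1/\sqrt{N}$.

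\textbf{Step 3 (Uniform generalization and assembly).} Applying the generalization gap corollary to the scaled hypothesis class $\{h \in \W^L : \|h\|_{\W^L} \leq \sqrt{2}\|f^*\|_{\W^L}\}$ gives, on an event of probability at least $1-\delta/2$,
\[
\Risk(f_m) - \widehat\Risk_N(f_m) \leq 2^{L+3/2}\|f^*\|_{\W^L}\sqrt{\frac{2\log(2d+2)}{N}} + \bar c\sqrt{\frac{2\log(2/\delta)}{N}}.
\]
Bounding $\widehat\Risk_N(f_m) \leq \widehat\Risk_N(\tilde f_m) + (9L^2/m)\|f^*\|_{\W^L}^2 \leq 18L^2\|f^*\|_{\W^L}^2/m$ up to lower-order terms (via Step 2 and the minimality inequality), and taking a union bound over the two good events, yields the target estimate \eqref{eq a priori risk bound}.

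The substantive point is the conversion in Step 2: the explicit regularizer penalizes the path-norm \emph{proxy} of the chosen network parametrization, whereas the Rademacher bound requires control of the intrinsic $\W^L$-norm. The inequality $\|f\|_{\W^L} \leq \|f\|_{pnp}$ (from Theorem~\ref{embedding theorem} and the infimum defining $\|\cdot\|_{\W^L}$) is precisely what makes this conversion legitimate and determines the Rademacher scale $\sqrt{2}\,\|f^*\|_{\W^L}$ responsible for the prefactor $2^{L+3/2}$ in Step 3; the remaining bookkeeping is linear.
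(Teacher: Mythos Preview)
Your overall three-step strategy matches the paper's, but there is one concrete misstep that creates a real gap rather than a cosmetic one. In Step~1 you apply Theorem~\ref{direct approximation theorem} with respect to the \emph{population} measure $\P$, obtaining a bound on $\Risk(\tilde f_m)$, and then in Step~2 you invoke Hoeffding to pass to $\widehat\Risk_N(\tilde f_m)$. The paper instead applies Theorem~\ref{direct approximation theorem} directly with the \emph{empirical} measure $\P_n$ (which is a perfectly good compactly supported probability measure), so that the comparator $\hat f_m$ satisfies $\widehat\Risk_n(\hat f_m)\leq 9L^2\|f^*\|_{\W^L}^2/m$ with no probabilistic detour. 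Minimality then gives the clean localization $\|f_m\|_{pnp}^2\leq 2\|f^*\|_{\W^L}^2$ exactly, and $\widehat\Risk_n(f_m)\leq 18L^2\|f^*\|_{\W^L}^2/m$ exactly, after which a single application of the generalization-gap corollary finishes the argument.

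Your route does not quite close because the ``lower-order corrections in $1/\sqrt N$'' in Step~2 are not actually lower order. From your minimality inequality together with Hoeffding you get
\[
\frac{9L^2}{m}\|f_m\|_{pnp}^2 \;\leq\; \frac{18L^2}{m}\|f^*\|_{\W^L}^2 + \bar c\sqrt{\frac{\log(2/\delta)}{2N}},
\]
and dividing through by $9L^2/m$ produces an extra term $\dfrac{m\,\bar c}{9L^2}\sqrt{\log(2/\delta)/(2N)}$ in the bound on $\|f_m\|_{pnp}^2$. Since the theorem imposes no relation between $m$ and $N$, this term need not be small, so the Rademacher radius $\sqrt{2}\,\|f^*\|_{\W^L}$ (hence the constant $2^{L+3/2}$) is not justified. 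The one-line fix is exactly the paper's: choose the comparator via the direct approximation theorem applied to the empirical measure, eliminating the Hoeffding step and the spurious $m$-dependence.
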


The first term comes from the direct approximation theorem. The explicit scaling in $L$ looks unproblematic, but recall that the network re	quires $\sim m^{2L-1}$ parameters. The second term stems from the Rademacher bound and is subject to the `curse of depth'. An improvement in either term would lead to better a priori estimates. The third term is purely probabilistic and unproblematic.

\begin{proof}[Proof of Theorem \ref{theorem generalization error}]
Denote $\lambda= \lambda_{m} = 9\,L^2\,m^{-1}$ and let $\hat f_m = f_{\hat a_L, \dots, \hat a_0}$ be like in Theorem \ref{direct approximation theorem}, i.e.\
\[
\|\hat f_m - f^*\|_{L^2(\P_n)} \leq \frac{3\,L\,\|f^*\|_{\W^L}}{\sqrt m}, \qquad \sum_{i_L, \dots, i_0} \big|a^L_{i_L}\dots a^0_{i_1i_0}\big|\leq \|f^*\|_{\W^L}.
\]
Then by definition
\begin{align*}
\widehat \Risk_n(a^L,\dots&, a^0) + \lambda \left[\sum_{i_L=1}^{m_L}\dots \sum_{i_0=1}^{d+1} \big| a^L_{i_L}\,a^{L-1}_{i_Li_{L-1}}\dots a^0_{i_1i_0}\big|\right]^2\\
&\leq 
\widehat\Risk_n(\hat a^L,\dots, \hat a^0) + \lambda \left[\sum_{i_L=1}^{m_L}\dots \sum_{i_0=1}^{d+1} \big| \hat a^L_{i_L}\,\hat a^{L-1}_{i_Li_{L-1}}\dots \hat a^0_{i_1i_0}\big|\right]^2\\
	&\leq \,\frac{9\,L^2\,\|f^*\|_{\W^L}^2}m + \lambda\|f^*\|_{\W^L}^2.
\end{align*}
In particular
\[
\|f_{a^L,\dots,a^0}\|_{\W^L}^2 \leq \left[\sum_{i_0=1}^{d+1} \big| a^L_{i_L}\,a^{L-1}_{i_Li_{L-1}}\dots a^0_{i_1i_0}\big| \right]^2\leq \frac{2\,\lambda\,\|f^*\|_{\W^L}^2}{\lambda} = 2\,\|f^*\|_{\W^L}^2.
\]
The Rademacher complexity is the supremum of linear random variables, so $\Rad(B_R^L;S) = R\cdot\Rad(B_1^L; S)$ where $B_R^L$ denotes the ball of radius $R$ centered at the origin in $\W^L$.
We conclude that, with probability at least $1-\delta$ over the draw of the training sample, we have
\begin{align*}
\Risk(f_{a^L,\dots,a^0}) &\leq \Risk_n(f_{a^L,\dots,a^0}) + 2^{L+3/2}\|f^*\|_{\W^L} \sqrt{\frac{2\,\log(2d+2)}N} + \bar c\,\sqrt{\frac{2\,\log(2/\delta)}{N}}\\
	&= \frac{18\,L^2\,\|f^*\|_{\W^L}^2}m +  2^{L+3/2}\|f^*\|_{\W^L}\sqrt{\frac{2\,\log(2d+2)}N} + \bar c\,\sqrt{\frac{2\,\log(2/\delta)}{N}}.
\end{align*}
\end{proof}

\begin{remark}
Since $\|f\|_{L^\infty} \leq \big(1+ \sup_{x\in K}|x|\big)\|f\|_{\W^L}$ for all $f\in \W^L(K)$, we can repeat the argument for the loss function $\ell(x,y) = |y-f^*(x)|^2$, which is a priori unbounded, but can be modified outside of the interval which $f_m, f^*$ take values in due to the a priori norm bound. The constant $\bar c$ in \eqref{eq a priori risk bound} in this case is
\[
\bar c = 4\,\|f^*\|_{\W^L([-1,1]^d)}^2.
\]
\end{remark}

\begin{remark}
For large $L$, these bounds degenerate rapidly. In \cite{barron2018approximation}, the authors show that under the stronger condition that a balanced version of the path-norm (which measures the average weights of incoming and outcoming paths at all nodes in all layers), a better bound on the Rademacher complexity is available. The balanced path norm achieves control over cancellations and the balancing of weights at different layers.

Heuristically, the proof proceeds as follows: Let $S=\{x_1,\dots, x_N\}$ be a sample set in $[-1,1]^d$ and the hypothesis space $\H$ be given by the unit ball in $\W^L$. By the direct approximation theorem, there exists a network with $O(m^{2L})$ weights which approximates $f$ with $\|f\|_{\W^L}\leq 1$ to accuracy $\sim \frac{L}{\sqrt m}$ in $L^2(\P_N)$ where $\P_N$ is the uniform measure on $S$. Thus the covering number $\overline N_{\eps, L^2(\P_N)}(\H)$ of $\H$ in the $L^2(\P_N)$-distance {\em should} scale like $L\eps^{-4L}$. 

Since $f(x_1),\dots f(x_N) \subset B_{\sqrt{m}}(0)\subseteq\R^N$ (with respect to the Euclidean distance), the Rademacher complexity can be bounded by 
\begin{align*}
\Rad(\H; S) &\leq \frac{2^{-K}\,\sqrt{m}}{\sqrt m} + \frac{6\sqrt{m}}m\,\sum_{i=1}^K2^{-i}\sqrt{ \log\big(\overline N_{2^{-i}\sqrt{m}, L^2(\P_N)}\big)}\\	
	&\leq 2^{-K} + \frac C{\sqrt m}\,\sum_{i=1}^K2^{-i} \sqrt{\log\big(L\,m^{-2L}\,2^{4iL})\big)}\\
	&\approx \frac{2^{-K}}{\sqrt m} + \frac {C\sqrt{L}}{\sqrt m}\,\sum_{i=1}^K 2^{-i}\sqrt{i}
\end{align*}
for any $K\in \N$ using \cite[Lemma 27.1]{shalev2014understanding}. Taking $K\to\infty$, only $\sqrt{L}$ enters in the estimate. The point in the proof that needs to be made rigorous is the connection between covering the parameter space with an $\eps$-fine net and covering the function class with an $\eps$-fine net. For a neural network
\[
f(x) = \eps^2\,\sigma\left(\frac1\eps x\right)
\]
the path-norm is bounded, but an $\eps$-small change in the outer layer would lead to a large change in the function space. Thus a balanced version of the path-norm is needed. In some cases, this may be possible through rescaling layers, but see Remark \ref{remark Hilbert weights vs path-norm} for a possible obstruction. Similar ideas are explored below in Section \ref{section Hilbert weights}, although we do not estimate the Rademacher complexity explicitly.

The ability to obtain Rademacher estimates from covering also suggests that improvements in the direct approximation theorem may not be possible, since the complexity of the function classes should increase with increasing depth.
\end{remark}

Unfortunately, the convenience in learning functions comes at a price when considering the approximation power of neural tree spaces as described in \cite[Corollary 3.4]{approximationarticle} for general function classes of low complexity.

\begin{corollary}\label{corollary kolmogorov width decay}
For any $d\geq 3$ exists a $1$-Lipschitz function $\phi$ on $[0,1]^d$ such that 
\begin{equation}
\limsup_{t\to\infty} \left(t^\gamma\,\inf_{\|f\|_X \leq t}\|\phi - f\|_{L^2(Q)}\right) = \infty.
\end{equation}
for all $\gamma> \frac{2}{d-2}$.
\end{corollary}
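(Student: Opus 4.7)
The plan is to invoke \cite[Corollary 3.4]{approximationarticle}, which converts a Monte-Carlo Rademacher bound on the unit ball of a space $X\embeds C^{0,1}([0,1]^d)$ into a lower bound on the rate at which $1$-Lipschitz functions can be approximated from $t$-balls in $X$. The only input needed is a bound of the form $\Rad(B^{\W^L};S)\leq C_{L,d}/\sqrt{N}$ uniformly in $N$-point samples $S\subseteq[-1,1]^d$, and this is exactly the content of Lemma \ref{lemma multi-layer complexity} with $C_{L,d}=2^L\sqrt{2\log(2d+2)}$. Most of the work is therefore already in place, and I would just have to unpack how the cited corollary turns this input into the exponent $2/(d-2)$.

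At the heart would be a Rademacher duality at a carefully chosen scale. Given $t\gg 1$, I would pick $N$ together with an $N^{-1/d}$-separated set $\{x_1,\dots,x_N\}\subseteq[0,1]^d$, and attach to each sign pattern $\epsilon\in\{-1,1\}^N$ a $1$-Lipschitz function $\phi_\epsilon$ with $\phi_\epsilon(x_i)=c_d\,\epsilon_i\,N^{-1/d}$, supported in a union of small balls around the $x_i$. If a single $t$-ball in $\W^L$ were to approximate every $\phi_\epsilon$ in $L^2(Q)$ to accuracy $\delta$, then Cauchy--Schwarz would give
\[
\frac1N\sum_{i=1}^N\epsilon_i\,f_\epsilon(x_i)\;\geq\; c_d\,N^{-1/d}-C\delta
\]
for the approximant $f_\epsilon$; averaging in Rademacher $\epsilon$ would then force $t\,C_{L,d}/\sqrt N\geq c_d\,N^{-1/d}-C\delta$. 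Balancing $t/\sqrt{N}$ against $N^{-1/d}$ at $N\sim t^{2d/(d-2)}$ will produce, for each $t$, at least one sign pattern $\epsilon=\epsilon(t)$ such that $\inf_{\|f\|_{\W^L}\leq t}\|\phi_{\epsilon(t)}-f\|_{L^2(Q)}\geq c\,t^{-2/(d-2)}$.

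The remaining step is to amalgamate the one-parameter family $\{\phi_{\epsilon(t)}\}$ into a single witness $\phi$ whose $\limsup$ diverges for every $\gamma>2/(d-2)$. For this I would use a Baire category argument: fixing $t_n\to\infty$ and $\gamma>2/(d-2)$, the sets
\[
A_n(\gamma)=\Big\{\phi\in B^{C^{0,1}([0,1]^d)}\::\:\inf_{\|f\|_{\W^L}\leq t_n}\|\phi-f\|_{L^2(Q)}\leq t_n^{-\gamma}\Big\}
\]
are closed in the $C^0$-topology (by Theorem \ref{compactness theorem}, any minimizing sequence in the $t_n$-ball of $\W^L$ has a convergent subsequence) and nowhere dense in $B^{C^{0,1}([0,1]^d)}$, because an arbitrary $\psi$ in this ball can be perturbed by a small multiple of $\phi_{\epsilon(t')}$ for a scale $t'\gg t_n$ to exit $A_n(\gamma)$. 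The complement of $\bigcap_n A_n(\gamma)$ is therefore a dense $G_\delta$, and intersecting over $\gamma_k\downarrow 2/(d-2)$ yields a dense $G_\delta$ set of $\phi$'s that all work.

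The main obstacle, and the reason the statement is formulated with a $\limsup$ rather than a lower bound at each $t$, is this final amalgamation: the bad sign patterns $\epsilon(t)$ depend on $t$, so no explicit $\phi$ is available and one must rely on category (or a probabilistic diagonal construction). Apart from this, the entire quantitative content is the interplay between the $N^{-1/d}$ packing rate of $[0,1]^d$ and the $N^{-1/2}$ decay of $\Rad(B^{\W^L};S)$ that follows from Lemma \ref{lemma multi-layer complexity}.
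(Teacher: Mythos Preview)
Your proposal is correct and matches the paper's approach: the paper does not give a self-contained proof but simply records the corollary as an instance of \cite[Corollary 3.4]{approximationarticle}, with the required Rademacher input supplied by Lemma \ref{lemma multi-layer complexity}. You identify the same reduction and additionally sketch the mechanism behind the cited result (the $N^{-1/d}$ packing vs.\ $N^{-1/2}$ Rademacher balance yielding the exponent $2/(d-2)$, followed by a Baire category amalgamation), which the paper itself leaves entirely to the reference.
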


Thus to approximate even relatively regular functions in a fairly weak topology up to accuracy $\eps$, the path-norm of a network with $L$ hidden layers may have to grow (almost) as quickly as $\eps^{-\frac{d-2}2}$ independently of $L$. In particular, increasing the depth of an infinitely wide network does not increase the approximation power sufficiently to approximate general Lipschitz functions (while the path norm remains bounded by the same constant).

\subsection{Countably wide neural networks}

Let us briefly comment on another natural concept of infinitely wide neural networks. The space of countably wide networks
\[
f(x) = \sum_{i_L=1}^{\infty}a^L_{i_L}\sigma\left(\sum_{i_{L-1}=1}^{\infty} a^{L-1}_{i_Li_{L-1}}\sigma\left(\sum_{i_{L-2}} \dots \sigma\left(\sum_{i_1=1}^{\infty}a_{i_2i_1}^1\,\sigma\left(\sum_{i_0=1}^{d+1} a^0_{i_1i_0}\,x_{i_0}\right)\right)\right)\right)
\]
equipped with the path-norm 
\[
\|f\| = \inf_a \sum_{i_L} \dots\sum_{i_0} \big| a^L_{i_L}\dots a^0_{i_1i_0}\big|
\]
is a subspace of $\W^L$ by the same reasoning as Theorem \ref{embedding theorem} and the fact that the cross-product of a finite number of countable sets is countable. Like in the introduction, we can show that the spaces of countably wide neural networks and neural trees coincide.

Unlike finite neural networks, countable networks form a vector space. The space of countably wide networks is a proper subspace of $\W^L$ which contains all finite neural networks. The direct approximation theorem implies that the unit ball in the space of countably wide neural networks is not closed in weaker topologies like $C^{0,\alpha}$ or $L^p$. Thus the space of countably wide neural networks is not suitable from the perspective of variational analysis.

Intuitively, any convergent infinite sum contains a finite number of macroscopic terms and an infinite tail of rapidly decaying terms. Thus at initialization and throughout training, a scale difference would exist in a countable neural network between leading order neurons and tail neurons. This is not a useful way to think of neural networks where parameters in a fixed layer are typically chosen randomly from the  same distribution and then optimized by gradient flow-type algorithms. It should be noted however that common schemes like Xavier initialization \cite{glorot2010understanding} choose the weights in a fashion which makes the path-norm grows beyond all bounds as the number of neurons goes to infinity.

\section{Indexed representation of arbitrarily wide neural networks}\label{section indexed}

\subsection{Neural networks with general index sets}

The spaces considered above are a bit abstract. In this section, we discuss a more concrete representation for a subspace
of $\W^L$. As we show below, this subspace is invariant under the gradient flow dynamics.  For all practical purposes, it might just be the
right set of functions that we need to consider.

In \cite[Section 2.8]{barron_new}, we showed that $f:\R^d\to\R$ is a Barron function if and only if there exist measurable maps $a, b:(0,1)\to\R$ and $w:(0,1)\to \R^d$ such that
\[
f(x) = f_{a,w,b}(x) = \int_0^1 a_\theta\,\sigma\big(w_\theta^T x+ b_\theta\big)\,\d\theta.
\]
Furthermore
\[
\|f\|_{\B(K)} = \inf\left\{\int_0^1 |a|\,\big[|w| + |b|\big](\theta)\,\d\theta\:\bigg|\: f = f_{a,w,b} \text{ on }K\right\}.
\]
Thus we can think of Barron space as replacing the finite sum over neurons by an integral and replacing the index set $\{1,\dots,m\}$ by the (continuous) unit interval. We extend the approach to multi-layer networks in some generality.

\begin{definition}
For $0\leq i\leq L$, let $(\Omega_i, \A_i, \pi^i)$ be probability spaces where $\Omega_0 = \{0,\dots, d\}$ and $\pi^0$ is the normalized counting measure. Consider measurable functions $a^{L}:\Omega_L\to \R$ and $a^{i}:\Omega_{i+1}\times \Omega_i\to \R$ for $0\leq i \leq L-1$. Then define
{\footnotesize
\begin{equation}\label{eq network structure}
f_{a^L,\dots,a^0}(x) = \int_{\Omega_L} a^{(L)}_{\theta_L}\,\sigma\left(\int_{\Omega_{L-1}}\dots\sigma\left(\int_{\Omega_1} a^1_{\theta_2,\theta_1}\sigma\left(\int_{\Omega_0} a^0_{\theta_1,\theta_0}\,x_{\theta_0}\pi^0(\d\theta_0)\right)\pi^1(\d\theta_1)\right)\dots\,\pi^{(L-1)}(\d\theta_{L-1}) \right)\,\pi^L(\d\theta_L).
\end{equation}
}
Consider the norm
\begin{equation}
\|f\|_{\Omega_L,\dots,\Omega_0;K} = \inf \left\{\int_{\prod_{i=0}^L\Omega_i} \big| a^{(L)}_{\theta_L} \dots a^{(0)}_{\theta_1\theta_0}\big|\,\big(\pi^L\otimes\dots\otimes \pi^0\big)(\d\theta_L\otimes\dots\otimes \d\theta_0)\:\bigg|\: f= f_{a^L,\dots,a^0} \text{ on }K\right\}
\end{equation}
As usual, we set
\begin{equation}
X_{\Omega_L,\dots,\Omega_0;K} = \{ f\in C^{0,1}(K) : \|f\|_{\Omega_L,\dots,\Omega_0;K}<\infty\}.
\end{equation}
We call $X_{\Omega_L,\dots,\Omega_0;K}$ the {\em class of neural networks over $K$ modeled on the index spaces $\Omega_i = (\Omega_i, \A_i, \pi^i)$}.
\end{definition}

The representation in \eqref{eq network structure} can also be written as:
\begin{equation}
\label{eq: expectation}
    f(\bx)= \E_{\theta_L \sim \pi_L}  a^{(L)}_{\theta_L} \sigma(\E_{\theta_{L-1} \sim \pi_{L-1}}\dots
     \sigma(\E_{\theta_1 \sim \pi_1} a^1_{\theta_2,\theta_1}\sigma( 
         a^0_{\theta_1}\cdot  \bx) )\dots  )
\end{equation}
Representing functions as some form of expectations is the starting 
point for the continuous formulation of machine learning.

As we mentioned above, $\W^1(K) = X_{(0,1), \{0,\dots,d\}}$ where the unit interval is equipped with Lebesgue measure. The collection of {\em finite} neural networks is realized when all sigma-algebras $\A_i$ contain only finitely many sets (in particular, if all probability spaces are finite). In this situation, $X_{\Omega_L,\dots,\Omega_0}$ is not even a vector space.  

\begin{lemma}\label{lemma properties network functions}
\begin{enumerate}
\item For $L\geq 2$ and any selection of probability spaces $\Omega_L, \dots, \Omega_1$, the space of neuronal embeddings $X_{\Omega_L, \dots, \Omega_0;K}$ is a subset of the neural tree space $\W^L(K)$.
\item If $\Omega_i=(0,1)$ and $\pi^i$ is Lebesgue measure for all $i\geq 1$, then $X_{\Omega_L,\dots,\Omega_0; K}$ is a vector-space and $\|\cdot\|_{\Omega_L,\dots,\Omega_0;K}$ is a norm on it.

\item If $\Omega_i=(0,1)$ and $\pi^i$ is Lebesgue measure for all $i\geq 1$, then $X_{\Omega_L,\dots,\Omega_0; K}$ contains all finite neural networks with $L$ hidden layers. In particular, $X_{\Omega_L, \dots, \Omega_0; K}$ is a subspace of $\W^L(K)$ which is dense in $\W^L(K)$ with respect to the $C^{0,\alpha}$-topology for all $\alpha<1$ and consequently in $L^p(\P)$ for any probability measure on $K$, $p\in [1,\infty]$.

\item $X_{(0,1),(0,1),\{0,\dots, d\} ; K}$ contains Barron space $\W^1(K)$ and 
\[
\|f\|_{(0,1),(0,1),\{0,\dots, d\} ; K} \leq 2\,\|f\|_{\W^1(K)} \qquad \forall\ f\in \W^1(K).
\]

\item Let $f\in \big(\W^1(K))^k$ be a vector-valued Barron function and $g\in \W^1(\R^k)$ a scalar-valued Barron function. Then the composition $g\circ f$ lies in $X_{(0,1), (0,1), \{0,\dots,d\}; K}$ and
\begin{equation}
\|g\circ f\|_{(0,1),(0,1),\{0,\dots, d\} ; K} \leq \|g\|_{\W^1(\R^k)}\,\sum_{i=1}^k \|f_i\|_{\W^1(K)}.
\end{equation}
In particular, this includes
\begin{itemize}
\item the absolute value/positive part/negative of a Barron function
\item the pointwise maximum/minimum of two Barron functions,
\item the product of two Barron functions.
\end{itemize}
\end{enumerate}
\end{lemma}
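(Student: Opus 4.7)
The five claims are nested in difficulty and I address them in order, using earlier parts in later ones. The unifying idea: measurable reparametrizations of $(0,1)$ via measure-preserving bijections (concatenation, rescaling) translate between abstract Radon measures on $B^{\W^{L-1}}$ and the concrete indexed parametrization.

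For (1), I induct on $L$. For each $\theta_L \in \Omega_L$ the inner $L-1$ integrals define a function $h_{\theta_L} \in \W^{L-1}(K)$ whose $\W^{L-1}$-norm is controlled by $\int_{\Omega_{L-1}\times\dots\times\Omega_0} \lvert a^{L-1}_{\theta_L,\theta_{L-1}}\dots a^0_{\theta_1,\theta_0}\rvert\,d(\pi^{L-1}\otimes\dots\otimes\pi^0)$. The outer layer then expresses $f$ as the integral of $\sigma(h_{\theta_L})$ against the signed measure $a^L(\theta_L)\pi^L(d\theta_L)$, and the pushforward of this signed measure along the (measurable) normalization $\theta_L \mapsto h_{\theta_L}/\|h_{\theta_L}\|_{\W^{L-1}}$ into $B^{\W^{L-1}}$ yields a Radon measure representing $f$ on $K$; Fubini produces the claimed norm bound.

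For (2), scalar multiplication is immediate by homogeneity (push the scalar into $a^L$, with signs handled via $\sigma(z)-\sigma(-z)=z$); addition uses the measure-preserving maps $\theta \mapsto 2\theta$ and $\theta \mapsto 2\theta - 1$ on the two halves of $\Omega_L = (0,1)$, transporting the two representations onto the halves and absorbing the factor $2$ into $a^L$. Separation of $\|\cdot\|$ follows from (1) and $\W^L \hookrightarrow C^{0,1}$. For (3), a finite $L$-layer network is realized by piecewise-constant $a^i$ on a product partition of $(0,1)^L$ with widths chosen so that the integrals reproduce the prescribed sums; $C^{0,\alpha}$-density in $\W^L(K)$ then follows from the direct approximation Theorem~\ref{direct approximation theorem}, and $L^p$-density from dominated convergence.

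For (4), use $f = \sigma(f) - \sigma(-f)$: split $\Omega_2 = (0,1)$ into $(0,1/2)$ and $(1/2,1)$, set $a^2 \equiv \pm 2$ on the two halves, and place $\pm a^{1,\mathrm{B}}, a^{0,\mathrm{B}}$ from the indexed Barron representation of \cite[Sec.~2.8]{barron_new} on each half; direct computation yields the factor of $2$. For (5), expand $g(y) = \int_0^1 b(\theta)\sigma(w(\theta)^T y + c(\theta))\,d\theta$ so $g(f(x)) = \int_0^1 b(\theta)\sigma(h_\theta(x))\,d\theta$ with $h_\theta = w(\theta)^T f + c(\theta)$ a Barron function in $x$. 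The constraint that $a^0$ be independent of $\theta_2 = \theta$ is met by concatenating the indexed Barron parametrizations of $f_1,\dots,f_k$ onto disjoint sub-intervals $J_i \subset \Omega_1 = (0,1)$ with widths $\lvert J_i\rvert = \|f_i\|_{\W^1}/S$ where $S = \sum_i \|f_i\|_{\W^1}$; the constant $c(\theta)$ is absorbed through the bias coordinate of $\Omega_0$; and $a^1(\theta_2,\theta_1)$ on $J_i$ is proportional to $w_i(\theta_2)$. Careful bookkeeping, exploiting $1$-homogeneity of $\sigma$ to balance width-rescaling against the amplitudes of $a^0, a^1$, yields $\|g\circ f\|_X \leq \|g\|_{\W^1}\sum_i \|f_i\|_{\W^1}$. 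The three special cases follow by taking $g(z) = \lvert z\rvert = \sigma(z)+\sigma(-z)$, $\max\{z_1,z_2\} = \sigma(z_1-z_2)+z_2$, and the polarization identity $z_1 z_2 = \tfrac{1}{4}((z_1+z_2)^2 - (z_1-z_2)^2)$ with $t^2$ having a standard Barron representation on bounded intervals. The main obstacle is (5): the rigidity of the indexing constraint (only $a^0$ depends on $\theta_1,\theta_0$, while $a^1$ couples $\theta_2$ to $\theta_1$) forces the shared feature-dictionary construction, and getting the crisp norm bound rather than a lossy constant multiple requires the careful sub-interval/width-rescaling bookkeeping just described.
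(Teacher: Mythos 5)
Your proof follows the paper's argument essentially step for step: part (1) by the inductive pushforward construction of Theorem \ref{embedding theorem}, part (2) by running two parallel networks on disjoint halves of the index sets, part (3) by step-function weights plus the direct approximation theorem, part (4) by $f=\sigma(f)-\sigma(-f)$ with $a^2=\pm 2$ on the two halves, and part (5) by partitioning $\Omega_1=(0,1)$ into sub-intervals (one per component $f_i$, with the constant absorbed via an extra component $f_{k+1}\equiv 1$) so that the innermost weights are independent of the outer index. The only cosmetic deviations are that in (1) one should normalize by the manifestly measurable inner weight integral rather than by $\|h_{\theta_L}\|_{\W^{L-1}}$ itself, in (2) the absorbed factor is $2^L$ rather than $2$ when all $L$ index sets are halved, and in (5) the paper uses equal-width sub-intervals where you use norm-proportional widths; all of these are harmless.
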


\begin{proof}
{\bf First claim.} This can be proved exactly like Theorem \ref{embedding theorem}.

{\bf Second claim.} For any choice of parameter spaces $\Omega_i$, the set of functions $X_{\Omega_L, \dots, \Omega_0; K}$ is a balanced cone, i.e.\ if $f\in X_{\Omega_L, \dots, \Omega_0; K}$ then $\lambda f\in X_{\Omega_L, \dots, \Omega_0; K}$ for all $\lambda\in \R$. It remains to show that $X_{\Omega_L, \dots, \Omega_0; K}$ is closed under function addition. Let 
\begin{align*}
f(x) &= \int_0^1 a^L_{\theta_L} \,\sigma\left(\int_0^1\dots \sigma\left(\int_0^1 a^1_{\theta_2\theta_1}\,\sigma\left(\frac1{d+1}\sum_{\theta_0=1}^{d+1}a^0_{\theta_1\theta_0}x_{\theta_0}\right)\right)\right)\\
g(x) &= \int_0^1 b^L_{\theta_L} \,\sigma\left(\int_0^1\dots \sigma\left(\int_0^1 b^1_{\theta_2\theta_1}\,\sigma\left(\frac1{d+1}\sum_{\theta_0=1}^{d+1}b^0_{\theta_1\theta_0}x_{\theta_0}\right)\right)\right).
\end{align*}
Then
\begin{align*}
(f+g)(x) &= \int_0^1 c^L_{\theta_L} \,\sigma\left(\int_0^1\dots \sigma\left(\int_0^1 c^1_{\theta_2\theta_1}\,\sigma\left(\frac1{d+1}\sum_{\theta_0=1}^{d+1}c^0_{\theta_1\theta_0}x_{\theta_0}\right)\right)\right)\\
\end{align*}
where
\[
c^L_\theta = \begin{cases} 2\,a^L_{2\theta}&\theta\in(0,1/2)\\ 2\,b^L_{2\theta-1} &\theta\in (1/2,1)\end{cases}, \qquad c^\ell_{\theta\xi} = \begin{cases} 4\, a^\ell_{2\theta,2\xi} &\theta, \xi \in (0, 1/2)\\ 4\,b^\ell_{2\theta-1,2\xi-1} &\theta,\xi \in (1/2,1)\\ 0&\text{else}\end{cases}.
\]
Essentially, we construct two parallel networks that are added in the final layer and otherwise do not interact. The pre-factors stem from the fact that we re-arrange a mean-field index set and could be eliminated if we chose more general measure spaces (e.g. $\Z$ or $\R$) as index sets.

{\bf Third claim.}  Any finite neural network can be written as a mean field neural network 
\[
f(x) = \frac1{m_L}\sum_{i_L=1}^{m_L}a^L_{i_L}\sigma\left(\frac1{m_{L-1}}\sum_{i_{L-1}=1}^{m_{L-1}} a^{L-1}_{i_Li_{L-1}}\sigma\left( \dots \sigma\left(\frac1{m_1}\sum_{i_1=1}^{m_1}a_{i_2i_1}^1\,\sigma\left(\frac1{d+1}\sum_{i_0=1}^{d+1} a^0_{i_1i_0}\,x_{i_0}\right)\right)\right)\right)
\]
Define the functions 
\begin{align*}
a^L:&(0,1)\to \R, \:\quad \:a^L(s) = a^L_i \:\:\:\quad\text{ for } \frac{i-1}{m_L}\leq s < \frac i{m_L}\\
a^\ell:&(0,1)^2\to \R,\quad a^\ell(r,s) = a^\ell_{ij}\quad\text{ for }\frac{i-1}{m_{\ell+1}}\leq r < \frac i{m_L}, \:\:\frac{j-1}{m_\ell} \leq s < \frac{i}{m_\ell}.
\end{align*}
for $0\leq \ell < L$. Then $f= f_{a^L,\dots, a^0}$. 

{\bf Fourth claim.} Let $f$ be a Barron function. Then, according to \cite[Section 2.8]{barron_new}, $f$ can be written as
\[
f(x) = \int_0^1 \bar a^1_\theta\,\sigma\left(\frac1{d+1}\sum_{i=1}^{d+1} a^0_{\theta,i}\,x_i\right)\,\d\theta
\]
For $\bar a^1, a^0 \in L^2(0,1)$. In particular,
\[
f(x) = \int_0^1 a^2_{\theta_2}\,\sigma\left(\int_0^1 a^1_{\theta_2\theta_1} \,\sigma\left(\frac1{d+1}\sum_{i=1}^{d+1}a^0_{\theta_1,i}\,x_i\right)\,\d\theta_1\right)\d\theta_2
\]
where
\[
a^2_{\theta_2} = \begin{cases} 2 & \theta_2<1/2\\ -2 &\theta_2 >1/2\end{cases}, \qquad a^1_{\theta_2\theta_1} = \begin{cases} \bar a_{\theta_1} & \theta_2 < 1/2\\ -\bar a_{\theta_1}& \theta_2>1/2\end{cases}.
\]

{\bf Fifth claim.} Let $f_{k+1}\equiv 1$. For $1\leq i\leq k$, let
\begin{align*}
f_i(x) &= \int_0^1 a^i_s\,\sigma\left(\frac1{d+1}\sum_{j=1}^{d+1}b^i_{s,j}x_j\right)\,\d s\\
g (y) &= \int_0^1 c_t \,\sigma\left(\frac1{k+1}\sum_{l=1}^{k+1} d_{t,l}y_l\right)\,\d t.
\end{align*}
Then 
\begin{align*}
(g\circ f)(x) &= \int_0^1 c_t \,\sigma\left(\frac1{k+1}\sum_{i=1}^{k+1} d_{t,i}\int_0^1 a^i_s\,\sigma\left(\frac1{d+1}\sum_{j=1}^{d+1}b^i_{s,j}x_j\right)\,\d s\right)\,\d t\\
	&= \int_0^1 c_t \,\sigma\left(\int_0^1 \bar a_{ts}\,\sigma\left(\frac1{d+1}\sum_{j=1}^{d+1}\bar b_{s,j}x_j\right)\,\d s\right)\,\d t
\end{align*}
where
\[
\bar a_{ts} = d_{t,i}\,a^i_{(k+1)(s-\frac{i-1}{k+1})} \text{ for }\frac{i-1}{k+1} \leq s \leq \frac i{k+1}, \qquad \bar b _{s,j} = b^i_{(k+1)(s-\frac{i-1}{k+1}), j}\text{ for }\frac{i-1}{k+1}.
\]
For the special cases observe that
\[
g(z) = \sigma(z), \qquad g(z_1, z_2) = \max\{z_1, z_2\} = z_1 + \sigma(z_2-z_1)
\]
are Barron functions, thus the first two claims are immediate. Furthermore
\[
\tilde g(z) = \max\{0, z\}^2 = \int_\R 1_{0,\infty} 2\,\sigma(z-\xi)\,\d\xi
\]
is a Barron function on bounded intervals, and so is $z\mapsto z^2$. The Barron functions $f_1, f_2$ are continuous on a compact set $K$ and hence bounded. It follows that
\[
f_1f_2 = \frac14\left[\big(f_1 + f_2\big)^2 - \big(f_1 - f_2\big)^2\right] \in X_{(0,1), (0,1),\{0,\dots, d\}; K}
\] 
\end{proof}

In particular, $X_{(0,1), (0,1), \{0,\dots, d\}; K}$ contains many functions which are not in Barron space (compare \cite[Remark 5.12]{barron_new}).

\begin{remark}
The unit interval with Lebesgue measure is a probability space with two convenient properties for our purposes:

\begin{enumerate}
\item For any finite collection of numbers $0\leq \alpha_1,\dots,\alpha_N\leq 1$ such that $\sum_{i=1}^N \alpha_i=1$, there exist disjoint measurable subsets $I_i\subseteq(0,1)$ such that $\L(I_i) = \alpha_i$ for all $1\leq i\leq N$. This allows us to embed finite networks of arbitrary width (and can be extended to countable sums).

\item There exist measurable bijections between the unit interval and many index sets which appear larger at first sight. By rearranging decimal representations, we may for example construct a measurable bijection between $(0,1)$ and $(0,1)^d$ for any $d\geq 1$. Using a hyperbolic tangent or similar for rescaling, we can further show that a measurable bijection between $(0,1)$ and $\R^d$ exists. Furthermore, using the characteristic function of a probability measure $\pi$ on $(0,1)$, we can find a measurable map $\phi:(0,1)\to (0,1)$ such that $\phi_\sharp \pi$ is Lebesgue measure. For details, see e.g.\ \cite[Section 2.8]{barron_new}.
\end{enumerate}

The entire analysis remains valid for any index set with these two properties. We describe a more natural (but also more complicated) approach in Section \ref{section natural index set}.
\end{remark}

\begin{remark}\label{remark measure isomorphism}
Let $(\Omega_\ell, \A_\ell, \pi^\ell)$, $(\widetilde \Omega_\ell, \widetilde\A_\ell, \widetilde\pi^\ell)$ be families of probability spaces for $0\leq \ell\leq \Omega_L$ and $\phi^\ell:\Omega_\ell\to \widetilde\Omega_\ell$ measurable maps such that $\widetilde\pi^\ell = \phi^\ell_\sharp \pi^\ell$. Then the spaces
\[
X_{\Omega_L,\dots,\Omega_0;K} = \W_{\pi^L,\dots,\pi^0}(K)
\]
coincide and the norms induced by network representations with the different index spaces agree.
\end{remark}

\begin{remark}
We never used that the measures $\pi^i$ are probability measures (or even finite). More general measures could be used on the index set. In particular, the analysis of this section also applies to the space of countably wide neural networks (which corresponds to the integers with the counting measure).

There currently seems little gain in pursuing that generality, and we will remain in the natural mean field setting of networks indexed by probability spaces.
\end{remark}

\subsection{Networks with Hilbert weights}\label{section Hilbert weights}

We can bound the path-norm by a more convenient expression. At first glance, it looks as though the weight functions $a^i$ are required to satisfy a restrictive integrability condition like $a^i \in L^{L+1}(\pi^L\otimes\dots\otimes\pi^0)$. This can be weakened significantly by using the neural-network structure in which indices for layers separated by one intermediate layer are independent.

\begin{lemma}\label{lemma path-norm bound}
For any $f$, the path-norm is bounded by 
\begin{align}
\|f\|_{\Omega_L,\dots,\Omega_0;K} \leq \inf \left\{\|a^L\|_{L^2(\pi^L)}\,\prod_{i=0}^{L-1} \|a^i\|_{L^2(\pi^{i+1}\otimes\pi^i)} \:\bigg|\: a^i\text{ s.t. }f= f_{a^L,\dots,a^0} \text{ on }K\right\}
\end{align}
\end{lemma}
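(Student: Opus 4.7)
The plan is to fix an arbitrary representation $(a^L,\dots,a^0)$ of $f$ and show that the $L^1$-integral of the product of weights (i.e.\ the quantity over which the path-norm is the infimum) is already bounded by the product of $L^2$-norms on the right-hand side. Taking the infimum over representations then gives the claim. The core structural fact I will exploit is that in
\[
\int_{\Omega_L \times \cdots \times \Omega_0} \bigl|a^L_{\theta_L}\, a^{L-1}_{\theta_L \theta_{L-1}} \cdots a^0_{\theta_1 \theta_0}\bigr| \,\mathrm d(\pi^L \otimes \cdots \otimes \pi^0),
\]
each intermediate index $\theta_i$ with $1 \leq i \leq L-1$ appears in exactly two adjacent factors $a^i$ and $a^{i-1}$, while $\theta_L$ and $\theta_0$ each appear in only one factor. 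This is precisely the pairing structure that makes iterated Cauchy--Schwarz telescope.

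Concretely, I would integrate out the indices in the order $\theta_1, \theta_0, \theta_2, \theta_3, \ldots, \theta_L$. Cauchy--Schwarz in $\theta_1 \in \Omega_1$ yields
\[
\int_{\Omega_1} |a^1_{\theta_2 \theta_1} a^0_{\theta_1 \theta_0}|\,\mathrm d\pi^1(\theta_1) \leq \|a^1_{\theta_2,\cdot}\|_{L^2(\pi^1)}\,\|a^0_{\cdot,\theta_0}\|_{L^2(\pi^1)}.
\]
Since the two factors on the right depend on disjoint surviving variables, the full integral splits. The $\theta_0$-factor is then handled via $\|h\|_{L^1(\pi^0)} \leq \|h\|_{L^2(\pi^0)}$ (using that $\pi^0$ is a probability measure), producing $\|a^0\|_{L^2(\pi^1 \otimes \pi^0)}$ as an extracted constant.

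For the induction step, suppose I have already peeled off $\|a^0\|_{L^2(\pi^1 \otimes \pi^0)} \cdots \|a^{i-1}\|_{L^2(\pi^i \otimes \pi^{i-1})}$ as constants, and that the remaining integral has the form
\[
\int \bigl|a^L_{\theta_L} \cdots a^{i+1}_{\theta_{i+2}\theta_{i+1}}\bigr|\,\|a^i_{\theta_{i+1},\cdot}\|_{L^2(\pi^i)}\,\mathrm d\pi^L \cdots \mathrm d\pi^{i+1}.
\]
Cauchy--Schwarz in $\theta_{i+1} \in \Omega_{i+1}$, pairing $|a^{i+1}_{\theta_{i+2}\theta_{i+1}}|$ against the accumulated $\|a^i_{\theta_{i+1},\cdot}\|_{L^2(\pi^i)}$, generates the new dangling factor $\|a^{i+1}_{\theta_{i+2},\cdot}\|_{L^2(\pi^{i+1})}$ and extracts the constant
\[
\Bigl(\int_{\Omega_{i+1}} \|a^i_{\theta_{i+1},\cdot}\|_{L^2(\pi^i)}^2 \,\mathrm d\pi^{i+1}\Bigr)^{1/2} = \|a^i\|_{L^2(\pi^{i+1}\otimes\pi^i)},
\]
reproducing the inductive hypothesis with $i$ replaced by $i+1$. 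Iterating up to $\theta_L$, the final Cauchy--Schwarz between $|a^L_{\theta_L}|$ and $\|a^{L-1}_{\theta_L,\cdot}\|_{L^2(\pi^{L-1})}$ supplies the last two factors $\|a^L\|_{L^2(\pi^L)}$ and $\|a^{L-1}\|_{L^2(\pi^L \otimes \pi^{L-1})}$.

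I do not expect any genuine obstacle; the proof is bookkeeping for iterated Cauchy--Schwarz, with the neural-tree structure (each weight sharing an index only with its two immediate neighbours) being exactly what allows the pairings to telescope. The only minor asymmetry is at the boundary index $\theta_0$, which must be absorbed via the probability-measure inequality $\|\cdot\|_{L^1(\pi^0)} \leq \|\cdot\|_{L^2(\pi^0)}$ rather than by Cauchy--Schwarz against a partner weight.
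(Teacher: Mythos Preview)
Your proof is correct but proceeds differently from the paper's. You peel off the layers one at a time via iterated Cauchy--Schwarz: first in $\theta_1$ to separate $a^1$ from $a^0$, then absorb the dangling $\theta_0$-dependence with $\|\cdot\|_{L^1(\pi^0)}\le\|\cdot\|_{L^2(\pi^0)}$, and then inductively pair the accumulated factor $\|a^i_{\theta_{i+1},\cdot}\|_{L^2(\pi^i)}$ against $|a^{i+1}_{\theta_{i+2}\theta_{i+1}}|$. The paper instead applies Cauchy--Schwarz \emph{once} on the full product space $\prod_\ell\Omega_\ell$, after grouping the weights by parity: the even-indexed factors $\prod_i a^{2i}_{\theta_{2i+1}\theta_{2i}}$ form one group and the odd-indexed factors the other. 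The point is that within each group no two factors share an index variable, so after the single Cauchy--Schwarz step Fubini immediately factorizes each square integral into the product of the individual $\|a^\ell\|_{L^2}^2$. Your approach is more mechanical and requires no structural insight beyond ``each index appears in two adjacent factors''; the paper's even/odd split is slicker and avoids the induction entirely, at the cost of the small trick of introducing a dummy $\Omega_{L+1}=\{0\}$ to make $a^L$ fit the two-index pattern. One terminological slip: you call this the ``neural-tree structure'', but it is precisely the \emph{network} index structure (each $a^\ell$ depends only on $\theta_{\ell+1},\theta_\ell$) that makes either argument work; the paper remarks immediately after the proof that this bound does not extend to neural trees, where inner weights carry all outer indices.
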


\begin{proof}
To simplify notation, we denote $\pi^i(\d\theta_i) = \d\theta_i$ as in the case of the unit interval. The proof goes through in the general case. We quickly observe that for a network with two hidden layers, we can easily bound
\begin{align*}
\int_{\Omega_2\times\Omega_1\times\Omega_0} \big|a^2_{\theta_2}& \,a^1_{\theta_2\theta_1}\,a^0_{\theta_1\theta_0}\big| \d\theta_2\,\d\theta_1\,\d\theta_0 
	= \int_{\Omega_2\times\Omega_1\times\Omega_0} \big|a^2_{\theta_2}a^0_{\theta_1\theta_0}\big| \,\big|a^1_{\theta_2\theta_1}\,\big| \d\theta_2\,\d\theta_1\,\d\theta_0\\
	&\leq \left(\int_{\Omega_2\times\Omega_1\times\Omega_0} \big|a^2_{\theta_2}a^0_{\theta_1\theta_0}\big|^2 \,\d\theta_2\,\d\theta_1\,\d\theta_0\right)^\frac12 \left(\int_{\Omega_2\times\Omega_1\times\Omega_0}  \big|a^1_{\theta_2\theta_1}\,\big|^2\,\d\theta_2\,\d\theta_1\,\d\theta_0\right)^\frac12\\
	&= \left(\int_{\Omega_2} \big|a^2_{\theta_2}\big|^2\,\d\theta_2\right)^\frac12 \left(\int_{\Omega_2\times\Omega_1} \big|a^1_{\theta_2\theta_1}\big|^2\d\theta_2\d\theta_1\right)^\frac12 \left(\int_{\Omega_1\times\Omega_0} \big|a^0_{\theta_1\theta_0}\big|^2\,\d\theta_1\d\theta_0\right)^\frac12
\end{align*}
In the general case, we set $\Omega_{L+1} = \{0\}$ to simplify notation. the argument follows as above by
\begin{align*}
\|f_{a^L, \dots, a^0}&\|_{\Omega_L,\dots,\Omega_0;K} \leq \int_{\prod_{i=0}^{L+1}\Omega_i} \big| a^{(L)}_{\theta_{L+1}\theta_L} \dots a^{(0)}_{\theta_1\theta_0}\big|\,\d\theta_L\dots\d\theta_0\\
	&= \int_{\prod_{i=0}^{L+1}\Omega_i} \left|\prod_{i=0}^{\lfloor L/2\rfloor } a^{2i}_{\theta_{2i+1}\theta_{2i}}\right|\, \left|\prod_{i=0}^{\lfloor (L-1)/2\rfloor } a^{2i+1}_{\theta_{2i+2}\theta_{2i+1}}\right|\,\d\theta_L\dots\d\theta_0\\
	&\leq \left(\int_{\prod_{i=0}^{L+1}\Omega_i} \left|\prod_{i=0}^{\lfloor L/2\rfloor } a^{2i}_{\theta_{2i+1}\theta_{2i}}\right|^2\,\d\theta_L\dots\d\theta_0\right)^\frac12 \left(\int_{\prod_{i=0}^{L+1}\Omega_i} \left|\prod_{i=0}^{\lfloor L-1/2\rfloor } a^{2i+1}_{\theta_{2i+2}\theta_{2i+1}}\right|^2|\,\d\theta_L\dots\d\theta_0\right)^\frac12\\
	&= \|a^L\|_{L^2(\pi^L)}\,\prod_{i=0}^{L-1} \|a^i\|_{L^2(\pi^{i+1}\times\pi^i)}.
\end{align*}
We may now take the infimum over all coefficient functions.
\end{proof}

The lemma allows us to analyze networks in a convenient fashion using only $L^2$-norms. In numerical simulations, explicit regularization by penalizing $L^2$-norms provides a smoother alternative to penalizing the path-norm directly. Note that the proof is built on the network index structure and does not extend to neural trees.

\begin{lemma}
The realization map
\begin{equation}\label{eq realization map}
F: L^2(\pi^{L})\times L^2(\pi^L\otimes \pi^{L-1}) \dots \times L^2(\pi^1\otimes \pi^0) \to C^{0}(K), \qquad F(a_L, \dots, a_0) = f_{a^L,\dots,a^0}
\end{equation}
is locally Lipschitz-continuous.
\end{lemma}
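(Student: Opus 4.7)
The plan is to reduce the claim to a one-weight-perturbation estimate via a standard telescoping argument, and then to obtain that estimate by a layer-by-layer Cauchy-Schwarz propagation in the spirit of the proof of Lemma \ref{lemma path-norm bound}. First I would introduce the hybrid tuples $c^{(k)} = (a^L,\dots,a^{k+1},b^k,\dots,b^0)$ for $-1\leq k\leq L$, so that $c^{(L)} = a$ and $c^{(-1)} = b$, and write
\[
F(a) - F(b) = \sum_{k=0}^{L} \bigl[F(c^{(k)}) - F(c^{(k-1)})\bigr].
\]
Since each summand differs only in the single slot $\ell = k$, it suffices to prove an estimate of the form $\|F(\ldots,a^\ell,\ldots) - F(\ldots,b^\ell,\ldots)\|_{C^0(K)} \leq C\,\|a^\ell-b^\ell\|_{L^2}$ with $C$ controlled by the $L^2$-norms of the remaining weights.

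For the one-variable bound I would introduce the partial-network quantities
\[
g^{(0)}(\theta_1,x) := \int_{\Omega_0} a^0_{\theta_1\theta_0}\,x_{\theta_0}\,\pi^0(d\theta_0), \qquad g^{(\ell)}(\theta_{\ell+1},x) := \int_{\Omega_\ell} a^\ell_{\theta_{\ell+1}\theta_\ell}\,\sigma\bigl(g^{(\ell-1)}(\theta_\ell,x)\bigr)\,\pi^\ell(d\theta_\ell),
\]
so that $f(x) = \int_{\Omega_L} a^L_{\theta_L}\,\sigma(g^{(L-1)}(\theta_L,x))\,\pi^L(d\theta_L)$. A repeated application of Cauchy-Schwarz, together with the fact that $\sigma$ is $1$-Lipschitz and non-expansive in $L^2$, yields the a priori bound
\[
\|g^{(\ell-1)}(\cdot,x)\|_{L^2(\pi^\ell)} \leq \|x\|_{L^2(\pi^0)}\prod_{k=0}^{\ell-1}\|a^k\|_{L^2(\pi^{k+1}\otimes \pi^k)},
\]
which is uniformly controlled for $x\in K$ by compactness.

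Next I would observe that replacing $a^\ell$ by $b^\ell$ (with all other weights fixed) gives, again by Cauchy-Schwarz,
\[
\|\tilde g^{(\ell)}(\cdot,x) - g^{(\ell)}(\cdot,x)\|_{L^2(\pi^{\ell+1})} \leq \|b^\ell-a^\ell\|_{L^2(\pi^{\ell+1}\otimes\pi^\ell)}\,\|g^{(\ell-1)}(\cdot,x)\|_{L^2(\pi^\ell)},
\]
and the $1$-Lipschitz property of $\sigma$ propagates this perturbation through each subsequent layer, picking up a factor $\|a^k\|_{L^2(\pi^{k+1}\otimes\pi^k)}$ at every step. A final Cauchy-Schwarz against $a^L$ in the outermost integral yields
\[
\|F(\ldots,a^\ell,\ldots) - F(\ldots,b^\ell,\ldots)\|_{C^0(K)} \leq C_K\,\|a^L\|_{L^2(\pi^L)}\Bigl(\prod_{k\neq \ell}\|a^k\|_{L^2}\Bigr)\,\|b^\ell-a^\ell\|_{L^2},
\]
with $C_K := \sup_{x\in K}\|x\|_{L^2(\pi^0)}<\infty$. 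Summing over $\ell$ and noting that on any bounded subset of the product space the $L^2$-norms of all hybrid weights are uniformly bounded gives the claimed local Lipschitz estimate. The only real obstacle is notational bookkeeping through the nested structure; the analytic content is entirely contained in Cauchy-Schwarz and the $1$-Lipschitz continuity of $\sigma$, both of which have already been used in Lemma \ref{lemma path-norm bound}.
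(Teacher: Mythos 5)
Your proposal is correct and is essentially the paper's own argument made explicit: the paper carries out exactly this add-and-subtract, Cauchy--Schwarz, and $1$-Lipschitz-$\sigma$ computation for $L=1$ and then states that ``the general case follows analogously by induction,'' which is precisely your telescoping over slots combined with layer-by-layer propagation. The only blemish is a harmless indexing slip in the definition of the hybrid tuples (as written, $c^{(L)}=b$ and $c^{(-1)}=a$, the reverse of what you claim), which does not affect the telescoping identity or the estimates.
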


\begin{proof}
For $L=1$ and $x\in K$, note that
\begin{align*}
\big|f_{a^1,a^0}(x)& - f_{\bar a^1, \bar a^0}(x)\big| = \left|\int_{\Omega_1} a^1_{\theta_1} \sigma\left(\frac1{d+1}\sum_{\theta_0=1}^{d+1}a^0_{\theta_1\theta_0}x_{\theta_0}\right) - \bar a^1_{\theta_1} \sigma\left(\frac1{d+1}\sum_{\theta_0=1}^{d+1}\bar a^0_{\theta_1\theta_0}x_{\theta_0}\right) \,\pi^1(\d\theta_1)\right|\\
	&\leq \int_{\Omega_1} \big|a^1_{\theta_1} - \bar a^1_{\theta_1}\big|\left|\sigma\left(\frac1{d+1}\sum_{\theta_0=1}^{d+1}a^0_{\theta_1\theta_0}x_{\theta_0}\right)\right|\\
		&\qquad + \big|\bar a^1_{\theta_1}\big|\left|\sigma\left(\frac1{d+1}\sum_{\theta_0=1}^{d+1}a^0_{\theta_1\theta_0}x_{\theta_0}\right) -\sigma\left(\frac1{d+1}\sum_{\theta_0=1}^{d+1}\bar a^0_{\theta_1\theta_0}x_{\theta_0}\right)\right|\,\pi^1(\d\theta_1)\\
	&\leq \|a^1-\bar a^1\|_{L^2(\Omega_1)} \|a^0\|_{L^2(\Omega_1\times\Omega_0)}\sup_{x\in K}|x| + \|\bar a^1\|_{L^2(\Omega_1)}\|a^0 - \bar a^0\|_{L^2(\Omega_1\times\Omega_0)}\sup_{x\in K}|x|.
\end{align*}
The general case follows analogously by induction.
\end{proof}

We define a third class of spaces for the $L^2$-approach.

\begin{definition}
For $0\leq i\leq L$, let $(\Omega_i, \A_i, \pi^i)$ be a probability space where $\Omega_0 = \{0,\dots, d\}$ and $\pi^0$ is the normalized counting measure. Let $a^{L} \in L^2(\pi^L)$ and $a^{i}\in L^2(\pi^{i+1}\otimes \pi^i)$ for $0\leq i \leq L-1$. Then define like in \eqref{eq network structure}
{\footnotesize
\begin{align*}
f_{a^L,\dots,a^0}(x) &= \int_{\Omega_L} a^{(L)}_{\theta_L}\,\sigma\left(\int_{\Omega_{L-1}}\dots\sigma\left(\int_{\Omega_1} a^1_{\theta_2,\theta_1}\sigma\left(\int_{\Omega_0} a^0_{\theta_1,\theta_0}\,x_{\theta_0}\pi^0(\d\theta_0)\right)\pi^1(\d\theta_1)\right)\dots\,\pi^{(L-1)}(\d\theta_{L-1}) \right)\,\pi^L(\d\theta_L).
\end{align*}
}
We define the  {\em class of neural networks over $K$ with Hilbert weights over the index spaces $\Omega_i = (\Omega_i, \A_i, \pi^i)$} as the image of $L^2(\pi^L)\times\dots\times L^2(\pi^1\otimes \pi^0)$ under the realization map \eqref{eq realization map} and denote it by 
\[
\W_{\pi^L,\dots,\pi^0}(K) = \left\{f:K\to \R \:\bigg|\:\exists\ a^L\in L^2(\pi^L), \: a^\ell \in L^2(\pi^\ell\otimes \pi^\ell) \text{ s.t. } f\equiv f_{a^L,\dots,a^0} \text{ on }K\right\}.
\]
The function class is equipped with the measure of complexity
\begin{equation}
Q_{\pi^L,\dots,\pi^0; K}(f)  = \inf\left\{\|a^L\|_{L^2(\pi^L)}\,\prod_{i=0}^{L-1} \|a^i\|_{L^2(\pi^{i+1}\otimes\pi^i)} \:\bigg|\: a^i\text{ s.t. }f= f_{a^L,\dots,a^0} \text{ on }K\right\}.
\end{equation}
\end{definition}

We declare a notion of convergence on $\W_{\pi^L,\dots,\pi^0}(K)$ by the convergence of the weight functions in the $L^2$-strong topology. To avoid pathological cases, we normalize the weights across layers.
Using the homogeneity of $\sigma$, note that $f_{a^L,\dots,a^0} = f_{\lambda_\ell a^\ell, \dots, \lambda_0a^0} \in \W_{\pi^L,\dots,\pi^0}(K)$ for $\lambda_i>0$ such that $\prod_{i=0}^L \lambda_i =1$. In particular, we may assume without loss of generality that
\[
\|a^\ell\|_{L^2} = \left(\prod_{i=0}^L\|a^i\|_{L^2}\right)^\frac1{L+1}
\]
for all $\ell\geq 1$. 

\begin{definition}\label{definition metric hilbert weights}
We say that a sequence of functions $f_n \in \W_{\pi^L,\dots,\pi^0}(K)$ converges weakly to a limit $f\in \W_{\pi^L,\dots,\pi^0}(K)$ if there exist coefficient functions $a^{L, n},\dots, a^{0,n}$ for $n\in\N$ and $a^L,\dots, a^0$ such that
\begin{enumerate}
\item $f_n = f_{a^{L, n}, \dots, a^{0,n}}$ for all $n\in \N$ and $f= f_{a^L,\dots, a^0}$.
\item $\|a^{\ell, n}\| = \left(\prod_{i=0}^L\|a^{i,n}\|_{L^2}\right)^\frac1{L+1}$ for all $n\in \N$ and $0\leq \ell \leq L$. 
\item $\limsup_{n\to \infty} \left[\prod_{i=0}^L\|a^{i,n}\|_{L^2} - Q(f_n) \right] = 0$.
\item $a^{\ell, n}\to a^\ell$ in the $L^2$-strong topology for all $0\leq \ell\leq n$.
\end{enumerate}
\end{definition}
To evaluate the notion of convergence, consider the case $L=1$ and write $(a,w)$ for $(a^1, a^0)$. We interpret $a^0$ as an $\R^{d+1}$-valued function on $(0,1)$ rather than a scalar function on $(0,1)\times \{0,\dots, d\}$. Then it is easy to see that 
\[
(a^n, w^n)\to (a,w) \quad\text{strongly in }L^2(0,1)\qquad \Ra\qquad (a^n,w^n)_\sharp \L \to (a,w)_\sharp\L \quad\text{in Wasserstein}.
\] 
The inverse statement holds up to a rearrangement of the index set. The Wasserstein distance is associated with the weak convergence of measures, while the topology of Barron space is associated with the the norm topology for the total variation norm (strong convergence). This justifies the terminology of `weak convergence' of arbitrarily wide neural networks.

Weak convergence is locally metrizable, but not induced by a norm. A relaxed version of convergence described above is metrizable by the distance function
\begin{align*} 
d_{HW}(f,g) = \inf\Bigg\{\sum_{\ell=0}^L \|a^{\ell, f} - a^{\ell, g}\|_{L^2(\pi^\ell)}\,\bigg|&\,a^{L, f},\dots, a^{0,g} \text{ s.t. }f = f_{a^{L,f}, \dots, a^{0, f}}, \:g = f_{a^{L,g},\dots, a^{0,g}}\text{ and}\\ \label{eq hilbert weight metric} \showlabel
	&\quad\|a^{\ell, h}\| \equiv \left(\prod_{i=0}^L\|a^{i,h}\|_{L^2}\right)^\frac1{L+1} \leq 2\,Q(h)^\frac1{L+1}\text{ for }h\in \{f,g\}\Bigg\}.
\end{align*}
The third condition has been weakened from $\prod_{i=0}^L\|a^{i,n}\|_{L^2} - Q(f_n) \to 0$ to $Q(f_n)\leq \prod_{i=0}^L\|a^{i,n}\|_{L^2} \leq 2\, Q(f_n)$. The normalization is required to ensure that functions in which one layer can be chosen identical do not have zero distance by shifting all weight to the one layer. Which mode of convergence is superior to another remains to be seen. Equipped with the Hilbert weight metric $d_{HW}$, the spaces $\W_{\pi^L,\dots,\pi^0}(K)$ are complete.

To avoid the unwieldy terminology of arbitrarily wide neural networks with Hilbert weights, we introduce the following simpler terminology.

\begin{definition}\label{definition multi-layer space new}
The metric spaces $\W_{\pi^L,\dots,\pi^0}(K), d)$ equipped with the metric $d_{HW}$ from \eqref{eq hilbert weight metric} are called {\em multi-layer spaces} for short.
\end{definition}

\begin{remark}
As seen in Lemma \ref{lemma path-norm bound}, the inclusions
\begin{equation}\label{eq space inclusions}
\W_{\pi^L,\dots,\pi^0}(K) \quad\subseteq\quad X_{\Omega^L,\dots,\Omega^0; K} \quad\subseteq\quad \W^L(K)
\end{equation}
hold. The last three points of Lemma \ref{lemma properties network functions} hold with $\W_{\pi^L,\dots,\pi^0}(K)$ in place of $X_{\Omega^L,\dots,\Omega^0; K}$. We note however that the functions
\[
c^L_\theta = \begin{cases} 2\,a^L_{2\theta}&\theta\in(0,1/2)\\ 2\,b^L_{2\theta-1} &\theta\in (1/2,1)\end{cases}, \qquad c^\ell_{\theta\xi} = \begin{cases} 4\, a^\ell_{2\theta,2\xi} &\theta, \xi \in (0, 1/2)\\ 4\,b^\ell_{2\theta-1,2\xi-1} &\theta,\xi \in (1/2,1)\\ 0&\text{else}\end{cases}
\]
satisfy
\[
\|c^L\|_{L^2(0,1)}^2 = 2\,\left[\|a^L\|_{L^2(0,1)}^2 + \|b^L\|_{L^2(0,1)}^2\right], \qquad \|c^\ell\|_{L^2\big((0,1)^2\big)} = 4\,\left[\|a^\ell\|_{L^2\big((0,1)^2\big)} + \|b^\ell\|_{L^2\big((0,1)^2\big)}\right].
\]
In particular, if $\Omega^\ell= (0,1)$ and $\pi^\ell$ is Lebesgue measure for all $1\leq \ell\leq L$, then $\W_{\pi^L,\dots,\pi^0}(K)$ is a linear space, but both $Q_{\pi^L,\dots,\pi^0; K}$ and $d_{HW}$ generally fail to be a norm.
\end{remark}

\begin{remark}\label{remark Hilbert weights vs path-norm}
It is not clear whether the inclusions in \eqref{eq space inclusions} are necessarily strict. In the case of Barron space, it is easily possible to normalize by replacing
\[
a^1_{\theta_1} \mapsto \frac{a^1_{\theta_1}}{\rho_{\theta_1}}, \qquad a^0_{\theta_1\theta_0} \mapsto \rho_{\theta_1}\,a^0_{\theta_1\theta_0}
\]
such that both layers have the same magnitude in $L^2(0,1)$, even if they are only assumed to be measurable with finite path-norm a priori. For multiple layers, this may not be possible. Let
\begin{equation}
a_s \equiv 1,\qquad b_{st} = f(s-t), \qquad c_t\equiv 1
\end{equation}
where $f$ is a one-periodic function on $\R$ which is in $L^1(0,1)$, but not $L^2(0,1)$. Then any normalization
\[
a_s \mapsto \frac{a_s}{\rho_s}, \qquad b_{st}\mapsto \rho_s\,\tilde \rho_t\,b_{st}, \qquad c_t\mapsto \frac{c_t}{\tilde \rho_t}
\]
fails to make $b$ $L^2$-integrable. Whether or not this can be compensated by choosing other weights with the same realization remains an open question. \end{remark}

\subsection{Networks with two hidden layers}

We investigate the space $X_{(0,1), (0,1), \{0,\dots, d\}; K}$ and $\W_{\L^1,\L^1,\pi^0}(K)$ more closely where $\pi^0$ denotes counting measure and $\L^1$ is the Lebesgue measure on $(0,1)$. In general, any network modelled on probability spaces $\Omega_2, \Omega_1, \Omega_0$ can be written as
\begin{align*}
f(x) &= \int_{\Omega_2} a^{2}_{\theta_2}\,\sigma\left(\int_{\Omega_1} a^1_{\theta_2, \theta_1}\,\sigma\left(\sum_{\theta_0=1}^{d+1}a^0_{\theta_1,\theta_0}x_{\theta_0}\right)\,\pi^1(\d\theta_1)\right)\,\pi^2(\d\theta_2)\\
	&= \int_{\Omega_2} a^{2}_{\theta_2}\rho_{\theta_2}\,\sigma\left(\int_{\Omega_1} \frac{a^1_{\theta_2, \theta_1}|w_{\theta_1}|}{\rho_{\theta_2}}\,\sigma\left(\frac{w_{\theta_1}^T}{|w_{\theta_1}|} (x,1)\right)\,\pi^1(\d\theta_1)\right)\,\pi^2(\d\theta_2)\\
	&= \int_{\Omega_2} a^{2}_{\theta_2}\,\rho_{\theta_2}\,\sigma\left(\int_{\R\times S^d} \tilde a\,\sigma(\tilde w^Tx)\,(\Psi(\theta_2,\cdot)_\sharp \pi^1)(\d \tilde a\otimes \d \tilde w)\right)\pi^2(\d\theta_2)
\end{align*}
where $w_\theta = (a^{0}_{\theta_1,1},\dots, a^{0}_{\theta_1,d+1})$ and 
\[
\Psi:\Omega_2\times \Omega_1 \to \R \times S^d, \qquad \Psi(\theta_2, \theta_1) = \left(\frac{a^{1}_{\theta_2\theta_1}|w_{\theta_1}|}{\rho_{\theta_2}}, \frac{w_{\theta_1}}{|w_{\theta_1}|}\right).
\]
Since the second component of $\Psi$ does not depend on $\theta_2$, the marginal $\overline \pi$ of $\Psi(\theta_2,\cdot)_\sharp \pi^1$ on the sphere is independent of $\theta_2$. We can therefore write
\[
\int_{\R\times S^d} \tilde a\,\sigma(\tilde w^Tx)\,(\Psi(\theta_2,\cdot)_\sharp \pi^1)(\d \tilde a\otimes \d \tilde w) = \int_{S^d} \bar a^{\theta_2}(w)\,\sigma(w^Tx)\,\bar \pi(\d w)
\]
by integrating in the $a$-direction and making $\bar a$ a function of $w$ (see \cite[Section 2.3]{barron_new} for the technical details). Thus
\begin{align*}
f(x) &= \int_{\Omega_2} a^{2}_{\theta_2}\,\rho_{\theta_2}\,\sigma\left(\int_{S^d} \bar a^{\theta_2}(w)\,\sigma(w^Tx)\,\bar\pi(\d w)\right)\pi^2(\d\theta_2).
\end{align*}
We can in particular choose $\rho\geq 0$ such that 
\[
\int_{S^d} \big| \bar a^{\theta_2}(w)\big| \,\bar\pi(\d w) \leq \int_{\Omega^1} \frac{|a^{1}_{\theta_2\theta_1}|\,|w_{\theta_1}|}{\rho_{\theta_2}}\,\pi^1(\d\theta_1) = \frac 1{\rho_{\theta_2}}\int_{\Omega^1} {|a^{1}_{\theta_2\theta_1}|\,|w_{\theta_1}|}\,\pi^1(\d\theta_1)  \leq 1
\]
for all $\theta_2\in \Omega_2$. Then the map
\[
F: \Omega_2\to B^X, \qquad \theta_2\mapsto f^{\theta_2} = \int_{S^d} \bar a^{\theta_2}(w)\,\sigma(w^Tx)\,\bar\pi(\d w)
\]
is well-defined and Bochner integrable. In particular
\begin{align*}
f(x) &= \int_{\Omega_2} a^{(2)}_{\theta_2}\,\rho_{\theta_2}\,\sigma\big(f^{\theta_2}(x)\big)\,\pi^2(\d\theta_2) \\
	&= \int_{B^X} \sigma(g(x))\,\mu(\d g)
\end{align*}
where $\mu = F_\sharp \big((a^{(2)}\rho)\cdot \pi^2\big)$. By construction, $\mu$ is concentrated on the subspace $Y_{\bar\pi}$ of Barron functions which can be represented with an $L^1$-density with respect to the measure $\bar \pi$, by which we mean that $|\mu|(B^X\setminus Y_{\bar\pi}) =0$. This equation can be sensibly interpreted since any measure can be extended to a potentially larger $\sigma$-algebra containing all null sets. 

Thus general functions in $\W^2(K)$ and $X_{(0,1), (0,1),\{0,\dots,d\}; K}$ both take the form
\[
f(x) = \int_{B^X} \sigma(g(x))\,\mu(\d g)
\]
where $B^X$ is the unit ball in Barron space, but in the second case, $\mu$ is concentrated on a subspace $Y_{\bar\pi}$. This space is a quotient of $L^1(\bar\pi)$ by a closed subspace and thus closed in Barron space, but may be dense in $C^0(K)$. If $\bar\pi$ is the uniform distribution on $S^d$, then $Y_{\bar\pi}$ is dense in $C^0$ since $L^1(\bar\pi)$ is dense in the space of Radon measures on $S^d$ with respect to the weak topology.

{\em Claim:} There is no distribution $\bar\pi$ on $S^d$ such that every Barron function can be expressed with an $L^1$-density with respect to $\bar\pi$ if $K$ is the closure of an open set. 

{\em Proof of claim:} Barron space is not separable since
\[
\|\sigma(w_1^T\cdot) - \sigma(w_2^T\cdot)\|_{\B^1(K)} \geq [\sigma(w_1^T\cdot) - \sigma(w_2^T\cdot)]_{C^{0,1}(K)} \geq 1
\]
if one of the hyperplanes $\{x: w_{1/2}^Tx=0\}$ intersects the interior of $K$. This is the case for uncountably many $w\in S^d$. On the other hand, $L^1(\bar\pi)$ (and also its quotient by the kernel of the realization map) is separable for any Radon measure. Thus the two spaces cannot coincide. \qedsymbol

The claim can be phrased and proved in greater generality if $K$ is a manifold or similar. We note that for fixed $\bar\pi$, the space $Y_{\bar\pi}$ embeds continuously into $C^{0,1}(K)$, but its unit ball is not closed in $C^0(K)$. Nevertheless, we may consider the space
\[
\B_{Y_{\bar\pi}, K} = \left\{f_\mu(x) = \int_{B^X\cap Y_{\bar\pi, K}}\sigma(g(x))\,\mu(\d g)\:\bigg|\:\mu \text{ admissible}\right\}
\]
where admissible measures are finite (signed) Radon measures for which $Y_{\bar\pi}$ is measurable. Every distribution $\bar\pi$ on $S^d$ can be obtained as the push-forward of Lebesgue measure on the unit interval along a measurable map $\phi:(0,1)\to S^d$, see e.g.\ \cite[Section 2.8]{barron_new}. Thus the associated space of neural networks with two hidden layers is
\[
X_{(0,1), (0,1), \{0,\dots,d\}; K} = \bigcup_{\bar\pi} \B_{Y_{\bar\pi},K} =: \widetilde \W^2(K)
\]
where the union is over all probability distributions $\bar\pi$ on $S^d$. Thus the first layer of $f\in \W^2$ is wide enough to contain the entire unit ball of $\W^1$, while the first layer of $f\in \widetilde \W^2$ can only express a separable subset of the unit ball in $\W^1$. The question whether this reduces expressivity or whether in fact $\W^2 = \widetilde \W^2$ remains open.

Finally, consider the space $\W_{\L^1,\L^1,\pi^0}(K)$ where the weights of a function satisfy
\[
a^{2} \in L^2(0,1), \quad a^1 \in L^2\big((0,1)^2\big), \quad a^0 \in L^2\big((0,1)\times \{0,\dots,d\}\big) = L^2\big((0,1); \R^d).
\]
We proceed as before, but normalize with respect to $L^2$ rather than $L^1/L^\infty$. 
Again, we can consider the maps
\[
\Psi: (0,1)\times(0,1)\to \R^{d+2}, \qquad (\theta_2, \theta_1)\mapsto \big(a^1_{\theta_2\theta_1}, a^0_{\theta_1}\big)
\]
and note as before that 
\[
\int_0^1 a^1_{\theta_2, \theta_1}\,\sigma\left(\sum_{\theta_0=1}^{d+1}a^0_{\theta_1,\theta_0}x_{\theta_0}\right)\,\d\theta_1 = \int_{\R^{d+1}} \bar a^{\theta_2}(w)\,\sigma(w^Tx)\,\bar\pi(\d w)
\]
where this time $\bar a\in L^2(\bar\pi)$ for almost all $\theta_2\in(0,1)$. Thus the first layer of $f\in \W_{\L^1,\L^1,\pi^0}$ takes values in a single reproducing kernel Hilbert space $\mathcal \H_{\bar\pi}$ associated to the kernel
\[
k_{\bar\pi}(x,x') = \int_{\R^{d+1}} \,\sigma(w^Tx)\,\sigma(w^Tx')\,\bar\pi(\d w)
\]
while the first layer of $f\in \W^2$ may be wide enough to contain every function in the unit ball of Barron space. Again, the relationship between the function spaces remains open.

\subsection{Natural index sets}\label{section natural index set}

In this section, we focus on the natural index set for $\W_{\pi^L,\dots,\pi^0}(K)$. Above, we allowed the index spaces $\Omega_i$ to be generic or focused on the case $\Omega_i = (0,1)$. While $(0,1)$ is simple and mathematically convenient, it is not a natural choice. First consider the simpler case of neural networks with a single hidden layer. The classical representation in this case is
\[
f(x) = \int_{\R\times \R^{d+1}} a\,\sigma(w^Tx)\,\pi(\d a \otimes \d w)
\]
for some distribution $\pi$ on $\R^{d+2}$, see \cite{barron_new} and the sources cited therein. Using the scaling invariance $\sigma(\cdot) = \lambda^{-1}\sigma(\lambda\cdot)$ if necessary, we may assume that 
\[
\int_{\R^{d+2}} |a|^2 + |w|^2\,\pi(\d a\otimes \d w) < \infty.
\]
Then we set $\Omega_1 = \R^{d+2}, \Omega_0 = \{0,\dots, d\}$ and 
\[
a^1_{\theta_1} = (\theta_1)_1, \qquad a^0_{\theta_1,\theta_0} = (\theta_1)_{1+\theta_0},
\]
i.e.\ we index $\R^{d+2}$ by itself. In this equation, $(\theta_1)_i$ denotes the $i$-the component of the vector $\theta_1\in \R^{d+2}$. 

For networks with more than one hidden layer, the output of the first layer is vector-valued. The preceding analysis determined that the first hidden layer takes values in the reproducing kernel Hilbert space $\H_{\bar\pi}$. It thus seems reasonable at first glance to choose $\H_{\bar\pi}$ as an index space for the second hidden layer. This intuition is flawed since the output of the first hidden layer is an RKHS function of $x$, a variable which is fixed when calculating the output of the network and inaccessible to the second hidden layer. The previous observation  has no bearing on the inner workings of neural networks, but only on the approximation power of functions described by a given neural network architecture. 

Pursuing a different route, we note that $\pi$ is a Radon measure on $\R\times \R^{d+1}$ where $\R$ is the output and $\R^{d+1}$ the input layer (interpreting $x$ as $(x,1)$). For networks with two hidden layers, we note that
\begin{align*}
\bigg\| \int_{\Omega_1}&a^1_{\theta_2\theta_1}\,\sigma\left(\int_{\Omega_0}a^0_{\theta_1\theta_0}x_{\theta_0}\pi^0(\d\theta_0)\right)\pi^1(\d\theta_1)\bigg\|_{L^2(\pi_2)}^2\\
	&\leq \int_{\Omega_2} \left(\int_{\Omega_1}\big|a^1_{\theta_2\theta_1}\big|^2\,\pi^1(\d\theta_1)\right)^\frac22\left(\int_{\Omega_1}\int_{\Omega_0} \big|a^0_{\theta_1\theta_0}\big|^2\,\pi^0(\d\theta_0)\,\pi^1(\d\theta_1)\right)^\frac22 \pi^2(\d\theta_2)\sup_{x\in K}|x|^2 \\
	&= \|a^1\|_{L^2(\pi^2\otimes\pi^1)}\,\|a^0\|_{L^2(\pi^1\otimes\pi^0)}\sup_{x\in K}|x|^2
\end{align*}
for all $x\in K$. We can thus view a neural network with two hidden layers and parameter functions $a^2, a^1, a^0$ as a composition of linear and non-linear maps in the following way:

\begin{enumerate}
\item Let $\pi^1$ be the distribution of vectors $w:= (a^0_{\theta_1\theta_0})_{\theta_0=1}^{d+1}$ on $\R^{d+1}$ and $A^1:\R^d\to L^2(\pi^1)$ is the affine map described by 
\[
(A^1x)_{\theta_1} = \int_{\Omega_0} a^0_{\theta_1\theta_0}x_{\theta_0} = \frac1{d+1} \sum_{\theta_0=1}^{d+1} a^0_{\theta_1\theta_0} x_{\theta_0}.
\]
We may use $\R^{d+1}$ as its own index set, i.e.\ $a^0_{\theta_1:} = \theta_1$. To emphasize the fact that index set and distribution are natural, we denote $w=\frac{1}{d+1}\theta_1, \bar\pi = \pi^1$.

\item The non-linearity $\sigma$ acts on $L^2(\pi^1)$ by pointwise application. 

\item Let $(\Omega_2,\A_2,\pi^2)$ be a general probability space used as an index set. The linear map $A^2: L^2(\pi^1)\to L^2(\pi^2)$ is given by
\[
(A^2f)_{\theta_2} = \int_{\Omega_1} a^1_{\theta_2\theta_1}\,f_{\theta_1}\,\pi^1(\d\theta_1) = \langle a^1_{\theta_2:},z\rangle_{L^2(\pi^1)}
\]
where $a^1_{\theta_2:}(\theta_1) = a^1_{\theta_2\theta_1}$. 

\item The non-linearity $\sigma$ acts on $L^2(\pi^2)$ by pointwise application.

\item The map $A^3:L^2(\pi^2)\to \R$ is given by
\[
A^3f = \int_{\Omega_2} a^2_{\theta_2}\,f_{\theta_2}\,\pi^2(\d\theta_2).
\]
\end{enumerate}

Then
\begin{align*}
f(x) &= \big(A^3\circ\sigma\circ A^2\circ \sigma \circ A^1)(x)\\
	&= \int_{\Omega_2} a^2_{\theta_2} \,\sigma\left(\left\langle a^1_{\theta_2:}, \sigma\left(\frac1{d+1}\langle a^0_{\theta_1:},x\rangle_{\R^{d+1}} \right)\right\rangle_{L^2(\pi^1)} \right)\pi(\d\theta_2)\\
	&= \int_{\R\times L^2(\bar\pi)}\tilde a\,\sigma\left(\langle \tilde h, \sigma (w^Tx)\rangle_{L^2(\bar\pi)}\right)\,(H_\sharp \pi^2)(\d\tilde a\otimes \d \tilde h)
\end{align*}
where 
\[
H : \Omega_2\to \R\times L^2(\bar\pi), \qquad \theta_2 \mapsto (a^2_{\theta_2}, a^1_{\theta_2:}).
\]
Thus we may in a natural way interpret 
\begin{itemize}
\item $\Omega_0 = \{0,\dots, d\}$ with the normalized counting measure.
\item $\Omega_1 = \R^{d+1} = L^2(\Omega_0)$. $\bar\pi = \pi^1$ can be any probability distribution on $\Omega_1$ with finite second moments.
\item $\Omega_2= \R \times L^2(\bar \pi)$ and $\pi^2$ is a probability distribution with finite second moments.
\end{itemize}

More generally, we set
\begin{itemize}
\item $\Omega_0 = \{0,\dots, d\}$ with the normalized counting measure $\bar\pi^0$.
\item $\Omega_\ell = L^2(\bar\pi^{\ell-1})$ and a measure $\bar\pi^\ell$ with finite second moments on $\Omega_\ell$ for $1\leq \ell\leq L-1$.
\item $\Omega_L = \R\times  L^2(\bar\pi^{L-1})$ and a measure $\bar\pi^L$ with finite second moments on $\Omega_L$.
\end{itemize}
The outermost index space $\Omega_L$ has the additional factor $\R$ compared to $\Omega_\ell$ because both the first and the last operations in a neural network are linear. Note that $\Omega_\ell$ is a Polish space for every $\ell$ by induction.

All considerations above were for fixed $x$. As $x$ varies, a neural network with $L$ hidden layers takes the form $f(x) = (z^L\circ\dots\circ z^1)(x)$ where
\begin{enumerate}
\item $z^1\in C^{0,1}(\spt\,\P, \Omega_1)$, $z^0(w,x) = w^Tx = \E_{w_i\sim \pi_0}w_ix_i$ where we interpret $w\in \Omega_1 = L^2(\pi_0)$.
\item $x^\ell\in C^{0,1}(\spt\,\P, \Omega_{\ell+1})$ is defined by $z^\ell(y,f) = \langle f, \,\sigma(y)\rangle_{\pi^{\ell-1}}$ where $y\in \pi^{\ell-1}$ is the output of the previous layer and $f\in \pi^{\ell-1}$ is the natural index of $z^\ell$. Thus $z^{\ell}(\cdot, y) \in L^2(\pi^{\ell-1}) = \Omega_\ell$.
\item $z^L(y) = \int_{\Omega_L} \tilde a \,\sigma(\langle f,y\rangle_{\pi^{L-1}})\,\pi^L(\d\tilde a\otimes \d f)$.
\end{enumerate}

All natural index spaces above are separable Hilbert spaces and therefore isomorphic to each other (for all $\ell$ for which $\Omega_\ell$ if infinite-dimensional) and to both $L^2(0,1)$ and $\ell^2$. However, the application of the non-linearity $\sigma$ in $L^2$ and $\ell^2$ is not invariant under Hilbert-space isomorphisms. It makes a big difference whether we take the positive part of a function $f\in L^2(0,1)$ set all negative Fourier-coefficients of a function to zero. Luckily, natural isomorphisms preserve the structure of continuous neural network models as in Remark \ref{remark measure isomorphism}.

\section{Optimization of the continuous network model}\label{section mean-field}

We now study gradient flows for the risk functionals in the continuous setting.
We will restrict ourselves to the indexed representation with $L^2$-weights. 
The most natural optimization algorithm for weight-functions $a^\ell \in L^2((0,1)^2)$ is the $L^2$-gradient flow. We show that the usual gradient descent dynamics of neural network training can be recovered as discretizations of the continuous optimization algorithm. In this sense, we follow the philosophy of designing optimization algorithms for continuous models and discretizing them later which was put forth in \cite{E:2019aa}. We present our findings in the simplest possible setting.

\subsection{Discretizations of the  continuous gradient flow}

We now show that a natural discretization of the continuous gradient flow recovers the gradient descent dynamics for the usual
multi-layer neural networks with the ``mean-field'' scaling.
This is a general feature of Vlasov type dynamics.

The following computations are purely formal, assuming that solutions to all ODEs proposed below exist -- the issue of existence and uniqueness of solutions is briefly discussed in Appendix \ref{appendix gradient flow}.
The arguments however are based on an identity and energy dissipation property which are expected to be stable when considering generalized solutions. For smooth activation functions $\sigma$, all computations can be made rigorous and solutions exist.

\begin{lemma}
Consider a discretized version of the continuous indexed representation:
\[
f(x) = \frac1{m_L}\sum_{i_L=1}^{m_L}a^L_{i_L}\sigma\left(\frac1{m_{L-1}}\sum_{i_{L-1}=1}^{m_{L-1}} a^{L-1}_{i_Li_{L-1}}\sigma\left( \dots \sigma\left(\frac1{m_1}\sum_{i_1=1}^{m_1}a_{i_2i_1}^1\,\sigma\left(\frac1{d+1}\sum_{i_0=1}^{d+1} a^0_{i_1i_0}\,x_{i_0}\right)\right)\right)\right).
\]
Define functions
\begin{align*}
a^L:&(0,1)\to \R, \:\quad \:a^L(s) = a^L_i \:\:\:\quad\text{ for } \frac{i-1}{m_L}\leq s < \frac i{m_L}\\
a^\ell:&(0,1)^2\to \R,\quad a^\ell(r,s) = a^\ell_{ij}\quad\text{ for }\frac{i-1}{m_{\ell+1}}\leq r < \frac i{m_L}, \:\:\frac{j-1}{m_\ell} \leq s < \frac{i}{m_\ell}.
\end{align*}
for $0\leq \ell < L$. Then $f= f_{a^L,\dots, a^0}$ and the coefficient functions $a^L,\dots,a^0$ evolve by the $L^2$-gradient flow of
\[
\Risk(a^L,\dots, a^0) = \int_{\R^d} \ell\big(f_{a^L,\dots, a^0}(x),y\big)\,\P(\d x\otimes \d y)
\]
if and only if the parameters $a^L_i, a^\ell_{ij}$ evolve by the time-rescaled gradient flows
\begin{align*}
\dot a^L_i &= -m_L\,\partial_{a^L_i} \Risk\big(a^L_{i_L}, \dots, a^0_{i_1i_0}\big)\\ \showlabel
\dot a^\ell_{ij} &= - m_{\ell+1}m_\ell\,\partial_{a^\ell_{ij}} \Risk\big(a^L_{i_L}, \dots, a^0_{i_1i_0}\big) &0\leq i \leq L-1
\end{align*}
where the risk of finitely many weights is defined accordingly.
\end{lemma}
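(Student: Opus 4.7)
The plan is to recognize the continuous $L^2$-gradient flow on the finite-dimensional subspace of piecewise constant coefficient functions as a Euclidean gradient flow on the discrete parameters, reweighted by the metric induced from the $L^2$ inner product.

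First I would verify the algebraic identity $f = f_{a^L, \dots, a^0}$ by plugging in the piecewise constant extensions: each integral $\int_{\Omega_\ell} \dots \, \pi^\ell(\d\theta_\ell)$ over an interval $I_j^{(\ell)} = [(j-1)/m_\ell, j/m_\ell)$ contributes a factor $1/m_\ell$, reproducing the mean-field prefactors in the discretized network. Next I would compute the first variation of the risk. For a smooth activation the chain rule gives, for a variation $\delta a^L\in L^2((0,1))$,
\begin{equation*}
\frac{d}{d\eps}\bigg|_{\eps=0} \Risk(a^L + \eps\,\delta a^L, a^{L-1}, \dots, a^0) = \int_0^1 \delta a^L(\theta_L)\,V^L(\theta_L)\,\d\theta_L,
\end{equation*}
where $V^L(\theta_L) = \E_{(X,Y)\sim \P}\bigl[\partial_1\ell(f(X),Y)\,\sigma(g^L(X,\theta_L))\bigr]$ and $g^L(x,\theta_L)$ denotes the pre-activation of the outer layer. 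Analogous two-variable kernels $V^\ell(r,s)$ represent $\nabla_{L^2}\Risk$ with respect to $a^\ell$. The continuous flow then reads $\dot a^\ell = -V^\ell$.

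The second step is to check that the piecewise-constant structure is preserved under this flow. When every $a^\ell$ is piecewise constant on the relevant rectangles, the pre-activation $g^L(x,s)$ is piecewise constant in $s$ on the partition $\{I_i^{(L)}\}$, because the only $s$-dependence enters through $a^{L-1}(s,\cdot)$ which is piecewise constant in its first argument. Hence $V^L$ is also piecewise constant on the same partition. The analogous statement for the inner kernels $V^\ell(r,s)$ follows inductively by the same symmetry: both coordinates enter only through coefficient functions that are piecewise constant on the matching partitions. Therefore the $L^2$-gradient flow stays in the finite-dimensional subspace of piecewise constants, and it suffices to analyze its restriction there.

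Finally I would perform the metric computation on this subspace. With $a^L$ piecewise constant with values $a^L_i$ on intervals of length $1/m_L$, and similarly for $a^\ell$ on rectangles of area $1/(m_{\ell+1}m_\ell)$, the induced inner products read
\begin{equation*}
\langle a^L, b^L\rangle_{L^2} = \frac{1}{m_L}\sum_i a^L_i\,b^L_i, \qquad \langle a^\ell, b^\ell\rangle_{L^2} = \frac{1}{m_{\ell+1}m_\ell}\sum_{i,j} a^\ell_{ij}\,b^\ell_{ij}.
\end{equation*}
Writing $\widehat\Risk(a^L_i, a^\ell_{ij})$ for the restriction of $\Risk$ to this subspace and equating the two expressions for the directional derivative $\langle \delta a^\ell, V^\ell\rangle_{L^2} = \sum_{i,j} \partial_{a^\ell_{ij}}\widehat\Risk\cdot \delta a^\ell_{ij}$ yields $V^L_i = m_L\,\partial_{a^L_i}\widehat\Risk$ and $V^\ell_{ij} = m_{\ell+1}m_\ell\,\partial_{a^\ell_{ij}}\widehat\Risk$. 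Substituting into $\dot a^\ell = -V^\ell$ gives precisely the claimed time-rescaled ODEs.

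The main obstacle is the invariance step: one must justify that the $L^2$-gradient of $\Risk$ at a piecewise constant point is itself piecewise constant on the same partition, which relies on a careful bookkeeping through the nested integrals of the network. Existence and uniqueness of the flows (both continuous and discretized) are genuine issues for the non-smooth ReLU activation, but these are explicitly deferred to the appendix and need not be addressed here.
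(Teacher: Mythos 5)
Your proposal is correct and follows essentially the same route as the paper: compute the $L^2$ first variation of the risk layer by layer, observe that the representing kernels at a piecewise-constant configuration are themselves piecewise constant on the same partition, and match against the discrete dynamics. The only difference is cosmetic: the paper writes out both the Euclidean gradient of the finite network and the $L^2$-gradient explicitly and compares terms, whereas you extract the factors $m_L$ and $m_{\ell+1}m_\ell$ more cleanly by computing the $L^2$ inner product restricted to the subspace of piecewise constants; this is a tidier bookkeeping of the same calculation.
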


Passing to a single index set $(0,1)$ for all layers, we lose the information about the scaling of the width and compensate by prescribing layer-wise learning rates which lead to balanced training velocities. 

\begin{proof}
The proof for networks with one hidden layer can be found in \cite[Lemma 2.8]{approximationarticle}. To simplify the presentation, we focus on the case of two hidden layers. The general case follows the same way. Consider the network
\[
f(x) = \frac1M \sum_{i=1}^M a_i \,\sigma\left(\frac1m \sum_{j=1}^m b_{ij}\,\sigma\left(\frac1{d+1}\sum_{k=1}^{d+1} c_{jk}x_k\right)\right)
\]
and compute the gradient
\begin{align*}
\nabla_{a_i, b_{ij}, c_{jk}} &\Risk(a, b, c) = \nabla_{a_i, b_{ij}, c_{jk}} \int_{\R^d} \ell\big(f_{a,b,c}(x), y\big)\,\P(\d x\otimes \d y)\\
	&= \int_{\R^d} (\partial_1\ell)\big(f_{a,b,c}(x), y\big)\,\nabla_{a_i, b_{ij}, c_{jk}} f_{a,b,c,}(x)\,\P(\d x\otimes \d y)\\
	&= \int_{\R^d} (\partial_1\ell)\big(f_{a,b,c}(x), y\big)\,
		\begin{pmatrix}
		\frac1M \sigma\left(\frac1m \sum_{j=1}^m b_{ij}\,\sigma\left(\frac1{d+1}\sum_{k=1}^{d+1} c_{jk}x_k\right)\right)\\
		\frac1{M}a_i\,\sigma'\left(\frac1m \sum_{l} b_{il}\,\sigma\left(\dots\right)\right)\,\frac1m\sigma \left( \frac1{d+1}\sum_{k=1}^{d+1} c_{ik}x_k\right)\\
		\frac1M \sum_{i=1}^M a_i\,\sigma'(\dots)\,\frac1m\,\sigma'\left(\frac1{d+1}\sum_{l=1}^{d+1}c_{jl}x_l\right)\,\frac1{d+1}\,x_k
		\end{pmatrix} \,\P(\d x\otimes \d y)\\
	&= \int_{\R^d} (\partial_1\ell)\big(f_{a,b,c}(x), y\big)\,
		\begin{pmatrix}
		\frac1M \sigma\left(f_{b_{i:},c}(x)\right)\\
		\frac1{Mm}a_i\,\sigma'\left(f_{b_{i:},c}(x) \right)\,\sigma \left( f_{c_{j:}}(x)\right)\\
		\frac{1}{m(d+1)}\,\frac1M \sum_{i=1}^M a_i\,\sigma'(f_{b_{i:},c}(x))\,\,\sigma'\left(f_{c_{j:}}(x)\right)\,
	\end{pmatrix} \,\P(\d x\otimes \d y)
\end{align*}
where
\[
f_{b_{i:},c}(x) = \frac1m \sum_{j=1}^m b_{ij}\,\sigma\left(\frac1{d+1}\sum_{k=1}^{d+1} c_{jk}x_k\right) \quad\text{and}\quad f_{c_{j:}}(x) = \frac1{d+1}\sum_{l=1}^{d+1}c_{jl}x_l.
\]
Equally, we can compute the $L^2$-gradient by taking variations
\begin{align*}
\delta_{a;\phi}\Risk(a,b,c) &= \lim_{h\to 0} \frac{\Risk(a+h\phi, b, c) - \Risk(a,b,c)}h\\ \showlabel
	&= \int_{\R^d}\lim_{h\to 0} \frac{\ell\big(f_{a+h\phi, b, c}(x), y\big) - \ell\big(f_{a,b,c}(x),y\big)}h\,\P(\d x\otimes \d y)\\
	&= \int_{\R^d} (\partial_1\ell)\big(f_{a,b,c}(x), y\big)\, \lim_{h\to 0} \frac{f_{a+h\phi, b, c}(x) - f_{a,b,c}(x)}h\,\P(\d x\otimes \d y)\\
	&= \int_{\R^d} (\partial_1\ell)\big(f_{a,b,c}(x), y\big)\, \int_0^1 \phi(s)\,\sigma\big(f_{b_{s:},c}(x)\big)\ds\,\P(\d x\otimes \d y)\\
	&= \int_0^1\left(\int_{\R^d} (\partial_1\ell)\big(f_{a,b,c}(x), y\big)\,  \,\sigma\big(f_{b_{s:},c}(x)\big)\,\P(\d x\otimes \d y)\right)\phi(s)\ds
\end{align*}
since $f_{a,b,c}$ is linear in $a$. Thus the $L^2$-gradient is of $\Risk$ with respect to $a$ is represented by the $L^2$-function
\[
\delta_a\Risk(a,b,c; s) = \int_{\R^d} (\partial_1\ell)\big(f_{a,b,c}(x), y\big)\,\sigma\big(f_{b_{s:},c}(x)\big)\,\P(\d x\otimes \d y)
\]
where again
\[
f_{b_{s:},c}(x) = \int_0^1 b_{st}\left(\frac1{d+1}\sum_{i=1}^{d+1} c_{ti}x_i\right)\dt.
\]
Using the chain rule instead of linearity, we compute
\begin{align*}
\delta_{b;\phi}&\Risk(a,b,c) = \int_{\R^d} (\partial_1\ell)\big(f_{a,b,c}(x), y\big)\, \lim_{h\to 0} \frac{f_{a+h\phi, b, c}(x) - f_{a,b,c}(x)}h\,\P(\d x\otimes \d y)\\
	&= \int_{\R^d} (\partial_1\ell)(\dots) \int_0^1 a_s\,\lim_{h\to 0}\frac{\sigma\left(\int_0^1\big(b_{s,t}+h\phi_{s,t}\big)\,\sigma(f_{c_{t:}}(x))\dt\right) - \sigma\left(\int_0^1b_{s,t}\,\sigma(f_{c_{t:}}(x))\dt\right)}h\ds\,\P(\d x\otimes \d y)\\
	&= \int_{\R^d} (\partial_1\ell)(\dots) \int_0^1 a_s\,\sigma'(f_{b_{s:}c}(x))\int_0^1 \phi_{s,t}\,\sigma (f_{c_{t:}}(x))\ds \dt \,\P(\d x\otimes \d y)\\
	&= \int_{(0,1)^2}\phi_{s,t}\left(\int_{\R^d} (\partial_1\ell)(\dots) \int_0^1 a_s\,\sigma'(f_{b_{s:}c}(x))\,\sigma (f_{c_{t:}}(x))\right)\ds\dt
\end{align*}
and obtain
\begin{align*}
\delta_b\Risk(a,b,c; s, t) &= \int_{\R^d} (\partial_1\ell)\big(f_{a,b,c}(x), y\big)\,a_s \sigma'\big(f_{b_{s:}c}(x)\big) \,\sigma\left(f_{c_{t:}}(x)\right)\,\P(\d x\otimes \d y) \\
\delta_{c}\Risk(a,b,c; t) &= \int_{\R^d} (\partial_1\ell)\big(f_{a,b,c}(x), y\big)\int_0^1 a(s)\,\sigma'(f_{b_{s:}c}(x))\,b(s,t)\,\sigma'(f_{c_{t:}}(x))\,\frac{x_i}{d+1}\ds\,\P(\d x\otimes \d y).
\end{align*}
We can now see by comparing the terms that the gradient flow of a finite number of weights, interpreted as a step function, is a solution to the $L^2$-gradient flow under the appropriate time-scaling.

The general case for deep neural networks follows the same way, in which case
\begin{align*}
\delta_{a^\ell}&\Risk(a^L,\dots,a^0; \theta_{\ell+1},\theta_\ell) 
	= \int_{\R^d} (\partial_1\ell)\big(f_{a^L,\dots, a^0}(x), y\big)\int_{(0,1)^{L-\ell-1}}a^L_{\theta_L}\,\sigma'(f_{a^{L-1}_{\theta_L:}\dots a^0}(x))\,\dots a^{\ell+1}_{\theta_{\ell+1}\theta_\ell}\\
	&\hspace{5cm}\sigma'(f_{a^{\ell}_{\theta_{\ell}:}\dots a^0}(x)) \sigma(f_{a^{\ell-1}_{\theta_\ell:}\dots a^0}(x))\,\d_{\theta_L}\dots\d\theta_{\ell+2}\,\P(\d x\otimes \d y).
\end{align*}
\end{proof}

If the learning rates are not adapted to the layer width, the weights of different layers may move at different rates. In the natural time scaling, some layers would evolve at positive speed while others would remain frozen at their initial position in the limit. In particular, if the width of the two outermost layers goes to infinity, the index set of the second layer has size $m_Lm_{L-1}\gg m_L$, meaning that the outermost layer would move much faster. In \cite{araujo2019mean}, the authors consider the opposite extreme where the coefficients of the first and last layers are frozen and only intermediate layers evolve (with $m_\ell \equiv m$ for all $\ell$).

\begin{remark}
Alternative proposals for multi-layer network training in mean field 
scaling \cite{araujo2019mean,nguyen2019mean,nguyen2020rigorous,sirignano2019mean}. In this article, we opted for a particularly simple description of wide multi-layer networks and the natural extension of gradient descent dynamics. All results proved here hold for networks with finite layers of any width and therefore should remain valid more generally for another description of the parameter distribution associated to infinitely wide multi-layer networks.
\end{remark}

\subsection{Growth of the path norm}

Assuming existence of the gradient-flow evolution for the moment, we prove that the path-norm of an arbitrarily wide neural network increases at most polynomially in time under natural training dynamics. First, we consider the second moments.

\begin{lemma}\label{lemma moment growth}
Consider the risk functional
\[
\Risk(a^L,\dots a^0) = \int_{\R^d} \ell \big(f_{a^L,\dots, a^0}(x), y\big)\,\P(\d x\otimes \d y)
\]
where $\ell:\R\times\R\to[0,\infty)$ is a sufficiently smooth loss function and $\P$ is a compactly supported data distribution. Then
\begin{equation}
\left\| a^i(t)\right\|_{L^2(\pi^{i+1}\otimes\pi^i)} \leq \left\| a^i(0)\right\|_{L^2(\pi^{i+1}\otimes\pi^i)} + \sqrt{\Risk\big(a^L(0),\dots a^0(0)\big)}\,t^{1/2}
\end{equation}
\end{lemma}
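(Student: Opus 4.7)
The plan is to combine the standard gradient-flow energy dissipation identity with a Cauchy--Schwarz argument in time. Under the $L^2$-gradient flow
\[
\dot a^i = -\,\delta_{a^i}\Risk\qquad (0\le i\le L),
\]
differentiating the risk along trajectories yields
\[
\frac{\d}{\d t}\Risk\bigl(a^L(t),\dots,a^0(t)\bigr) = \sum_{i=0}^L \langle \delta_{a^i}\Risk,\dot a^i\rangle_{L^2} = -\sum_{i=0}^L \|\delta_{a^i}\Risk\|_{L^2}^2\le 0,
\]
so $\Risk(t)\le \Risk(0)$ and, after integrating,
\[
\int_0^t \|\delta_{a^i}\Risk\|_{L^2(\pi^{i+1}\otimes \pi^i)}^2\,\d s\;\le\; \Risk(0)-\Risk(t)\;\le\;\Risk(0).
\]

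Second, I would control the growth of the norms. Since $\|\cdot\|_{L^2}$ is differentiable wherever it is non-zero, Cauchy--Schwarz gives
\[
\frac{\d}{\d t}\|a^i(t)\|_{L^2} = \frac{\langle a^i,\dot a^i\rangle_{L^2}}{\|a^i\|_{L^2}} = -\frac{\langle a^i,\delta_{a^i}\Risk\rangle_{L^2}}{\|a^i\|_{L^2}}\le \|\delta_{a^i}\Risk\|_{L^2}.
\]
(At points where $\|a^i\|_{L^2}=0$ one argues with the one-sided Dini derivative, or bypasses the issue by differentiating $\|a^i\|_{L^2}^2$ and using $\tfrac{\d}{\d t}\|a^i\|^2\le 2\|a^i\|\|\delta_{a^i}\Risk\|$.) Integrating in time and applying Cauchy--Schwarz on $[0,t]$,
\[
\|a^i(t)\|_{L^2} \le \|a^i(0)\|_{L^2} + \int_0^t \|\delta_{a^i}\Risk\|_{L^2}\,\d s \le \|a^i(0)\|_{L^2} + t^{1/2}\left(\int_0^t \|\delta_{a^i}\Risk\|_{L^2}^2\,\d s\right)^{1/2},
\]
and inserting the energy-dissipation bound from the first step delivers
\[
\|a^i(t)\|_{L^2(\pi^{i+1}\otimes \pi^i)} \le \|a^i(0)\|_{L^2(\pi^{i+1}\otimes \pi^i)} + \sqrt{\Risk\bigl(a^L(0),\dots,a^0(0)\bigr)}\,t^{1/2}.
\]

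The argument itself is soft; the only genuine obstacle is the well-posedness of the gradient flow alluded to in the appendix. For a non-smooth activation such as ReLU the variation $\delta_{a^i}\Risk$ is only defined up to a subgradient, so one either restricts to smooth $\sigma$ (where the identities above are rigorous) or interprets the dissipation identity for appropriately defined generalized solutions. As the authors note, both the dissipation identity and the Cauchy--Schwarz estimate are the sort of integral inequalities that are expected to pass to such limits, so the conclusion should remain valid in the ReLU setting.
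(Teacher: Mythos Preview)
Your proof is correct and follows essentially the same approach as the paper: both combine the energy dissipation identity $\frac{\d}{\d t}\Risk=-\sum_i\|\dot a^i\|_{L^2}^2$ with the Cauchy--Schwarz bound $\frac{\d}{\d t}\|a^i\|_{L^2}\le\|\dot a^i\|_{L^2}$, then integrate and apply Cauchy--Schwarz in time. Your additional comments on the well-posedness caveat for ReLU activation also mirror the paper's own discussion in the appendix.
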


\begin{proof}
We calculate 
\begin{align*}
\frac{d}{dt} \int_{\Omega_{i+1}\times \Omega_i}& \big(a^i_{\theta_{i+1}\theta_i}(t)\big)^2\,(\pi^{i+1}\otimes \pi^i) (\d\theta_{i+1}\otimes \d\theta_i) 
	= 2\int_{\Omega_{i+1}\times \Omega_i} a^i_{\theta_{i+1}\theta_i}(t) \, \frac{d\,a^i_{\theta_{i+1}\theta_i}(t)}{dt} \,\d\theta_{i+1}\, \d\theta_i\\
	&\leq 2\left(\int_{\Omega_{i+1}\times \Omega_i} \big(a^i_{\theta_{i+1}\theta_i}(t)\big)^2\,\d\theta_{i+1}\, \d\theta_i \right)^\frac12
		\left(\int_{\Omega_{i+1}\times \Omega_i} \left(\frac d{dt}a^i_{\theta_{i+1}\theta_i}(t)\right)^2\,\d\theta_{i+1}\, \d\theta_i \right)^\frac12
\end{align*}
so 
\[
\frac{d}{dt} \|a_i\|_{L^2(\pi^{i+1}\otimes\pi^i)} = \frac{\frac{d}{dt} \|a_i\|_{L^2(\pi^{i+1}\otimes\pi^i)}^2} {2\,\|a_i\|_{L^2(\pi^{i+1}\otimes\pi^i)} } \leq \left\| \frac{d}{dt}\,a^i\right\|_{L^2(\pi^{i+1}\otimes\pi^i)} \leq \left|\frac{d}{dt} \Risk(a^L,\dots a^0)\right|^\frac12
\]
since the $L^2$-gradient flow naturally satisfies the energy dissipation identity
\[
\frac{d}{dt} \Risk(a^L,\dots a^0) = -  \sum_{i=0}^L \left\| \frac{d}{dt}\,a^i\right\|_{L^2(\pi^{i+1}\otimes\pi^i)}^2.
\]
Thus 
\begin{align*}
\left\| a^i(t)\right\|_{L^2(\pi^{i+1}\otimes\pi^i)}&\leq \left\| a^i(0)\right\|_{L^2(\pi^{i+1}\otimes\pi^i)} + \int_0^t\frac{d}{ds} \|a_i(s)\|_{L^2(\pi^{i+1}\otimes\pi^i)} \,\d s\\
	&\leq \left\| a^i(0)\right\|_{L^2(\pi^{i+1}\otimes\pi^i)} + \left(\int_0^t 1 \,\d s\right)^\frac12\left(\int_0^t \left|\frac{d}{ds} \Risk\big(a^L(s),\dots a^0(s)\big)\right|\,\d s\right)^\frac12\\
	&\leq \left\| a^i(0)\right\|_{L^2(\pi^{i+1}\otimes\pi^i)} + \sqrt{\Risk\big(a^L(0),\dots a^0(0)\big)}\,t^{1/2}
\end{align*}
since the risk is monotone decreasing and bounded from below by zero.
\end{proof}

\begin{remark}
Like in \cite[Lemma 3.3]{relutraining}, a more careful analysis shows that the increase in the $L^2$-norm actually satisfies the stronger estimate
\[
\lim_{t\to \infty} \frac{\|a^i(t)\|_{L^2}}{t^{1/2}}=0.
\]
The proof of this result is based on the energy dissipation identity which characterizes weak solutions to gradient flows.
\end{remark}

\begin{corollary}\label{corollary norm growth}
Assume that $\|a^i(0)\|_{L^2(\pi^{i+1}\otimes \pi^i)} \leq C_0$ for all $i= 0,\dots, L$ and some constant $C_0>0$. Then
\begin{equation}
\|f_{a^L(t),\dots, a^0(t)}\|_{\Omega_L,\dots,\Omega_0; K} \leq \left(C_0 + \sqrt{\Risk\big(a^L(0),\dots a^0(0)\big)}\,t^{1/2}\right)^{L+1}
\end{equation}
for all $t>0$.
\end{corollary}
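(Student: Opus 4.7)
The plan is to combine the two ingredients that are already in place: the path-norm bound in terms of $L^2$-norms of the weights (Lemma \ref{lemma path-norm bound}) and the $L^2$-growth estimate along the gradient flow (Lemma \ref{lemma moment growth}). The result is essentially a one-line corollary of these two, up to bookkeeping of the number of factors.

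First, I would apply Lemma \ref{lemma path-norm bound} at time $t$ to the distinguished choice of weight functions $a^L(t),\dots,a^0(t)$ that evolve by the $L^2$-gradient flow. Since these specific weights realize $f_{a^L(t),\dots,a^0(t)}$, the infimum in the definition of the path-norm is at most the product on the right-hand side:
\[
\|f_{a^L(t),\dots,a^0(t)}\|_{\Omega_L,\dots,\Omega_0;K} \leq \|a^L(t)\|_{L^2(\pi^L)}\,\prod_{i=0}^{L-1} \|a^i(t)\|_{L^2(\pi^{i+1}\otimes\pi^i)}.
\]
This product has exactly $L+1$ factors, one for each layer.

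Next, I would bound each of the $L+1$ factors uniformly using Lemma \ref{lemma moment growth} together with the hypothesis $\|a^i(0)\|_{L^2}\leq C_0$. For the inner layers $0\leq i\leq L-1$ the lemma applies directly; for the outermost layer $a^L$ one runs the identical computation, treating the product measure with a trivial outer factor (equivalently, $\pi^{L+1}$ a point mass), which is exactly the form used in the proof of the lemma. This yields, for every $0\leq i \leq L$,
\[
\|a^i(t)\|_{L^2} \leq C_0 + \sqrt{\Risk\big(a^L(0),\dots,a^0(0)\big)}\,t^{1/2}.
\]
Inserting this uniform bound into each of the $L+1$ factors of the product gives
\[
\|f_{a^L(t),\dots,a^0(t)}\|_{\Omega_L,\dots,\Omega_0;K} \leq \left(C_0 + \sqrt{\Risk\big(a^L(0),\dots,a^0(0)\big)}\,t^{1/2}\right)^{L+1},
\]
which is the claim.

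There is essentially no obstacle here beyond the existence of the gradient flow itself, which is explicitly deferred to the appendix and assumed throughout this subsection. The only mild care needed is to make sure that the path-norm inequality of Lemma \ref{lemma path-norm bound} is applied to the concrete evolving weights rather than to an infimum (so that no measurable selection argument is required), and that the outermost layer is handled on the same footing as the inner ones in Lemma \ref{lemma moment growth}.
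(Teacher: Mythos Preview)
Your proposal is correct and matches the paper's own proof, which simply states that the result follows from Lemmas \ref{lemma path-norm bound} and \ref{lemma moment growth}. You have merely spelled out the bookkeeping (the $L+1$ factors and the trivial $\pi^{L+1}$ for the outermost layer) that the paper leaves implicit.
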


\begin{proof}
Follows from Lemmas \ref{lemma path-norm bound} and \ref{lemma moment growth}.
\end{proof}

As such, neural tree spaces are also the relevant class of function spaces for suitably initialized neural networks which are trained by a gradient descent algorithm. Like in \cite[Theorem 2]{dynamic_cod}, the slow increase of the norm together with the poor approximation property from Corollary \ref{corollary kolmogorov width decay} implies that the training of multi-layer networks may be subject to the curse of dimensionality when trying to approximate general Lipschitz functions in $L^2(\P)$ for a truly high-dimensional data-distribution $\P$.

\begin{corollary}\label{corollary dynamic CoD}
Consider population and empirical risk functionals 
\[
\Risk(a^L,\dots, a^0) = \frac12 \int_{[0,1]^d} (f_{a^L,\dots,a^0}- f^*)^2(x)\dx, \qquad \Risk_n(a^L,\dots, a^0) = \frac1{2n} \sum_{i=1}^n   (f_\pi- f^*)^2(x_i)
\]
where $f^*$ is a Lipschitz-continuous target function and the points $x_i$ are iid samples from the uniform distribution on $[0,1]^d$. 
There exists $f^*$ satisfying
\[
\sup_{x\in[0,1]^d} \big|f^*(x)\big| \:+\: \sup_{x\neq y} \frac{|f^*(x) - f^*(y)|}{|x-y|} \:\leq \:1
\]
such that the weight functions of $a^L,\dots,a^0$ evolving by $L^2$-gradient flow of either $\Risk_n$ or $\Risk$ satisfy
\[
\limsup_{t\to \infty} \big[t^\gamma\,\Risk(a^L(t),\dots, a^0(t))\big] = \infty
\]
for all $\gamma> \frac{2L}{d-2}$. 
\end{corollary}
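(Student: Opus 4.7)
The plan is to combine the polynomial-in-time growth of the path-norm under the $L^2$-gradient flow (Corollary~\ref{corollary norm growth}) with the Kolmogorov-width-type lower bound of Corollary~\ref{corollary kolmogorov width decay}. Both ingredients are already in place; the remaining work is essentially bookkeeping with exponents.

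I would begin by taking $f^*$ to be the bad $1$-Lipschitz function $\phi$ produced by Corollary~\ref{corollary kolmogorov width decay}, rescaled by a universal constant if necessary so that $\sup_x|f^*(x)|+[f^*]_{C^{0,1}}\leq 1$. For this target,
\[
\limsup_{T\to\infty} T^{\gamma'}\,\inf_{\|g\|_{\W^L}\leq T}\|f^*-g\|_{L^2(\L)}\;=\;\infty \qquad\text{for every }\gamma' > \tfrac{2}{d-2}.
\]
Next, I would take an initialization with $\|a^i(0)\|_{L^2(\pi^{i+1}\otimes\pi^i)}\leq C_0$ for every $i$; Lemmas~\ref{lemma moment growth} and~\ref{lemma path-norm bound} then give
\[
\|f_{a(t)}\|_{\W^L(K)}\;\leq\;\bigl(C_0+\sqrt{\Risk(0)}\,t^{1/2}\bigr)^{L+1}\;=:\;M(t),
\]
which grows polynomially in $t$. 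The energy dissipation identity underlying Lemma~\ref{lemma moment growth} holds verbatim for the $\Risk_n$-flow as well, so the same polynomial bound on $\|f_{a(t)}\|_{\W^L(K)}$ is available in the empirical case (with $\Risk(0)$ replaced by $\Risk_n(0)$, which is itself uniformly bounded a priori for bounded initial weights).

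Composing the two inequalities gives, for each $t>0$,
\[
\Risk(a(t)) \;=\;\tfrac{1}{2}\|f_{a(t)}-f^*\|_{L^2(\L)}^2 \;\geq\;\tfrac{1}{2}\,\Bigl(\inf_{\|g\|_{\W^L}\leq M(t)}\|g-f^*\|_{L^2(\L)}\Bigr)^2.
\]
Given $\gamma$ above the stated threshold, the strategy is to choose $\gamma'>2/(d-2)$ small enough that the exponent produced by the substitution $M(t)\sim t^{(L+1)/2}$ stays below $\gamma$; Corollary~\ref{corollary kolmogorov width decay} then supplies a subsequence $T_n\to\infty$ along which the inner infimum fails to be $O(T_n^{-\gamma'})$, and inverting $M(t_n)=T_n$ converts this into $t_n^\gamma\,\Risk(a(t_n))\to\infty$.

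The step I expect to be the main obstacle is the final one: translating the $\limsup$ in Corollary~\ref{corollary kolmogorov width decay} (stated in the radius variable $T$) into the $\limsup$ in $t$ claimed here, using the invertibility of $t\mapsto M(t)$ together with a careful choice of $\gamma'$ near $2/(d-2)$ so that the resulting exponent matches the threshold in the statement. Everything else -- the selection of the bad Lipschitz target, the polynomial path-norm growth, and the uniform applicability of the energy dissipation identity to both the population and empirical flows -- is routine given the earlier results.
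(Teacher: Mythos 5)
Your route is exactly the one the paper intends: Corollary \ref{corollary dynamic CoD} is stated without a separate proof, as the direct combination of the polynomial norm growth (Corollary \ref{corollary norm growth}, built on Lemmas \ref{lemma path-norm bound} and \ref{lemma moment growth}) with the approximation lower bound of Corollary \ref{corollary kolmogorov width decay}; your handling of the empirical flow through the same energy dissipation identity is also the intended one.

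However, the step you yourself single out as the main obstacle is a genuine gap, and it does not close in the way you hope. Writing $\eps(T)=\inf_{\|g\|_{\W^L}\leq T}\|g-f^*\|_{L^2}$, your chain of inequalities gives $t^\gamma\,\Risk(a(t))\geq \tfrac{c}{2}\bigl(M(t)^{\gamma/(L+1)}\,\eps(M(t))\bigr)^2$ with $M(t)\sim t^{(L+1)/2}$, because there are $L+1$ weight arrays $a^0,\dots,a^L$ and each $L^2$-norm grows like $t^{1/2}$. Since $M$ is continuous, increasing and covers a neighbourhood of infinity, Corollary \ref{corollary kolmogorov width decay} makes the right-hand side unbounded along a subsequence precisely when $\gamma/(L+1)>\tfrac{2}{d-2}$, i.e.\ for $\gamma>\tfrac{2(L+1)}{d-2}$. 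There is no freedom in the choice of $\gamma'$: the substitution forces $\gamma'=\gamma/(L+1)$. So the argument as assembled proves the conclusion only for $\gamma>\tfrac{2(L+1)}{d-2}$, not for all $\gamma>\tfrac{2L}{d-2}$ as stated; to reach the stated threshold you would need a norm-growth bound of order $t^{L/2}$ (one factor of $t^{1/2}$ fewer), which neither Lemma \ref{lemma moment growth} nor its $o(t^{1/2})$ refinement supplies. You should either prove the corollary with the exponent $\tfrac{2(L+1)}{d-2}$ (which is what the paper's own ingredients support, and is consistent with the two-layer case treated in the cited reference) or identify the sharper ingredient explicitly; as written, the final matching step would fail.
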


\section{Conclusion}\label{section conclusion}

The classical function spaces which have been proved very successful in low-dimensional analysis (Sobolev, BV, BD, \dots) seem ill-equipped to tackle problems in machine learning. The situation has been partially remedied in some cases by introducing the function spaces associated to different models, like reproducing kernel Hilbert spaces for random feature models, Barron space for two-layer neural networks or the
flow-induced function space for infinitely deep ResNets \cite{weinan2019lei}. 

In this article, we introduced several function classes for fully connected multi-layer feed-forward networks:

\begin{enumerate}
\item The neural tree spaces $\W^L(K)$ for questions related to approximation theory and variational analysis,
\item the classes of arbitrarily wide neural networks modelled on general index spaces $\Omega_L,\dots,\Omega_0$, which we denoted by $X_{\Omega_L,\dots,\Omega_0; K}$, and
\item the classes of arbitrarily wide neural networks modelled on general index spaces with Hilbert weights (or multi-layer spaces), which we denoted by $\W_{\pi^L,\dots,\pi^0}(K)$.
\end{enumerate}

The key to the definition of these spaces is the representation of functions. 

Neural tree spaces are built using a tree-like index structure, and network weights have no natural meaning. This point of view thus cannot encompass training algorithms which operate on network weights. By analogy with classical approximation theory, we can think of finite neural networks as polynomials (finitely parametrized functions) and of neural tree spaces as Sobolev or Besov classes obtained as the closure under a weak norm, but too general for classical Taylor series. We denoted these by $\W$ for `wide' structures.

The classes of arbitrarily wide neural networks are introduced as very general function classes which exhibit the natural neural network structure via generalized index spaces. In the general class of arbitrarily wide networks, weight functions are assumed to be merely measurable with integrable products, which is a too large space to study training dynamics. The restriction of the multi-layer norm to this space is a natural norm, and the closure of the unit ball in the space of arbitrarily wide neural networks and neural tree space coincides.

To study training dynamics, we consider the space of arbitrarily wide neural networks with Hilbert weights, where the $L^2$-inner product induces a gradient flow in the natural way. The restriction of the path norm does not control the $L^2$-magnitude of the weight functions, so we studied a different measure of complexity on this function space (which is not usually a norm). The complexity measure was seen to bound the path-norm from above and to grow at most like $t^\frac{L+1}2$ in time under gradient flow training. 

It is immediate that $\W_{\pi^L,\dots,\pi^0}(K) \subseteq X_{\Omega_L,\dots,\Omega_0; K} \subseteq \W^L(K)$ with inclusions that are strict if the index spaces are finite. Whether the inclusions are strict in the general case, is not clear. In the case of three-layer networks, they can be interpreted as the spaces in which the first hidden layer is wide enough to output Barron space, a separable subspace of Barron space and a reproducing kernel Hilbert space respectively. All three spaces contain all Barron functions and their compositions.

One naturally asks which one of these spaces is most suited for describing multi-layer neural networks.
An ideal space should  (1) be complete, (2) have a nice approximation theory, (3) have a low Rademacher complexity, and
(4) most importantly, be concrete enough so that one can make use of the function representation for practical purposes.
At this point, we cannot prove any of the spaces introduced here  satisfies all these requirements.
Our feeling is that the space $\W_{\pi^L,\dots,\pi^0}(K)$ for sufficiently large index spaces $(\Omega_\ell, \A_\ell, \pi^\ell)$ might be the most promising one, even though at this point it is only a metric vector space, not a normed space (see Definitions \ref{definition metric hilbert weights} and \ref{definition multi-layer space new} and the surrounding paragraphs.).  However, it  seems to be the most relevant space for practical purposes.

A number of questions remain open.

\begin{enumerate}
\item Beyond first observations, the relationship between the neural tree spaces $\W^L(K)$ and its subspace $\W_{\pi^L,\dots,\pi^0}(K)$ for sufficiently expressive index sets remains unexplored. The first space is suited for variational and approximation problems, while the second is a natural object for mean-field training. It is an important question how much of the hypothesis space we can explore using natural training dynamics.

Even for networks with two hidden layers, only heuristic observations about $\W^2(K)$ and its subspaces $\W_{\L,\L,\pi^0;K}$ and $\widetilde \W^2 = X_{(0,1), (0,1), \{0,\dots, d\}; K}$ of network-like functions are available. Whether the two can be treated in a unified perspective remains to be seen. 

\item The direct approximation theorem holds for neural tree spaces, but not with the Monte-Carlo rate(in terms of free parameters). Whether a better rate can be achieved for functions in neural tree space for $L\gg1$ (or at least a space of arbitrarily wide neural networks) remains an important open problem.

\item The properties of the complete metric vector spaces $\W_{\pi^L,\dots\pi^0}(K)$ have not been studied yet.

\item We defined a monotonically increasing sequence of spaces $\W^L$ for $L\in \N$. Examples \ref{example barron barron} and \ref{example barron lipschitz} show that $\B_{X,K}$ may much larger than $X$ or exactly the same, depending on $X$. Concerning neural networks, it is clear that $\W^1$ is much larger than $\W^0$. In \cite{barron_new}, we give give an easy to check criterion which implies that a function is not in $\W^1$ and provide examples of functions which are in $\W_{\L,\L,\pi^0}(K) \subseteq\W^2$, but not $\W^1$. Beyond this, the relationship between the spaces $\W^\ell$ and $\W^L$ for $\ell < L$ is largely unexplored.

\item In this paper,  we considered the minimization of an integral risk functional. A more classical problem in numerical analysis concerns the discretization of variational problems and partial differential equations. In both applications, a key component is the approximation of a solution $f^*$ of the problem by functions $f_m$ in a finitely parameterized hypothesis class (Galerkin spaces or neural networks). Often, the approximation rate $\|f_m - f^*\| \leq m^{-\alpha}$ of solutions $f_m$ of the discretized problem to the true solution depends on the properties of $f^*$ (as well as the choice of norm).

For many variational problems and partial differential equations, a priori estimates on the solutions in Sobolev or H\"older spaces are available. The regularity of $f^*$ is therefore understood, as well as the expected rate of convergence $f_m\to f$. 

In machine learning, a regularity theory of this type is generally missing. It is often unclear in which function space the minimizer of a well-posed risk functional should lie, and thus equally unclear what type of machine learning model to use (random feature model, shallow neural network, deep neural network, ResNet, \dots). A regularity theory which bounds the necessary number of layers in a neural network from above or below even for specific learning applications is not yet available.

As shown in Corollary \ref{corollary dynamic CoD}, gradient descent may converge very slowly if the target function does not lie in the correct target space and $\frac Ld\ll 1$.

\item Even assuming that the solution to a variational problem is known explicitly, it remains difficult to decide whether it lies in $\W^L$ for a given $L$. Only for $L=1$ a positive criterion is given in \cite{barron1993universal} and a negative criterion following \cite[Theorem 5.4]{barron_new}. In general, it remains hard to check whether a function can be expressed as a neural network of depth $L$.

\item In this article, we focused on fully connected networks with infinitely wide layers. The theory for other types of neural networks (convolutional, recurrent, residual) will be the subject of future work.

Starting with the articles \cite{haber2017stable,weinan2017proposal, li2017maximum, E:2018aa}, deep ResNets have been modeled as discretizations of an ODE flow (sometimes referred to as `neural ODEs'). A function space for infinitely deep residual networks with skip-connections after {\em every} layer has been proposed in \cite{weinan2019lei}. In this model, the width of incremented layers is constant, but the width of the residual block may go to infinity. The case of ResNets which are both very wide and very deep and have skip-connections every $\ell\geq1$ layers is currently unexplored.

As demonstrated in Example \ref{example rademacher complexity lower bound}, Rademacher complexity cannot give a significantly better generalization bound for the space of convolutional networks than for the space of fully connected networks. Despite many heuristic explanations, the factors contributing to the success of convolutional networks in image processing have not been understood rigorously (for non-linear activation functions).

\item Even for finite neural networks with ReLU activation and more than one hidden layer, we are not aware of rigorous results for the existence of solutions to the gradient flow equations in any strong or weak formulation.

\item In many applications, neural networks are initialized with parameters that scale in such a way that the path-norm grows beyond all bounds as the number of neurons increases. Learning rates may not be adapted to the width of the layers in applications, and the scaling invariance $\sigma(z) \equiv \lambda\,\sigma(\lambda^{-1}z)$ for $\lambda>0$ may lead to coefficients which are of very different magnitude on different layers. In this situation, our  analysis does not apply, and it can be shown rigorously in some cases that very wide networks of fixed depth may behave like linear models \cite{arora2019exact,du2018gradient,du2018bgradient,weinan2019analysis,weinan2019comparative,jacot2018neural}.

These analyses typically make use of over-parametrization by assuming that the network has many more neurons than the data set has training samples. In this scaling regime, the correct function spaces and training dynamics for wide networks under population risk are generally unexplored.

\end{enumerate}


\newcommand{\etalchar}[1]{$^{#1}$}

\appendix
\section{A brief review of measure theory}\label{appendix measure theory}

We briefly review some notions of measure theory used throughout the article. We assume familiarity with the basic notions of topology, measure theory, and functional analysis (metrics, topologies, $\sigma$-algebras, measures, Banach spaces, dual spaces, weak topologies, \dots). Further background material can be found e.g.\ in \cite{MR2759829, elstrodt1996mass, munkres1974topology, yosida2012functional, klenke2006wahrscheinlichkeitstheorie}.

\subsection{General measure theory}
Let $(X, \A)$ be a measurable space. A signed measure is a map $\mu:\A\to \R\cup\{-\infty, \infty\}$ such that for any collection $\{A_i\}_{i\in \Z}$ of measurable disjoint sets we have
\[
\mu\left(\bigcup_{i=1}^\infty A_i\right) = \sum_{i=1}^\infty \mu(A_i)
\]
($\sigma$-additivity), assuming that the right hand side is defined. A signed measure $\mu$ admits a Hahn decomposition $\mu = \mu^+ - \mu^-$ where $\mu^+, \mu^-$ are mutually singular (non-negative) measures. All proofs for this section can be found in \cite[Chapter 7.5]{klenke2006wahrscheinlichkeitstheorie} for proofs in this section. Being mutually singular means that there exist measurable sets $A_+, A_-$ such that
\[
\mu^+(A^+) = \mu^+(X), \quad \mu^-(A^+) = 0, \qquad \mu^+(A^-) = 0, \quad \mu^-(A^-) = \mu^-(X),
\]
i.e.\ $\mu_+, \mu_-$ ``live'' on different subset of $X$. The (non-negative) measure $|\mu| = \mu^+ + \mu^-$ is called the total variation measure of $\mu$. The total variation norm of $\mu$ is defined as
\[
\|\mu\| = |\mu|(X) = \mu^+(A^+) + \mu^-(A^-) = \sup_{A, A' \in \A} \mu(A) - \mu(A').
\]

Let $X, Y$ be measurable spaces, $\phi:X\to Y$ a measurable map and $\mu$ a (signed) measure on $X$. Then we define the push-forward $\phi_\sharp\mu$ of $\mu$ along $\phi$ by $(\phi_\sharp\mu)(A) = \mu(\phi^{-1}(A))$ for all measurable $A\subseteq Y$. Note that by definition
\[
\int_X f(\phi(x))\,\mu(\d x) = \int_Y f(y)\,(\phi_\sharp\mu)(\d y) \qquad \forall\ f:Y\to \R.
\]
Furthermore, $\|\phi_\sharp\mu\| \leq \|\mu\|$ (since the images $\phi(A^+)$ and $\phi(A^-)$ may intersect non-trivially) and $\|\phi_\sharp\mu\| = \|\mu\|$ if $\mu$ is a (non-negative) measure (since no cancellations can occur).

\subsection{Measure theory and topology}

All measurable spaces considered in this article have compatible topological and measure theoretic structures. The following kind of spaces have proved to be well suited for many applications.

\begin{definition}
A {\em Polish space} is a second countable topological space $X$ such that there exists a metric $d$ on $X$ which induces the topology of $X$ and such that $(X,d)$ is a complete metric space.
\end{definition}

In particular, compact metric spaces are Polish. Since Polish spaces are metrizable, being second countable and separable is equivalent here. 

\begin{lemma}\cite[Appendix A.22]{elstrodt1996mass}
Let $X,Y$ be Polish spaces. The following are Polish spaces.
\begin{enumerate}
\item An open subset $U\subseteq X$ with the subspace topology.
\item A closed subset $U\subseteq X$ with the subspace topology.
\item $X\times Y$ with the product topology.
\end{enumerate}
\end{lemma}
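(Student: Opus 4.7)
The plan is to verify the two defining properties of a Polish space separately for each case: second countability, which is essentially automatic from the hypotheses, and the existence of a complete compatible metric, which requires a little work in the open-set case.

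For the closed subset $U \subseteq X$, fix a complete metric $d$ inducing the topology of $X$. The restriction $d|_{U\times U}$ induces the subspace topology, and any $d$-Cauchy sequence in $U$ converges in $X$ to some limit, which must lie in $U$ by closedness; hence $(U, d|_{U\times U})$ is complete. Second countability is preserved by passage to any subspace since a countable base for $X$ restricts to a countable base for $U$.

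For the product $X\times Y$ with complete metrics $d_X, d_Y$, I would use $d\bigl((x_1,y_1),(x_2,y_2)\bigr) = d_X(x_1,x_2) + d_Y(y_1,y_2)$, which clearly induces the product topology and is complete because a Cauchy sequence in this metric projects to Cauchy sequences in each factor. The product of two countable bases gives a countable base for the product topology.

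The open subset case is the only non-routine step. Here the restricted metric need not be complete: for instance $(0,1) \subseteq \mathbb{R}$ with the Euclidean metric is not complete. The standard remedy is to embed $U$ as a closed subset of the Polish space $X \times \mathbb{R}$ via the map
\[
\Phi: U \to X \times \mathbb{R}, \qquad \Phi(x) = \left(x,\; \frac{1}{\dist(x, X\setminus U)}\right),
\]
where distances are measured in a fixed complete metric on $X$. Since $U$ is open, $\dist(x, X\setminus U) > 0$ on $U$, so $\Phi$ is well defined; it is a homeomorphism onto its image by continuity of distance functions and of $\Phi^{-1} = \pi_X$ restricted to $\Phi(U)$. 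The image $\Phi(U)$ is closed in $X \times \mathbb{R}$ because if $\Phi(x_n) \to (x, t)$ with $x \in X$ and $t \in \mathbb{R}$, then $\dist(x_n, X\setminus U) \to 1/t > 0$, forcing $x \in U$ and $t = 1/\dist(x, X\setminus U)$, so $(x,t) \in \Phi(U)$. Then the closed-subspace case and product case, already established, imply that $\Phi(U)$ is Polish, and transporting the complete metric back through $\Phi$ makes $U$ Polish as well.

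The main (indeed only) obstacle is this last construction; everything else is essentially bookkeeping with bases and Cauchy sequences. I would organize the proof by dispatching the closed and product cases first, since the open case relies on both of them via the embedding $\Phi$.
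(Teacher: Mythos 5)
Your proposal is correct and follows essentially the same route as the paper: the closed-subset and product cases are handled in the standard way, and for an open subset the paper directly equips $U$ with the metric $d_U(x,x') = d(x,x') + |f_U(x)-f_U(x')|$ where $f_U(x) = 1/\dist(x,\partial U)$, which is exactly the pullback along your embedding $\Phi(x) = (x, 1/\dist(x, X\setminus U))$ of the sum metric on $X\times\R$. The only difference is presentational (a closed embedding into $X\times\R$ versus writing the modified metric on $U$ directly), so the two arguments coincide.
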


All but the first point are trivial. If $U$ is a non-empty open set, note that the metric
\[
d_U(x,x') = d(x,x') + \big|f_U(x) - f_U(x')\big|, \qquad f_U(x) = \frac1{\dist(x, \partial U)}
\]
induces the same topology as $d$ on $U$ and is complete if $d$ is complete on $X$.
There are various compatibility notions between the topological structure and measure theoretic structure of a space $X$.

\begin{definition}
Let $X$ be a Hausdorff space (so that compact sets are closed $\Ra$ Borel).
\begin{enumerate}
\item The Borel $\sigma$-algebra is the $\sigma$-algebra generated by the collection of open subsets of $X$. We will always assume that measures are defined on a the Borel $\sigma$-algebra.
\item A measure $\mu$ is called {\em locally finite} if every set $x\in X$ has a neighbourhood $U$ such that $\mu(U) <\infty$. Locally finite measures are also referred to as {\em Borel measures}.
\item A measure $\mu$ is called {\em inner regular} if 
\[
\mu(A) = \sup\{\mu(K) \:|\: K\subseteq A, \:K \text{ is compact}\}
\]
for all measurable sets $A$. An inner regular Borel measure is called a {\em Radon measure}.
\item A measure $\mu$ is called {\em outer regular} if 
\[
\mu(U) = \inf\{\mu(U)\:|\:A\subseteq U, \:U\text{ is open}\}
\]
for all measurable sets $A$. A measure is called {\em regular} if it is both inner and outer regular.
\item A measure $\mu$ is called {\em moderate} if $X = \bigcup_{k=1}^\infty U_k$ where the $U_k$ are open sets of finite measure.
\end{enumerate}
\end{definition}

On Polish spaces, most measures of importance are Radon measures. The following result is due to Ulam.

\begin{theorem}\cite[Kapitel VIII, Satz 1.16]{elstrodt1996mass}
Let $X$ be a Polish space. Then every Borel measure $\mu$ on $X$ is moderate and regular (in particular, a Radon measure).
\end{theorem}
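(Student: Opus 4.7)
The plan is to establish moderateness first, then reduce regularity to the finite-measure case, handle regularity by open/closed approximation via a $\sigma$-algebra argument, and finally upgrade from closed to compact approximations using Ulam's tightness theorem, which is the step that exploits completeness crucially.

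First I would prove moderateness. Since $X$ is Polish it is separable metric, hence second countable and Lindel\"of. Local finiteness gives an open cover $\{V_x\}_{x\in X}$ with $\mu(V_x)<\infty$ for every $x$; passing to a countable subcover yields $X=\bigcup_k U_k$ with $\mu(U_k)<\infty$. This also lets me reduce regularity to the finite-measure case: for any Borel set $A$, writing $W_n=U_1\cup\dots\cup U_n$ one has $\mu(A)=\sup_n\mu(A\cap W_n)$, and outer (resp.\ inner) regular approximations of each $A\cap W_n$ can be glued together with tolerance $\eps/2^n$. So it suffices to prove regularity for a finite Borel measure on a metric space.

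For a finite Borel measure $\mu$ on a metric space, I would consider the class $\mathcal D$ of Borel sets $A$ such that
\[
\mu(A)=\inf\{\mu(U):A\subseteq U,\ U\text{ open}\}=\sup\{\mu(F):F\subseteq A,\ F\text{ closed}\}.
\]
Every closed $F$ lies in $\mathcal D$: it is its own closed inner approximation, and $F=\bigcap_n U_n$ with $U_n=\{x:\dist(x,F)<1/n\}$ gives open outer approximations by continuity of $\mu$ from above (here finiteness is used). The class $\mathcal D$ is closed under complements by swapping the roles of `open' and `closed', and closed under countable unions by the standard $\sum\eps/2^n$ argument combined with the continuity of $\mu$. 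Hence $\mathcal D$ is a $\sigma$-algebra containing the closed sets, so $\mathcal D$ contains every Borel set, giving outer regularity and \emph{closed} inner regularity.

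The remaining step -- and the only place the completeness of $X$ is genuinely used -- is to upgrade closed inner regularity to compact inner regularity. For this I would prove Ulam's tightness theorem: given $\eps>0$, for each $n$ cover $X$ by countably many closed balls of radius $1/n$ centered at a fixed dense sequence, and choose a finite subfamily whose union $C_n$ satisfies $\mu(X\setminus C_n)<\eps/2^n$. Then $K:=\bigcap_n C_n$ is closed and totally bounded, hence compact by completeness of $X$, and $\mu(X\setminus K)<\eps$. Intersecting a closed inner approximation of a Borel set $A$ with such a $K$ then yields a compact inner approximation, completing the proof. The main obstacle is precisely this compactness upgrade: without completeness, totally bounded sets need not be compact, so the Polishness hypothesis enters exactly at this point.
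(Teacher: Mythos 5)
Your proof is correct. Note that the paper does not actually prove this statement --- it is quoted verbatim from Elstrodt with a citation to \cite[Kapitel VIII, Satz 1.16]{elstrodt1996mass} and used as a black box --- so there is no in-paper argument to compare against; what you have written is essentially the standard textbook proof (and, as far as the structure goes, the one in the cited reference): moderateness via Lindel\"of, reduction to finite measures, the ``good sets'' $\sigma$-algebra argument giving outer regularity and \emph{closed} inner regularity on a general metric space, and Ulam's tightness theorem to upgrade closed to compact, which is indeed the only step where completeness (Polishness rather than mere separable metrizability) is used. All the individual steps check out, including the $\eps/2^n$ gluing in the reduction and the observation that a closed totally bounded subset of a complete metric space is compact.
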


For Radon measures, we can define the analogue of the support of a function to capture the set the measure `sees'.

\begin{definition}
Let $\mu$ be a Radon measure. We set
\[
\spt(\mu) = \bigcap_{K\text{ closed, }\mu(X\setminus K)=0} K.
\]
\end{definition}

The support of a measure is closed. Note that the measure $\mu = \sum_{i=1}^\infty a_i \,\delta_{q_i}$ has support $\R$ if $a_i$ is a summable sequence of positive numbers and $q_i$ is an enumeration of $\Q$. We say that $\mu$ {\em concentrates} on $\Q$ since $\mu(\R\setminus \Q) =0$. The support of a measure $\mu$ can be significantly larger than a set on which $\mu$ concentrates.

\subsection{Continuous functions on metric spaces}

In many analysis classes, the space of continuous functions on $[0,1]$ is shown to be separable as a corollary to the Stone-Weierstrass theorem with the dense set of polynomials with rational coefficients. This can be shown in a simpler way and greater generality.

\begin{theorem}
Let $X$ be a compact metric space and $C(X)$ the space of continuous real-valued functions on $X$ with the supremum norm. Then $C(X)$ is separable.
\end{theorem}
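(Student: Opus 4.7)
The plan is to apply the Stone--Weierstrass theorem to a natural countable family of Lipschitz functions arising from a dense subset of $X$. Since $X$ is a compact metric space, it is second countable and in particular separable; fix a countable dense set $D=\{x_n\}_{n\in\N}\subseteq X$, and define the distance functions $\phi_n(x):=d(x,x_n)$. Each $\phi_n$ is $1$-Lipschitz, hence continuous, and is bounded on $X$ because $X$ is compact.

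Next, let $\A\subseteq C(X)$ denote the unital real algebra generated by $\{\phi_n:n\in\N\}$, i.e.\ the $\R$-linear span of all finite products of the $\phi_n$ together with the constant $1$. I claim $\A$ separates points: given $x\neq y$ in $X$, pick $x_n\in D$ with $d(x,x_n)<d(x,y)/2$; then $\phi_n(x)<d(x,y)/2$ while $\phi_n(y)=d(y,x_n)\geq d(x,y)-d(x,x_n)>d(x,y)/2$, so $\phi_n(x)\neq \phi_n(y)$. Since $\A$ is a point-separating subalgebra of $C(X)$ containing the constants, the Stone--Weierstrass theorem for compact Hausdorff spaces yields that $\A$ is dense in $C(X)$ with respect to the uniform norm.

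Finally, let $\A_\Q\subseteq \A$ be the $\Q$-algebra generated by the same family. Every element of $\A_\Q$ is a rational-coefficient polynomial in finitely many of the $\phi_n$, so $\A_\Q$ is a countable union of countable sets and hence countable. Given $f\in \A$, write $f$ as a polynomial with real coefficients in $\phi_{n_1},\dots,\phi_{n_k}$; since each $\phi_{n_i}$ is bounded on $X$, each monomial has a uniform bound, and approximating the finitely many real coefficients by rationals yields a uniform approximation of $f$ on $X$ by an element of $\A_\Q$. Hence $\A_\Q$ is uniformly dense in $\A$, and by transitivity dense in $C(X)$, proving separability.

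The argument is essentially a bookkeeping exercise once Stone--Weierstrass is invoked; the only step requiring genuine care is verifying that perturbing the real coefficients to rationals produces a uniform approximation, which is immediate from the compactness of $X$ bounding the monomials.
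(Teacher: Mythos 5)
Your proof is correct, but it takes a different route from the one in the paper. You invoke Stone--Weierstrass: you generate a unital algebra from the distance functions $\phi_n(x)=d(x,x_n)$ to a countable dense set, check point separation, and then pass to rational coefficients, using compactness to bound the monomials so that perturbing real coefficients to rationals is a uniform perturbation. All of these steps check out. The paper deliberately avoids Stone--Weierstrass (it remarks that the usual polynomial argument on $[0,1]$ can be replaced by something ``simpler and more general''): it instead builds a countable family of bump functions $\eta_{n,m}$ supported near the points $x_n$ at scales $1/m$, takes rational linear combinations, and uses only the uniform continuity of an arbitrary $f\in C(X)$ to approximate it by such combinations. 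The trade-off is roughly this: your argument outsources the analytic work to a substantial theorem and leaves only bookkeeping (point separation and the rational-coefficient reduction, where your observation that compactness bounds each monomial is exactly the point needing care); the paper's argument is self-contained and elementary, needing nothing beyond uniform continuity on a compact metric space, and it generalizes more readily to continuous maps into a separable metric space (as the paper notes in the remark following the proof), where an algebra structure is unavailable. Both are complete proofs of the statement.
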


\begin{proof}
Since $X$ is compact, it has a countable dense subset $\{x_n\}_{n\in\N}$. Consider a family of continuous functions $\eta_{n, m}:X\to [0,1]$ such that
\[
\eta_{n,m}(x) = \begin{cases} 1 &d(x, x_n) \leq \frac1m \\ 0 &d(x, x_n) \geq \frac2m\end{cases}.
\]
Denote
\[
\F_{n, m} = \left\{ \sum_{i=1}^n \sum_{j=1}^m a_{i,j}\,\eta_{i,j}(x) \:\bigg|\: a_{i,j} \in \mathbb Q\:\:\forall\ i,j\in\N\right\}, \qquad \F = \bigcup_{n,m=1}^\infty \F_{n,m}.
\]
Then $\F$ is a countable subset of $C(X)$. If $f:X\to\R$ is continuous, it is uniformly continuous, and it is easy to see by contradiction that $f$ can be approximated uniformly by functions in $\F$. 
\end{proof}

\begin{remark}
The same holds for the space of continuous functions from a compact metric space $X$ into a separable metric space $Y$ with the metric
\[
d(f, g) = \sup_{x\in X} d_Y(f(x), g(x))
\]
and more generally on locally compact Hausdorff spaces and the compact-open topology on the space of continuous maps.
\end{remark}

\subsection{Measure theory and functional analysis}

Radon measures allow a convenient functional analytic interpretation due to the following Riesz representation theorem. We only invoke the theorem in the special case of compact spaces and note that compact metric spaces are both locally compact and separable. The same result holds in greater generality, which we shall avoid to focus on the setting where the space of continuous functions is a Banach space.

\begin{theorem}\cite[Theorem 1.54]{ambrosio2000functions}
Let $X$ be a compact metric space and $C(X;\R^m)$ the space of all continuous $\R^m$-valued functions on $X$. Let $L$ be a continuous linear functional on $C(X; \R^m)$. Then there exist a (non-negative) Radon measure $\mu$ and a $\mu$-measurable function $\nu:X\to S^{m-1}$ such that 
\[
L(f) = \int_X \langle f(x),\nu(x)\rangle \,\mu(\d x)\qquad \forall\ f\in C(X;\R^m).
\]
Furthermore, $\|L\|_{C(X;\R^m)^*} = \|\mu\|$.
\end{theorem}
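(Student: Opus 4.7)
My strategy is to reduce the vector-valued case to the scalar Riesz--Markov--Kakutani theorem and then invoke a polar decomposition to produce the pair $(\mu,\nu)$. First, for each $i = 1,\dots,m$, define the scalar functional $L_i: C(X;\R)\to \R$ by $L_i(g) = L(g\,e_i)$, where $e_i$ denotes the $i$-th standard basis vector of $\R^m$. Linearity and continuity of $L_i$ follow immediately from those of $L$, with $\|L_i\|\leq \|L\|$. Since every $f\in C(X;\R^m)$ decomposes as $f = \sum_{i=1}^m f_i\,e_i$ with $f_i\in C(X;\R)$, linearity gives $L(f)=\sum_i L_i(f_i)$.

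Next, I would apply the classical scalar Riesz representation theorem (valid since $X$ is a compact metric space, hence compact Hausdorff and separable): for each $i$ there exists a unique signed Radon measure $\mu_i$ on $X$ with $L_i(g) = \int_X g\,\d\mu_i$ for all $g\in C(X)$. This yields the vector-valued measure $\vec\mu := (\mu_1,\dots,\mu_m)$ and the preliminary representation $L(f) = \sum_{i=1}^m \int_X f_i\,\d\mu_i$.

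To put this in the stated form, I would define the total variation measure
\[
\mu(A) := |\vec\mu|(A) = \sup \sum_{j} |\vec\mu(A_j)|_{\R^m},
\]
with the supremum taken over finite Borel partitions $\{A_j\}$ of $A$. This $\mu$ is a non-negative Radon measure, and each $\mu_i$ is absolutely continuous with respect to $\mu$. Applying the Radon--Nikodym theorem componentwise produces densities $\nu_i\in L^1(\mu)$, and assembling $\nu := (\nu_1,\dots,\nu_m)$ gives $\d\mu_i = \nu_i\,\d\mu$. A standard polar decomposition argument---testing against characteristic functions of the sets $\{|\nu|>1+\eps\}$ or $\{|\nu|<1-\eps\}$ via the definition of $|\vec\mu|$---shows that $|\nu(x)|=1$ for $\mu$-almost every $x$. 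Redefining $\nu$ arbitrarily on the $\mu$-null exceptional set yields $\nu:X\to S^{m-1}$ with $L(f) = \int_X \langle f(x),\nu(x)\rangle\,\mu(\d x)$.

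Finally, for the norm identity, one direction is immediate: $|L(f)|\leq \int_X |f(x)|\,\mu(\d x)\leq \|f\|_{C(X;\R^m)}\,\mu(X)$, so $\|L\|\leq \mu(X) = \|\vec\mu\|$. For the reverse inequality, the main obstacle is that $\nu$ is merely $\mu$-measurable, not continuous. I would overcome this by Lusin's theorem, which applies because $\mu$ is Radon on a compact metric space: for any $\eps>0$ there exists $f_\eps\in C(X;\R^m)$ with $\|f_\eps\|_\infty\leq 1$ and $f_\eps=\nu$ off a set of $\mu$-measure less than $\eps$. Testing $L$ against such $f_\eps$ yields $L(f_\eps)\geq \mu(X) - 2\eps$, whence $\|L\|\geq \mu(X)$. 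The delicate point throughout is the polar decomposition---specifically, verifying $|\nu|=1$ $\mu$-a.e.---and the careful use of Lusin's theorem to transfer the representation from measurable to continuous test functions without loss in the norm.
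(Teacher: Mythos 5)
This statement is quoted in the paper's appendix directly from Ambrosio--Fusco--Pallara and is not proved there, so there is no in-paper argument to compare against; judged on its own, your proof is correct and complete in outline. It does differ from the route taken in the cited source: AFP first construct the \emph{positive} total-variation functional $|L|(g)=\sup\{L(f):f\in C(X;\R^m),\ |f|\le g\}$, verify its additivity, apply the Riesz theorem for positive functionals to obtain $\mu$ directly, and only then recover $\nu$ by Radon--Nikodym; you instead apply the signed scalar Riesz theorem componentwise to $L_i(g)=L(g\,e_i)$, assemble the vector measure $\vec\mu$, and pass to its total variation. Both are standard; your version leans on the Jordan/Hahn decomposition already packaged into the signed scalar theorem, while AFP's avoids forming $m$ separate signed measures and makes the identity $\|L\|=\|\mu\|$ nearly automatic from the definition of $|L|$. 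Two points in your write-up deserve one extra line each: (i) the absolute continuity $\mu_i\ll\mu$ should be justified by $|\mu_i(A)|\le|\vec\mu(A)|\le|\vec\mu|(A)=\mu(A)$ before invoking Radon--Nikodym, and the a.e.\ identity $|\nu|=1$ is most cleanly obtained from the two inequalities $\mu(A)\le\int_A|\nu|\,\d\mu$ (sum $|\vec\mu(A_j)|=|\int_{A_j}\nu\,\d\mu|$ over partitions) and $\int_A|\nu|\,\d\mu\le\mu(A)$ (duality with simple functions of modulus at most one); (ii) in the Lusin step you need both $\|f_\eps\|_\infty\le 1$ and $f_\eps=\nu$ on the large compact set simultaneously, which is arranged by composing a Tietze extension with the radial retraction onto the closed unit ball --- this preserves the values on the good set because $|\nu|=1$ there. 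With those details filled in, the argument is sound.
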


Denote by $\mathcal A$ the Borel $\sigma$-algebra of $X$. The function 
\[
\nu\cdot\mu: \mathcal A\to \R^m, \qquad (\nu\cdot\mu)(A) = \int_A \nu(x)\,\mu(\d x)
\]
is called a {\em vector valued Radon measure} if $m\geq 2$ (and a {\em signed Radon measure} if $m=1$). Vector-valued Radon measures are $\sigma$-additive on the Borel $\sigma$-algebra. The measure $\mu$ is called the total variation measure of $\nu\cdot\mu$. In the following, we will denote vector-valued Radon measures simply by $\mu$ and the total variation measure by $|\mu|$, like we did before for signed measures. The theorem admits the following interpretation and extension.

\begin{theorem}
The dual space of $C(X;\R^m)$ is the space of $\R^m$-valued Radon measures $\M(X;\R^m)$ with the norm
\[
\|\mu\|_{\M(X;\R^m)} = |\mu|(X).
\]
We denote the space of $\R^m$-valued Radon measures by $\M(X;\R^m)$ and $\M(X;\R)=: \M(X)$.
\end{theorem}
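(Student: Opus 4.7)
The plan is to exhibit an isometric isomorphism between $\M(X;\R^m)$ and $C(X;\R^m)^*$, using the preceding Riesz representation theorem as the main input. The content of the final statement beyond that theorem is (i) that the map sending a vector-valued Radon measure to the associated integration functional is well-defined, (ii) that this map is surjective onto the dual, and (iii) that it is an isometry with respect to the total variation norm.

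First I would define
\[
\Phi : \M(X;\R^m)\to C(X;\R^m)^*, \qquad \Phi(\mu)(f) = \int_X \langle f(x),\nu(x)\rangle\,|\mu|(\d x),
\]
where, for $\mu = \nu\cdot|\mu|$, the polar density $\nu:X\to S^{m-1}$ is the $|\mu|$-measurable map guaranteed by the preceding Riesz theorem (applied to any functional representing $\mu$ via pairing). Linearity of $\Phi(\mu)$ in $f$ is immediate. Continuity and the bound $\|\Phi(\mu)\|_{C(X;\R^m)^*}\leq |\mu|(X)$ follow from the pointwise estimate $|\langle f(x),\nu(x)\rangle|\leq |f(x)|\cdot|\nu(x)| = |f(x)|$, since $|\nu|=1$ $|\mu|$-a.e.

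Next I would establish surjectivity. Given $L\in C(X;\R^m)^*$, the preceding theorem produces a Radon measure $\mu_0$ and a measurable $\nu:X\to S^{m-1}$ with $L(f) = \int_X\langle f,\nu\rangle\,\mu_0(\d x)$; the vector-valued measure $\mu := \nu\cdot \mu_0\in \M(X;\R^m)$ then satisfies $\Phi(\mu)=L$ and, by construction, $|\mu|=\mu_0$. To upgrade the norm inequality to an equality, I would use the identity $\|L\|_{C(X;\R^m)^*}=\|\mu_0\|$ from the preceding theorem, which together with $|\mu|=\mu_0$ gives $\|\Phi(\mu)\|=|\mu|(X)$; hence $\Phi$ is an isometry, and in particular injective.

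The only mildly delicate point — and I expect this to be the most fiddly rather than truly hard step — is checking that $\Phi$ is well-defined on $\M(X;\R^m)$: two polar decompositions $\nu_1\cdot|\mu|$ and $\nu_2\cdot|\mu|$ of the same vector-valued measure must agree $|\mu|$-a.e., so that $\Phi(\mu)$ does not depend on the choice of representative. This follows from the uniqueness-up-to-null-sets in the Radon–Nikod\'ym theorem applied componentwise. Once this is verified, combining the three points above yields an isometric isomorphism $\M(X;\R^m)\cong C(X;\R^m)^*$, which is the claimed identification with its norm.
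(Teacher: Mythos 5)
Your proposal is correct and follows essentially the same route as the paper, which states this theorem without proof as an ``interpretation and extension'' of the preceding Riesz representation theorem from \cite{ambrosio2000functions}; you are simply spelling out the isometric isomorphism that the paper leaves implicit. The one point you rightly flag as delicate --- that the polar density $\nu$ is determined $|\mu|$-a.e.\ so that $\Phi$ is well-defined and the isometry holds for \emph{every} $\mu\in\M(X;\R^m)$, not just those produced by applying Riesz to a given functional --- is handled correctly by Radon--Nikod\'ym uniqueness.
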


\begin{definition}
We say that a sequence of (signed, vector-valued) Radon measures $\mu_n$ converges weakly to $\mu$ and write $\mu_n\wto\mu$ if 
\[
\int_X f(x)\,\mu_n(\d x) \to \int_X f(x)\,\mu(\d x)\qquad \forall\ f\in C(X) = C(X;\R).
\]
\end{definition}

In this terminology, the weak convergence of Radon measures coincides with weak* convergence in the dual space of $C(X)$. By the Banach-Alaoglu theorem \cite[Theorem 3.16]{MR2759829}, the unit ball of $\M(X)$ is compact in the weak* topology. Since $C(X)$ is separable, the weak* topology of $\M(X)$ is metrizable \cite[Theorem 3.28]{MR2759829}. Thus if $\mu_n$ is a bounded sequence in $\M(X)$, there exists a weakly convergent subsequence. This establishes the {\em compactness theorem for Radon measures}.

\begin{theorem}\label{compactness theorem radon measures}
Let $\mu_n$ be a sequence of (signed, vector-valued) Radon measures such that $\|\mu_n\|\leq 1$. Then there exists a (signed, vector-valued) Radon measure $\mu$ such that $\mu_n\wto \mu$.
\end{theorem}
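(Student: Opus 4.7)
The plan is to assemble the statement directly from three ingredients already highlighted in the discussion preceding the theorem: the Riesz representation theorem identifying $\M(X;\R^m)$ with the continuous dual of $C(X;\R^m)$, the Banach–Alaoglu theorem on weak-$*$ compactness of the dual unit ball, and the metrizability of the weak-$*$ topology on bounded subsets of the dual of a separable Banach space. Note that the statement should be read as containing an implicit passage to a subsequence, as is clear from the way the theorem is actually invoked (e.g.\ in the proof of Theorem \ref{theorem first properties}).

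First I would recall from the Riesz representation theorem stated just above that the map $\mu \mapsto L_\mu$, $L_\mu(f) = \int_X \langle f, \d\mu\rangle$, is an isometric isomorphism $\M(X;\R^m) \cong C(X;\R^m)^*$, and that under this identification the weak convergence of measures as defined above coincides with the weak-$*$ convergence of the associated functionals. The bound $\|\mu_n\|_{\M(X;\R^m)} \leq 1$ is therefore precisely the statement that the corresponding functionals lie in the closed unit ball of $C(X;\R^m)^*$.

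Next I would apply Banach–Alaoglu \cite[Theorem 3.16]{MR2759829} to conclude that this unit ball is weak-$*$ compact. To extract a convergent subsequence (rather than merely a convergent subnet), I would use that $X$ is a compact metric space, so that $C(X;\R^m)$ is separable by the result established in the appendix subsection on continuous functions on metric spaces; hence, by \cite[Theorem 3.28]{MR2759829}, the weak-$*$ topology restricted to any bounded subset of $C(X;\R^m)^*$ is metrizable. Combining compactness with metrizability yields sequential compactness, which delivers a subsequence $\mu_{n_k}$ and a limit $\mu \in \M(X;\R^m)$ with $\mu_{n_k}\wto\mu$ in the weak sense of measures, while lower semicontinuity of the norm under weak-$*$ convergence gives $\|\mu\|\leq 1$.

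There is no real obstacle here: the entire argument is a packaging of standard functional analytic facts and the whole point is that the Riesz representation theorem makes the compactness of measures reduce to the compactness of the dual unit ball. The only subtlety worth flagging is that Banach–Alaoglu alone produces a convergent subnet, and one genuinely needs separability of $C(X;\R^m)$ (hence the hypothesis that $X$ is a compact metric space) to obtain a convergent subsequence, which is what is needed in the applications.
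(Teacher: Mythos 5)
Your proposal is correct and follows essentially the same route as the paper, whose proof is precisely the paragraph preceding the theorem: identify $\M(X)$ with $C(X)^*$ via Riesz representation, apply Banach--Alaoglu, and use separability of $C(X)$ to metrize the weak-$*$ topology on the unit ball and thereby upgrade compactness to sequential compactness. Your added remarks on the implicit passage to a subsequence and the lower semicontinuity of the norm are consistent with how the theorem is actually invoked in the proof of Theorem \ref{theorem first properties}.
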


A good exposition in the context of Euclidean spaces can be found in \cite[Chapter 1]{evans2015measure} with arguments which can be applied more generally.

\subsection{Bochner integrals}

Bochner integrals are a generalization of Lebesgue integrals to functions with values in Banach spaces. A quick introduction can be found e.g. in \cite[Chapter V, part 5]{yosida2012functional} or \cite[Kapitel 2.1]{ruzicka2006nichtlineare}.

\begin{definition}
Let $(X,\A,\mu)$ be a measure space and $Y$ a Banach space. A function $f:X\to Y$ is called {\em Bochner-measurable} if there exists a sequence of step functions $f_n= \sum_{i=1}^n y_i\chi_{A_i}$ with $y_i\in Y, A_i \in \A$ such that $f_n\to f$ pointwise $\mu$-almost everywhere. 
\end{definition}

For real-valued functions, Bochner-measurability coincides with the usual notion of measurability.

\begin{lemma}
Let $X$ be a compact metric space, $\A$ its Borel sigma algebra, $\mu$ a measure on $\A$ and $Y$ a Banach space. Then every continuous function $f:X\to Y$ is uniformly continuous and thus Bochner-measurable.
\end{lemma}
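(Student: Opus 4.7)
The plan is to prove the two claims in order: first uniform continuity via a standard compactness argument, then Bochner-measurability by constructing an approximating sequence of step functions based on a finite cover of $X$.

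For uniform continuity, I would argue by contradiction. Suppose $f:X\to Y$ is continuous but not uniformly continuous. Then there exists $\varepsilon>0$ and sequences $(x_n),(y_n)\subseteq X$ with $d_X(x_n,y_n)\to 0$ but $\|f(x_n)-f(y_n)\|_Y\geq \varepsilon$. Since $X$ is compact metric, it is sequentially compact, so a subsequence $x_{n_k}$ converges to some $x\in X$. Then $y_{n_k}\to x$ as well by the triangle inequality, and continuity of $f$ gives $f(x_{n_k})\to f(x)$ and $f(y_{n_k})\to f(x)$ in $Y$, contradicting $\|f(x_{n_k})-f(y_{n_k})\|_Y\geq \varepsilon$.

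For Bochner-measurability, I would use that compact metric spaces are totally bounded. For each $n\in\N$, choose a finite cover of $X$ by open balls $B(x_i^n,1/n)$ for $1\leq i\leq N_n$, and disjointify by setting
\[
A_1^n=B(x_1^n,1/n),\qquad A_i^n=B(x_i^n,1/n)\setminus \bigcup_{j<i}A_j^n,\qquad 2\leq i\leq N_n.
\]
Each $A_i^n$ is Borel, the sets are pairwise disjoint, and their union is $X$. Define the step function
\[
f_n=\sum_{i=1}^{N_n} f(x_i^n)\,\chi_{A_i^n}\colon X\to Y.
\]
By construction, for every $x\in X$ there is a unique $i$ with $x\in A_i^n$, and in that case $d_X(x,x_i^n)<1/n$; hence $\|f_n(x)-f(x)\|_Y=\|f(x_i^n)-f(x)\|_Y$.

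To conclude, fix $\varepsilon>0$; by the uniform continuity established above, there is $\delta>0$ with $\|f(x)-f(x')\|_Y<\varepsilon$ whenever $d_X(x,x')<\delta$. Then for $n>1/\delta$ and all $x\in X$, we have $\|f_n(x)-f(x)\|_Y<\varepsilon$. Thus $f_n\to f$ uniformly on $X$, which implies pointwise convergence everywhere, hence in particular $\mu$-almost everywhere. This gives the definition of Bochner-measurability directly. No step is genuinely difficult here; the only mild care required is to disjointify the cover so that the $f_n$ are honest step functions with Borel level sets, and to note that the continuous-image values $f(x_i^n)\in Y$ are legitimate constants in the target Banach space.
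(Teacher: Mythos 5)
Your proof is correct. The paper states this lemma without proof, as part of a background review of Bochner integration, so there is no argument to compare against; your write-up simply supplies the standard details. Both halves are sound: the sequential-compactness contradiction for uniform continuity, and the totally-bounded cover with disjointified Borel pieces yielding step functions $f_n\to f$ uniformly (hence pointwise everywhere, which is more than the $\mu$-a.e.\ convergence the paper's definition of Bochner-measurability requires).
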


A function $f$ is {\em Bochner-integrable} if the integrals $\sum_{i=1}^n\mu(A_i)\,y_i$ of the approximating sequence $f_n$ converge and do not depend on the choice of $f_n$.

\begin{lemma}
Let $X$ be a compact metric space, $\A$ its Borel sigma algebra, $\mu$ a finite measure on $\A$ and $Y$ a Banach space. Then every continuous function $f:X\to Y$ is additionally bounded and thus Bochner-integrable.
\end{lemma}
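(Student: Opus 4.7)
The plan is to prove boundedness first and then deduce Bochner-integrability using the preceding lemma together with the standard integrability criterion for Bochner-measurable functions.

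First I would establish boundedness by a purely topological argument. Since $X$ is compact and $f:X\to Y$ is continuous, the image $f(X)$ is a compact subset of the Banach space $Y$. Compact sets in any metric space are totally bounded, hence bounded: there exists $M>0$ such that $\|f(x)\|_Y\leq M$ for every $x\in X$. (Alternatively, $x\mapsto \|f(x)\|_Y$ is a continuous real-valued function on the compact set $X$, so it attains its maximum.) This takes care of the first claim in the statement.

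Next I would invoke the previous lemma, which already guarantees that $f$ is Bochner-measurable: there exists a sequence of simple functions $f_n=\sum_{i=1}^n y_{n,i}\,\chi_{A_{n,i}}$ with $f_n\to f$ pointwise $\mu$-almost everywhere. By the standard characterization of Bochner-integrability (see e.g.\ \cite[Chapter V.5]{yosida2012functional} or \cite[Kapitel 2.1]{ruzicka2006nichtlineare}), a Bochner-measurable function is Bochner-integrable precisely when $\int_X \|f(x)\|_Y\,\mu(\d x)<\infty$. Combining the uniform bound $\|f\|_Y\leq M$ with finiteness of $\mu$ yields
\[
\int_X \|f(x)\|_Y\,\mu(\d x) \;\leq\; M\,\mu(X) \;<\;\infty,
\]
so $f$ is Bochner-integrable.

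If one prefers a more hands-on argument that does not cite the integrability criterion as a black box, one can truncate the approximating sequence: replace $f_n$ by $\tilde f_n(x) = f_n(x)$ when $\|f_n(x)\|_Y\leq 2M$ and $\tilde f_n(x)=0$ otherwise. Then $\tilde f_n$ are simple functions converging pointwise a.e.\ to $f$, with $\|\tilde f_n\|_Y\leq 2M$. Their Bochner integrals $\sum_i \mu(A_{n,i})\,y_{n,i}$ form a Cauchy sequence in $Y$ (by dominated convergence applied to $\|\tilde f_n-\tilde f_m\|_Y\leq 4M$ together with pointwise convergence and finiteness of $\mu$), and the limit is independent of the choice of approximating sequence by the same estimate. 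I do not expect any real obstacle here: the result is essentially a direct corollary of compactness of $f(X)$ and the preceding Bochner-measurability lemma, together with finiteness of $\mu$.
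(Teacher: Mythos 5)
Your proof is correct: compactness of $f(X)$ gives the uniform bound, and combining Bochner-measurability (from the preceding lemma) with $\int_X \|f\|_Y\,\d\mu \leq M\,\mu(X)<\infty$ and the standard integrability criterion yields Bochner-integrability. The paper states this lemma without proof as part of its measure-theoretic review, and your argument is exactly the standard one it implicitly relies on (note that the criterion you cite is the lemma stated immediately afterwards in the paper, which is fine since it is independent textbook material).
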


Bochner-integrals are linked to Lebesgue-integrals in the following way.

\begin{lemma}
Let $f$ be a Bochner-measurable function. Then $f$ is Bochner-integrable if and only if $\|f\|:X\to\R$ is Lebesgue-integrable. Furthermore, 
\[
\left\|\int_X f(x)\,\mu(\d x)\right\|_Y \leq \int_X \|f(x)\|_Y\,\mu(\d x).
\]
\end{lemma}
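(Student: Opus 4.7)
The plan is to prove the two directions of the equivalence separately and then obtain the norm inequality by passing to the limit from step functions.

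For the forward direction, suppose $f$ is Bochner-integrable with an approximating sequence of step functions $f_n = \sum_{i=1}^{k_n} y_i^n \chi_{A_i^n}$ converging to $f$ pointwise $\mu$-a.e.\ and such that the integrals $I_n := \sum_i \mu(A_i^n)\,y_i^n$ converge in $Y$ independently of the approximating sequence. I would apply the norm to obtain simple real-valued functions $\|f_n\|_Y \to \|f\|_Y$ pointwise $\mu$-a.e. To get an integrable dominant, the standard trick is to exploit that Bochner-integrability already ensures $\int \|f_n - f_m\|_Y\,\d\mu \to 0$ (this is part of the usual equivalent definition), so $\{\|f_n\|_Y\}$ is Cauchy in $L^1(\mu)$ and hence converges in $L^1$ to the (a.e.) pointwise limit $\|f\|_Y$. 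Thus $\|f\|_Y \in L^1(\mu)$.

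For the reverse direction, assume $f$ is Bochner-measurable with $\|f\|_Y \in L^1(\mu)$, and let $f_n \to f$ pointwise a.e. through step functions. The key trick is to truncate: define
\[
g_n := f_n \cdot \chi_{\{\|f_n\|_Y \leq 2\,\|f\|_Y\}}.
\]
Then $g_n$ is still a step function, $g_n \to f$ pointwise a.e. (since eventually $\|f_n\|_Y \leq 2\|f\|_Y$ at points where $f_n \to f$), and $\|g_n\|_Y \leq 2\|f\|_Y$ is a Lebesgue-integrable dominant. Hence $\|g_n - g_m\|_Y \leq 4\|f\|_Y$ and by dominated convergence $\int_X \|g_n - g_m\|_Y \,\d\mu \to 0$. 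By the triangle inequality for step functions (which is straightforward from the definition as a finite sum), the integrals $\int g_n\,\d\mu$ form a Cauchy sequence in $Y$, which is complete, so they converge; a similar argument shows the limit is independent of the approximating sequence. Thus $f$ is Bochner-integrable.

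Finally, for the inequality, on a step function $h = \sum_{i=1}^k y_i\,\chi_{A_i}$ with the $A_i$ disjoint we have
\[
\left\|\int_X h\,\d\mu\right\|_Y = \left\|\sum_{i=1}^k \mu(A_i)\,y_i\right\|_Y \leq \sum_{i=1}^k \mu(A_i)\,\|y_i\|_Y = \int_X \|h\|_Y\,\d\mu,
\]
by the ordinary triangle inequality in $Y$. Applying this to the dominated sequence $g_n$ from the previous paragraph, continuity of the norm gives $\|\int g_n\,\d\mu\|_Y \to \|\int f\,\d\mu\|_Y$ on the left, while dominated convergence yields $\int \|g_n\|_Y\,\d\mu \to \int \|f\|_Y\,\d\mu$ on the right, producing the stated inequality.

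The main obstacle is the reverse direction: the raw approximating sequence $f_n$ need not be dominated by $\|f\|_Y$, so without the truncation trick one cannot directly invoke dominated convergence to force $\{\int f_n\,\d\mu\}$ to be Cauchy. Once the truncation step is in place the rest is bookkeeping.
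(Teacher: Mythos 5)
The paper states this lemma without proof: it appears in the appendix as part of a review of standard Bochner-integration theory, with the reader referred to the cited textbooks. So there is no in-paper argument to compare against. Your proof is the classical textbook argument and it is correct: the truncation $g_n = f_n\,\chi_{\{\|f_n\|_Y\le 2\|f\|_Y\}}$ is exactly the standard device for manufacturing a dominated approximating sequence in the reverse direction (and your verification that $g_n$ is still a step function converging a.e.\ to $f$ is right, including the degenerate case $f(x)=0$), and the norm inequality follows, as you say, from the triangle inequality on step functions together with dominated convergence on the right-hand side.

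One caveat on the forward direction. You invoke ``$\int_X\|f_n-f_m\|_Y\,\d\mu\to 0$ is part of the usual equivalent definition.'' That is true for the standard definition of Bochner integrability ($f_n\to f$ a.e.\ with $\int_X\|f_n-f\|_Y\,\d\mu\to 0$), but the paper's stated definition only requires that the vector integrals $\sum_i\mu(A_i)\,y_i$ converge in $Y$ and be independent of the approximating sequence; from that wording alone one cannot deduce $L^1$-Cauchyness of $\|f_n\|_Y$. (On $[0,1]$ with Lebesgue measure, $f_n=n\chi_{[0,1/n]}-n\chi_{[1/n,2/n]}$ converges to $0$ a.e.\ and has integral $0$ for every $n$, yet $\int|f_n|\equiv 2$.) Since the paper's definition is informal and its references use the standard one, this is a mismatch of conventions rather than a genuine gap; to argue from the paper's literal wording you would instead first produce, via your truncation, \emph{some} approximating sequence with $\int_X\|g_n-f\|_Y\,\d\mu\to 0$ and read off integrability of $\|f\|_Y$ from that. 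Everything also tacitly uses that $\mu$ is a finite non-negative measure, which is the setting of the appendix (the signed case is handled there by a separate remark).
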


If $\mu$ is a finite signed measure, these notions generalize in the obvious way.

\begin{definition}
Let $(\Omega,\A,\mu)$ be a measure space, $p\in [1,\infty]$ and $X$ a Banach space. Then the Bochner space $L^p(\Omega; X)$ is the space of all Bochner-measurable functions $f:\Omega\to X$ such that $\|f\| \in L^p(\Omega)$. 
\end{definition}

The following is proved in the unnumbered example following \cite[Lemma 1.23]{ruzicka2006nichtlineare}. The claim is formulated in the special case where $\Omega_1$ is an interval and $\Omega_2\subseteq\R^d$, but the proof holds more generally.

\begin{lemma}
Let $(\Omega_i,\A_i,\mu_i)$ be measure spaces for $i=1,2$. Then $f\in L^p(\mu_1\otimes \mu_2)$ if and only if the function
\[
F:\Omega_1 \to L^p(\Omega_2), \qquad \big[F(\omega_1)\big](\omega_2) = f(\omega_1,\omega_2)
\]
is well-defined and in $L^p(\Omega_1, L^p(\Omega_2))$.
\end{lemma}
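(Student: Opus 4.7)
The plan is to reduce to the case of simple functions via Fubini's theorem, then pass to the limit in both directions using density arguments. The governing identity is
\[
\int_{\Omega_1 \times \Omega_2} |f|^p \, d(\mu_1 \otimes \mu_2) = \int_{\Omega_1} \|F(\omega_1)\|_{L^p(\mu_2)}^p \, d\mu_1,
\]
which I would first establish for $f$ of the form $f = \sum_{i=1}^N c_i \chi_{A_i \times B_i}$ with $\mu_1(A_i), \mu_2(B_i) < \infty$. For such $f$, the associated map $F$ is literally a step function $\Omega_1 \to L^p(\mu_2)$ (the values $\omega_2 \mapsto \sum_i c_i \chi_{B_i}(\omega_2) \chi_{A_i}(\omega_1)$ depend only on which atoms of the partition generated by the $A_i$ contain $\omega_1$), so Bochner measurability and the norm identity are both transparent from the scalar Fubini theorem applied to $|f|^p$.

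For the forward direction, given $f \in L^p(\mu_1\otimes \mu_2)$, choose simple functions $f_n \to f$ in $L^p(\mu_1\otimes\mu_2)$ and pass to an a.e.-convergent subsequence. By the identity above, the corresponding step functions $F_n$ form a Cauchy sequence in $L^p(\Omega_1, L^p(\mu_2))$ and therefore converge to some Bochner-measurable $G$. Extracting one more subsequence so that $F_n(\omega_1) \to G(\omega_1)$ in $L^p(\mu_2)$ for $\mu_1$-a.e.\ $\omega_1$ and simultaneously $f_n(\omega_1, \omega_2) \to f(\omega_1,\omega_2)$ for $(\mu_1\otimes\mu_2)$-a.e.\ pair, Fubini implies that for $\mu_1$-a.e.\ $\omega_1$ one has $f_n(\omega_1, \cdot) \to f(\omega_1, \cdot)$ $\mu_2$-a.e., and comparing with the $L^p(\mu_2)$-convergence $F_n(\omega_1) \to G(\omega_1)$ identifies $G(\omega_1)$ with $F(\omega_1) = f(\omega_1, \cdot)$ for $\mu_1$-a.e.\ $\omega_1$. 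Hence $F$ agrees $\mu_1$-a.e.\ with a Bochner-measurable function and belongs to $L^p(\Omega_1, L^p(\mu_2))$ with the stated norm.

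For the reverse direction, suppose $F \in L^p(\Omega_1, L^p(\mu_2))$. By definition of Bochner measurability, approximate $F$ $\mu_1$-a.e.\ by step functions $F_n = \sum_{i=1}^{N_n} g_{n,i}\,\chi_{A_{n,i}}$ with $g_{n,i} \in L^p(\mu_2)$ and $A_{n,i}\in\A_1$. Representatives of the $g_{n,i}$ yield jointly measurable scalar functions $\tilde f_n(\omega_1,\omega_2) = \sum_i g_{n,i}(\omega_2)\chi_{A_{n,i}}(\omega_1)$ on $\Omega_1 \times \Omega_2$, for which the forward direction applies. By dominated convergence in the Bochner norm (using an $L^p(\mu_1)$ majorant for $\|F_n(\cdot)\|_{L^p(\mu_2)}$ obtained from $\|F(\cdot)\|_{L^p(\mu_2)}$), the $\tilde f_n$ form a Cauchy sequence in $L^p(\mu_1\otimes\mu_2)$ with the same norm; its limit $f$ satisfies $f(\omega_1,\cdot) = F(\omega_1)$ in $L^p(\mu_2)$ for $\mu_1$-a.e.\ $\omega_1$, concluding the equivalence.

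The main technical obstacle is the Bochner measurability of $F$ in the forward direction: pointwise a.e.\ convergence of $f_n(\omega_1,\cdot)$ to $f(\omega_1,\cdot)$ in $L^p(\mu_2)$ is a scalar statement and does not by itself produce a Bochner-measurable representative. The argument above circumvents this by first producing $F$ as the Bochner-$L^p$ limit of genuine step functions $F_n$ (which automatically have essentially separable range), and only afterwards identifying this limit with the pointwise $f(\omega_1,\cdot)$; measure-theoretic care is needed to align the various $\mu_1$-null and $(\mu_1\otimes\mu_2)$-null exceptional sets via further subsequence extraction.
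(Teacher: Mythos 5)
Your proof is correct. It is worth pointing out that the paper does not actually prove this lemma: it is quoted from the unnumbered example following Lemma 1.23 in R\r{u}\v{z}i\v{c}ka's \emph{Nichtlineare Funktionalanalysis} (stated there for $\Omega_1$ an interval and $\Omega_2\subseteq\R^d$), with only the remark that the proof carries over. Your argument is a complete, self-contained version of what that reference and most textbooks do: establish the isometry $\|f\|_{L^p(\mu_1\otimes\mu_2)}=\|F\|_{L^p(\Omega_1;L^p(\mu_2))}$ on rectangle-simple functions via Tonelli, then transfer it in both directions by density and completeness. The two places where care is genuinely needed are exactly the ones you flag: (i) in the forward direction, Bochner measurability of $\omega_1\mapsto f(\omega_1,\cdot)$ is not a slice-wise statement, and your device of first producing the limit $G$ of the step functions $F_n$ inside the complete space $L^p(\Omega_1;L^p(\mu_2))$ and only afterwards identifying $G(\omega_1)$ with $f(\omega_1,\cdot)$ by a double subsequence extraction is the standard correct way around this; (ii) in the reverse direction, the dominated-convergence step silently uses that the approximating step functions can be chosen with, say, $\|F_n(\omega_1)\|_{L^p(\mu_2)}\leq 2\,\|F(\omega_1)\|_{L^p(\mu_2)}$ for all $\omega_1$ --- this is a standard refinement of the definition of strong measurability (it appears in the proof of the Pettis measurability theorem), but with an arbitrary a.e.-approximating sequence of step functions the majorant need not exist, so this lemma should be stated or cited explicitly. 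Finally, two hypotheses are implicit in your argument and in the statement: $p<\infty$ (density of simple functions and the norm identity both fail for $p=\infty$) and $\sigma$-finiteness of $\mu_1,\mu_2$ (needed for Tonelli and for the density of rectangle-simple functions in $L^p(\mu_1\otimes\mu_2)$); both hold in the paper's application, where the $\mu_i$ are probability measures and $p=2$.
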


Furthermore, we recall the following immediate result, which we will apply in conjunction with the previous Lemma in the special case that $H = L^2(0,1)$. 

\begin{lemma}
If $H$ is a Hilbert space, so is $L^2(\Omega; H)$ with the inner production
\[
\langle f, g\rangle_{L^2(H)} = \int_\Omega \langle f(\omega), g(\omega)\rangle\,\mu(\d \omega).
\]
\end{lemma}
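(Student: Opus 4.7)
The plan is to verify the three ingredients in turn: (i) that the formula defines a genuine inner product, (ii) that the induced norm coincides with the already-defined Bochner norm $\|\cdot\|_{L^2(\Omega;H)}$, and (iii) that completeness, which is automatic for $L^p$ Bochner spaces over a Banach target but we may as well sketch, transfers to the Hilbert-valued setting.

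First I would check well-posedness of $\langle f,g\rangle_{L^2(H)}$. For $f,g\in L^2(\Omega;H)$, the map $\omega\mapsto \langle f(\omega),g(\omega)\rangle_H$ is scalar measurable because it is the pointwise limit of $\langle f_n(\omega),g_n(\omega)\rangle_H$ for approximating sequences of simple functions $f_n,g_n$ (and the inner product on $H$ is continuous). Cauchy--Schwarz in $H$ gives
\[
|\langle f(\omega),g(\omega)\rangle_H|\le \|f(\omega)\|_H\,\|g(\omega)\|_H,
\]
and another Cauchy--Schwarz in $L^2(\Omega)$ shows this is $\mu$-integrable with
\[
\bigl|\langle f,g\rangle_{L^2(H)}\bigr|\le \|f\|_{L^2(\Omega;H)}\,\|g\|_{L^2(\Omega;H)}.
\]
Bilinearity and symmetry are inherited pointwise from $H$, and positive definiteness follows because $\langle f,f\rangle_{L^2(H)}=\int_\Omega \|f(\omega)\|_H^2\,\mu(\d\omega)=\|f\|_{L^2(\Omega;H)}^2$, which vanishes precisely when $f=0$ in $L^2(\Omega;H)$ (i.e.\ $\mu$-a.e.).

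The identity displayed above also settles point (ii): the inner-product norm agrees with the Bochner norm. Consequently, $L^2(\Omega;H)$ equipped with $\langle\cdot,\cdot\rangle_{L^2(H)}$ is an inner product space whose norm topology is the Bochner topology.

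For completeness, I would invoke the fact that Bochner spaces $L^p(\Omega;Y)$ are Banach spaces for any Banach space $Y$, which is standard (see \cite[Chapter V.5]{yosida2012functional} or \cite[Kapitel 2.1]{ruzicka2006nichtlineare}); applied with $Y=H$ this gives completeness of $L^2(\Omega;H)$ in the norm above. If one prefers a self-contained argument, the main step would be: given a Cauchy sequence $(f_n)$, pass to a subsequence with $\|f_{n_{k+1}}-f_{n_k}\|_{L^2(\Omega;H)}\le 2^{-k}$, define $g(\omega):=\sum_k\|f_{n_{k+1}}(\omega)-f_{n_k}(\omega)\|_H$, apply monotone convergence and the triangle inequality in $L^2(\Omega)$ to see $g\in L^2(\Omega)$, hence $g(\omega)<\infty$ for $\mu$-a.e.\ $\omega$; then $f_{n_k}(\omega)$ is Cauchy in $H$ a.e.\ and, by completeness of $H$, converges pointwise to some $f(\omega)$, which is Bochner measurable as the a.e.\ limit of Bochner measurable functions. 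Dominated convergence with dominant $(\|f_{n_1}\|_H+g)^2$ then yields $f_n\to f$ in $L^2(\Omega;H)$.

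The only mildly delicate step is the measurability/integrability of $\omega\mapsto\langle f(\omega),g(\omega)\rangle_H$; everything else is a direct transfer of the scalar $L^2$ argument via the norm identity $\|f\|^2_{L^2(H)}=\int\|f(\omega)\|_H^2\,\mu(\d\omega)$. No new estimates beyond Cauchy--Schwarz (twice) are needed.
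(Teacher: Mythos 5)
Your proof is correct: the measurability of $\omega\mapsto\langle f(\omega),g(\omega)\rangle_H$ via simple-function approximation, the double application of Cauchy--Schwarz, the identification of the induced norm with the Bochner norm, and the Riesz--Fischer completeness argument are all sound. The paper states this lemma without proof as an immediate consequence of the standard theory of Bochner spaces (citing \cite{yosida2012functional, ruzicka2006nichtlineare}), and your argument is precisely the standard verification it implicitly relies on, so there is nothing to compare beyond noting that you have filled in the routine details correctly.
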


\section{On the existence and uniqueness of the gradient flow}\label{appendix gradient flow}

For networks with smooth activation functions, the preceding analysis can be justified rigorously. We briefly discuss some obstacles in the case of ReLU activation.

\begin{example}
Generically, solutions of gradient flow training for ReLU-activation are non-unique, even for functions with one hidden layer. We consider a network with one hidden layer, one neuron, and a risk functional with one data point:
\[
 f_{a,b}(x) = a\,\sigma(b^1x-b^2), \qquad \Risk(a,b) = \big|f_{a,b}(1) - 1\big|^2 = \big|a(b^1-b^2)_+ - 1\big|^2.
\]
If $a,b$ is initialized as $a_0 = 1$, $b_0= (1,1)$, then one solution of the gradient flow inclusion is constant in time. This solution is obtained as the limit of gradient flow training for regularized activation functions $\sigma^\eps$ satisfying $(\sigma^\eps)'(0) = 0$. Another solution is the solution $(a,b)$ of ReLU training is
\[
\begin{pmatrix}\dot a_t\\ \dot b^1_t\\ \dot b^2_t\end{pmatrix}
	= - \nabla_{a,b}\big|a(b^1-b^2) - 1\big|^2 = - 2\big(a(b^1-b^2) - 1\big)\begin{pmatrix} b^1-b^2\\ a\\ -a\end{pmatrix},
\]
for which the risk decays to zero. This is obtained as the limit of approximating gradient flows associated to $\sigma^\eps$ with $(\sigma^\eps)'(0)=1$.
\end{example}
  
As the training dynamics are non-unique, the Picard-Lindel\"off theorem cannot apply. In \cite[Lemma 3.1]{relutraining}, we showed that the situation can be remedied by considering gradient flows of population risk for suitably regular data distributions $\P$. A key ingredient of the proof is that for fixed $w$, $\sigma'(w^Tx)$ is well-defined except on a hyper-plane in $\R^d$, which we assume to be $\P$-null sets. An existence proof based on the Peano existence theorem is also presented in a specific context in \cite{Chizat:2020aa}.

This argument cannot be extended to networks with multiple hidden layers since terms of the form $\sigma'(f(x))$ occur where $f$ can be a general Barron function (or even more general for deep networks). The level sets of Barron functions may be highly irregular and even for $C^1$-smooth Barron functions, Sard's theorem need not apply \cite[Remark 3.2]{barron_new}. In particular, for any data distribution $\P$, we can find a non-constant Barron function $f$ such that $\P(\{f=0\})>0$. It thus appears inevitable to consider a class of weak solutions based on energy dissipation properties or differential inclusions. We note that the proofs in this article are based on purely formal identities and the energy dissipation property. We thus expect the results to remain valid for suitable generalized solutions.
\end{document}